\def\1{\boldsymbol{1}}
\def\0{\boldsymbol{0}}
\def\sign{\textrm{sign}}
\def\real{\mathbf{R}}
\newtheorem{theorem}{Theorem}[section]
\newtheorem{proposition}[theorem]{Proposition}
\def\QED{~\rule[-1pt]{5pt}{5pt}\par\medskip}
\newenvironment{proof}{\emph{Proof.}}{\hfill\QED}
\def\ie{\emph{i.e.}}
\def\eg{\emph{e.g.}}
\def\etal{\emph{et al.}}
\begin{document}
%
% paper title
% can use linebreaks \\ within to get better formatting as desired
%\title{Towards Clearing the Fog in Blind Deconvolution}
%\title{Towards Cracking Blind Deconvolution}
%\title{Disentangling Blind Deconvolution}
%\title{Unfolding Blind Deconvolution}
%\title{Getting to the Bottom of Blind Deconvolution}
%\title{Unraveling the Mechanisms of Blind Deconvolution}
%\title{Shedding Light on Blind Deconvolution}
\title{A Clearer Picture of Blind Deconvolution}
%\title{Towards a Clear Picture of Blind Deconvolution}

%
%
% author names and IEEE memberships
% note positions of commas and nonbreaking spaces ( ~ ) LaTeX will not break
% a structure at a ~ so this keeps an author's name from being broken across
% two lines.
% use \thanks{} to gain access to the first footnote area
% a separate \thanks must be used for each paragraph as LaTeX2e's \thanks
% was not built to handle multiple paragraphs
%
%
%\IEEEcompsocitemizethanks is a special \thanks that produces the bulleted
% lists the Computer Society journals use for "first footnote" author
% affiliations. Use \IEEEcompsocthanksitem which works much like \item
% for each affiliation group. When not in compsoc mode,
% \IEEEcompsocitemizethanks becomes like \thanks and
% \IEEEcompsocthanksitem becomes a line break with idention. This
% facilitates dual compilation, although admittedly the differences in the
% desired content of \author between the different types of papers makes a
% one-size-fits-all approach a daunting prospect. For instance, compsoc 
% journal papers have the author affiliations above the "Manuscript
% received ..."  text while in non-compsoc journals this is reversed. Sigh.

\author{Daniele Perrone,
        Paolo Favaro,~\IEEEmembership{Member,~IEEE}% <-this % stops a space
\IEEEcompsocitemizethanks{\IEEEcompsocthanksitem D. Perrone and P. Favaro are with the Institute of Computer Science and Applied Mathematics, University of Bern, Nebr\"{u}ckstrasse 10, 3012 Bern, Switzerland.\protect\\
% note need leading \protect in front of \\ to get a newline within \thanks as
% \\ is fragile and will error, could use \hfil\break instead.
E-mail: perrone@iam.unibe.ch; paolo.favaro@iam.unibe.ch
%\IEEEcompsocthanksitem P. Favaro is with the Institute of Computer Science and Applied Mathematics, University of Bern, Nebr\"{u}ckstrasse 10, 3012 Bern, Switzerland.\protect\\
% note need leading \protect in front of \\ to get a newline within \thanks as
% \\ is fragile and will error, could use \hfil\break instead.
%E-mail: paolo.favaro@iam.unibe.ch}% <-this % stops a space
}
\thanks{}}

% note the % following the last \IEEEmembership and also \thanks - 
% these prevent an unwanted space from occurring between the last author name
% and the end of the author line. i.e., if you had this:
% 
% \author{....lastname \thanks{...} \thanks{...} }
%                     ^------------^------------^----Do not want these spaces!
%
% a space would be appended to the last name and could cause every name on that
% line to be shifted left slightly. This is one of those "LaTeX things". For
% instance, "\textbf{A} \textbf{B}" will typeset as "A B" not "AB". To get
% "AB" then you have to do: "\textbf{A}\textbf{B}"
% \thanks is no different in this regard, so shield the last } of each \thanks
% that ends a line with a % and do not let a space in before the next \thanks.
% Spaces after \IEEEmembership other than the last one are OK (and needed) as
% you are supposed to have spaces between the names. For what it is worth,
% this is a minor point as most people would not even notice if the said evil
% space somehow managed to creep in.

% The paper headers
\markboth{ Submitted to IEEE TRANSACTION ON PATTERN ANALYSIS AND MACHINE INTELLIGENCE}%
{Perrone and Favaro: A Clearer Picture of Blind Deconvolution}
% The only time the second header will appear is for the odd numbered pages
% after the title page when using the twoside option.
% 
% *** Note that you probably will NOT want to include the author's ***
% *** name in the headers of peer review papers.                   ***
% You can use \ifCLASSOPTIONpeerreview for conditional compilation here if
% you desire.

% The publisher's ID mark at the bottom of the page is less important with
% Computer Society journal papers as those publications place the marks
% outside of the main text columns and, therefore, unlike regular IEEE
% journals, the available text space is not reduced by their presence.
% If you want to put a publisher's ID mark on the page you can do it like
% this:
%\IEEEpubid{0000--0000/00\$00.00~\copyright~2007 IEEE}
% or like this to get the Computer Society new two part style.
%\IEEEpubid{\makebox[\columnwidth]{\hfill 0000--0000/00/\$00.00~\copyright~2007 IEEE}%
%\hspace{\columnsep}\makebox[\columnwidth]{Published by the IEEE Computer Society\hfill}}
% Remember, if you use this you must call \IEEEpubidadjcol in the second
% column for its text to clear the IEEEpubid mark (Computer Society jorunal
% papers don't need this extra clearance.)

% for Computer Society papers, we must declare the abstract and index terms
% PRIOR to the title within the \IEEEcompsoctitleabstractindextext IEEEtran
% command as these need to go into the title area created by \maketitle.
\IEEEcompsoctitleabstractindextext{%
\begin{abstract}
Blind deconvolution is the problem of recovering a sharp image and a blur kernel from a noisy blurry image. Recently, there has been a significant effort on understanding the basic mechanisms to solve blind deconvolution. While this effort resulted in the deployment of effective algorithms, the theoretical findings generated contrasting views on why these approaches worked. On the one hand, one could observe experimentally that alternating energy minimization algorithms converge to the desired solution. On the other hand, it has been shown that such alternating minimization algorithms should fail to converge and one should instead use a so-called Variational Bayes approach.
To clarify this conundrum, recent work showed that a good image and blur prior is instead what makes a blind deconvolution algorithm work. Unfortunately, this analysis did not apply to algorithms based on total variation regularization. In this manuscript, we provide both analysis and experiments to get a clearer picture of blind deconvolution. Our analysis reveals the very reason why an algorithm based on total variation works. We also introduce an implementation of this algorithm and show that, in spite of its extreme simplicity, it is very robust and achieves a performance comparable to the state of the art. 
\end{abstract}
% IEEEtran.cls defaults to using nonbold math in the Abstract.
% This preserves the distinction between vectors and scalars. However,
% if the journal you are submitting to favors bold math in the abstract,
% then you can use LaTeX's standard command \boldmath at the very start
% of the abstract to achieve this. Many IEEE journals frown on math
% in the abstract anyway. In particular, the Computer Society does
% not want either math or citations to appear in the abstract.

% Note that keywords are not normally used for peer review papers.
\begin{keywords}
Deblurring, blind deconvolution, total variation.
\end{keywords}}

% make the title area
\maketitle

% To allow for easy dual compilation without having to reenter the
% abstract/keywords data, the \IEEEcompsoctitleabstractindextext text will
% not be used in maketitle, but will appear (i.e., to be "transported")
% here as \IEEEdisplaynotcompsoctitleabstractindextext when compsoc mode
% is not selected <OR> if conference mode is selected - because compsoc
% conference papers position the abstract like regular (non-compsoc)
% papers do!
\IEEEdisplaynotcompsoctitleabstractindextext
% \IEEEdisplaynotcompsoctitleabstractindextext has no effect when using
% compsoc under a non-conference mode.

% For peer review papers, you can put extra information on the cover
% page as needed:
% \ifCLASSOPTIONpeerreview
% \begin{center} \bfseries EDICS Category: 3-BBND \end{center}
% \fi
%
% For peerreview papers, this IEEEtran command inserts a page break and
% creates the second title. It will be ignored for other modes.
\IEEEpeerreviewmaketitle

%%TODO
% Extended Prior Work
%  

%%%%%%%%% BODY TEXT
Blind deconvolution is the problem of recovering a signal and a degradation kernel from their noisy convolution. This problem is found in diverse fields such as astronomical imaging, medical imaging, (audio) signal processing, and image processing. %Blind deconvolution witnessed a sudden raise of interest in the nineties when the Hubble telescope was just launched and was found to capture blurry pictures.\footnote{\url{http://hubblesite.org/the_telescope/hubble_essentials/}} 
%This spurred work in imaging with the purpose of both understanding the optical degradations of the telescope and how to correct for them (see \emph{e.g.}, \cite{Fienup:93}). 
More recently, blind deconvolution has received renewed attention due to the emerging need for removing motion blur in images captured by mobile phones \cite{Fergus2006}. 
Yet, despite over three decades of research in the field (see \cite{KundurH96} and references therein), the design and analysis of a principled, stable and robust algorithm that can handle real images remains a challenge. However, present-day progress has shown that recent models for sharp images and blur kernels can yield remarkable results \cite{Fergus2006,Shan2008,Cho2009,Xu2010,Levin2011}.

Many of these recent approaches are evolutions of the variational formulation \cite{You1996}. A common element in these methods is the explicit use of priors for both blur and sharp image to encourage smoothness in the solution.
Among these recent methods, \emph{total variation} emerged as one of the most popular priors  \cite{Chan1998,You1996Anisotropic}. Such popularity is probably due to its ability to encourage gradient sparsity, a property that can describe many signals of interest well \cite{HuangM99}. 

However, recent work by Levin et al.~\cite{Levin2011Understanding} has shown that the joint optimization of both image and blur kernel can have the no-blur solution as its global minimum. That is to say, blind deconvolution algorithms that use a total variation prior either are local minimizers and, hence, require a lucky initial guess, or they cannot depart too much from the no-blur solution. Nonetheless, algorithms based on the joint optimization of blur and sharp image show good convergence behavior even when initialized with the no-blur solution \cite{Chan1998,Shan2008}.

This incongruence called for an in-depth analysis of total variation blind deconvolution (TVBD). As we show in the next sections, the answer is not as straightforward as one might have hoped. Firstly, we confirm both experimentally and analytically the analysis of Levin et al.~\cite{Levin2011Understanding}. Secondly, we also find that the algorithm of Chan and Wong~\cite{Chan1998} converges to the desired solution, even when starting at the no-blur solution. The answer to this puzzle lies in the specific implementation of \cite{Chan1998}, as it does not minimize the originally-defined energy. This algorithm, as commonly done in many other algorithms, separates some constraints from the gradient descent step and then applies them sequentially. When the cost functional is convex this alteration may not have a major impact. However, in blind deconvolution, where the cost functional is not convex, this completely changes the convergence behavior. Indeed, we show that if one imposed all the constraints simultaneously  then the algorithm would never leave the no-blur solution independently of the regularization amount. %Instead, the sequential application of the constraints introduces a delayed scaling that results in a strong sharpening effect.

To further demonstrate our findings, we implement a TVBD algorithm without the use of all recent improvements, such as filtering \cite{Shan2008,Cho2009,Levin2011Understanding}, blur kernel prior \cite{Chan1998,You1996Anisotropic} or edge enhancement \cite{Cho2009,Xu2010}, and show that applying sequentially the constraints on the blur kernel is sufficient to avoid the no-blur solution. We also show that the use of a filtered version of the original signal may be undesirable and that the use of exact boundary conditions can improve the results. Finally, we apply the algorithm on currently available datasets, compare it to the state of the art methods and show that, notwithstanding its simplicity, it achieves a comparable performance to the top performers.

\section{Blur Model and Priors}
Suppose that a blurry image $f$ can be modeled by
\begin{align}
f = k_0 \ast u_0+n
\end{align}
where $k_0$ is a blur kernel, $u_0$ a sharp image, $n$ noise and $k_0\ast u_0$ denotes convolution between $k_0$ and $u_0$. Given only the blurry image, one might want to recover both the sharp image and the blur kernel. This task is called \emph{blind deconvolution}. A classic approach to this problem is to maximize the posterior distribution (MAP$_{u,k}$)
\begin{equation}\label{eq:map}
\arg\max_{u,k} p(u,k|f) = \arg\max_{u,k} p(f|u,k)p(u)p(k). 
\end{equation}
where $p(f|u,k)$ models the noise affecting the blurry image, $p(u)$ models the distribution of typical sharp images, and $p(k)$ is the prior knowledge about the blur function. Typical choices for $p(f|u,k)$ are the Gaussian distribution \cite{Fergus2006,Levin2011} or the exponential distribution \cite{Xu2010}. In the following discussion  we will assume that $p(f|u,k)$  is modeled by a Gaussian distribution.

Through some standard transformations and assumptions, problem~\eqref{eq:map} can be written also as a regularized minimization
\begin{equation}\label{eq:gendeb}
\arg\min_{u,k} \|k \ast u - f\|_2^2 + \lambda J(u) + \gamma G(k)
\end{equation}
where the first term corresponds to $p(f|u,k)$, the functionals $J(u)$ and $G(k)$ are the smoothness priors for $u$ and $k$ (for example, Tikhonov regularizers \cite{Tikhonov1977} on the gradients), and $\lambda$ and $\gamma$ two nonnegative parameters that weigh their contribution. 
Furthermore, additional constraints on $k$, such as positivity of its entries and integration to $1$, can be included.
For any $\lambda>0$ and $\gamma>0$ the cost functional will not have as global solution neither the true solution $(k_0,~u_0)$ nor the \emph{no-blur solution} $(k = \delta,~u = f)$, where $\delta$ denotes the Dirac delta. Indeed, eq.~\eqref{eq:gendeb} will find an optimal tradeoff between the data fitting term and the regularization term. Nonetheless, one important aspect that we will discuss later on is that both the true solution $(k_0,~u_0)$ and the \emph{no-blur solution} make the data fitting term in eq.~\eqref{eq:gendeb} equal to zero. Hence, we can compare their cost in the functional simply by evaluating the regularization terms. Notice also that the minimization objective in eq.~\eqref{eq:gendeb} is non-convex, and, as shown in Fig.~\ref{fig:1dtoy}, has several local minima.

\begin{figure*}[t]
\centering
\begin{minipage}[c]{.42\textwidth}
\centering
\adjincludegraphics[width=\textwidth,trim={0 1.6cm 0  {.5\height}},clip]{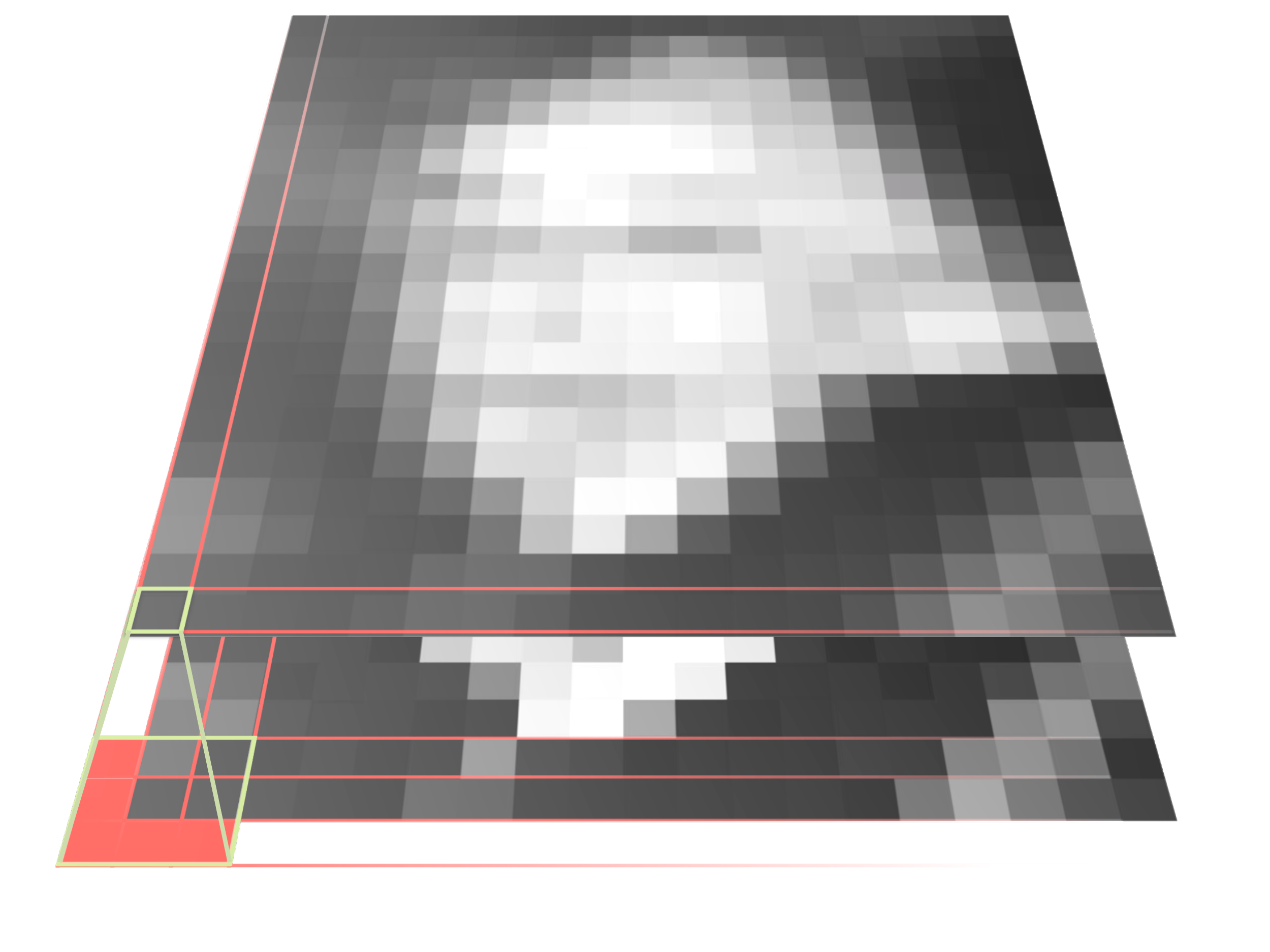}
\footnotesize{(a)} 
\end{minipage}
\begin{minipage}[c]{.42\textwidth}
\centering
\adjincludegraphics[width=\textwidth,trim={0 0.8cm 0  {0.555\height}},clip]{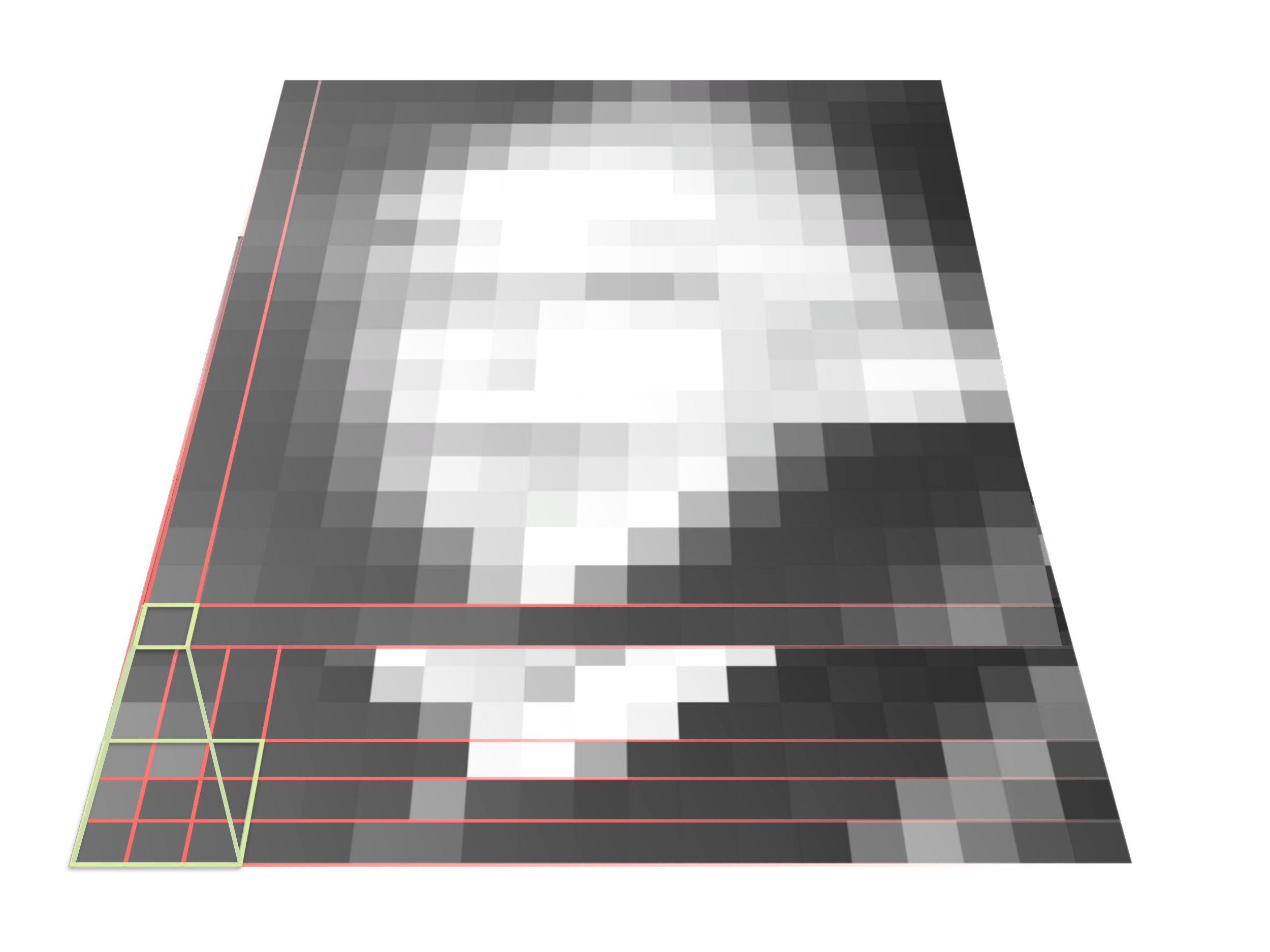}
\footnotesize{(b)}
\end{minipage}
\medskip
\caption{Illustration of  the convolution operators~\eqref{eq:conv} and~\eqref{eq:vconv} assuming a blur support of $3 \times 3$. a) With the use of~\eqref{eq:conv} we assume that the blurry image $f$ and the sharp image $u$ have the same support, therefore we must choose how the values at the boundaries of $u$ are defined (red pixels); b) With~\eqref{eq:vconv} we assume that $f$ has a smaller support than $u$, therefore the pixels of $f$ are completely defined by the pixels of $u$.\label{fig:conv} }
\end{figure*}

\section{Prior work}
To solve problem~\eqref{eq:gendeb}  one has to carefully choose the functions $J(u)$ and $G(k)$. A common choice is the $L_p$ norm of $u$ and $k$ or of some filtered versions of $u$ and $k$.

$G(k)$ has been defined as the $L_2$ norm of $k$ \cite{You1996,Xu2010,Cho2009}, a sparsity-inducing norm ($p \le 1$)~\cite{Fergus2006,Shan2008} or a constant~\cite{Levin2011}. Nonetheless, its contribution to the estimation of the desired solution is so far marginal. In fact, some methods successfully solve problem~\eqref{eq:gendeb} by setting $G(k) = \mbox{const}$. 
Yet, it has been shown that its use may help avoid undesired local minima~\cite{Rameshan2012}.

The regularization term for the sharp image $J(u)$ instead has a stronger impact on the performance of the algorithm, since it helps choose a sharp image over a blurry one. 
You and Kaveh~\cite{You1996} have proposed to use  the $L_2$ norm of the derivatives of $u$. Unfortunately, this norm is not able to model the sparse nature of common image gradients and results in sharp images that are either oversmoothed or have ringing artifacts. Yet, the $L_2$ norm has the desirable property of being efficient to minimize. Cho and Lee~\cite{Cho2009} and Xu and Jia~\cite{Xu2010} have reduced the generation of artifacts while still retaining its computational efficiency by using heuristics to select sharp edges. 

An alternative to the $L_2$ norm is the use of \emph{total variation} (TV)  \cite{Chan1998,You1996Anisotropic,Chan2000,He2005}. TV regularization was firstly introduced for image denoising in the seminal work of Rudin, Osher and Fatemi~\cite{Rudin1992}, and since then it has been applied successfully in many image processing applications. Total variation is typically defined via two different formulations. Its anisotropic version is the sum of the $L_1$ norms of the components of the gradient of $u$, while its isotropic version is the $L_2$ norm of the gradient of $u$. 

You and Kaveh~\cite{You1996Anisotropic} and Chan and Wong~\cite{Chan1998} have proposed the use of TV regularization in blind deconvolution on both $u$ and $k$. They also consider the following additional convex constraints to enhance the convergence of their algorithms
\begin{equation}\label{eq:constr}
\|k\|_1 \doteq \int |k(\mathbf{x})|d\mathbf{x} = 1,\quad
k(\mathbf{x}) \geq 0,\;u(\mathbf{x}) \geq 0
\end{equation} 
where with $\mathbf{x}$ we denote either 1D or 2D coordinates. He \etal~\cite{He2005} have incorporated the above constraints in a variational model, claiming that this enhances the stability of the algorithm. 
A different approach is a strategy proposed by Wang \etal~\cite{Wang2010} that seeks for the desired local minimum by using downsampled reconstructed images as priors during the optimization in a multi-scale framework. 

TV regularization has been widely popularized because it models natural image gradients well~\cite{HuangM99}. Wipf and Zhang~\cite{Wipf2013} have recently argued that $J(u)$ should not merely try to model statistics of sharp images, but, rather, it should have a strong discriminative power in separating sharp from blurry images. This is ideally achieved by using the $L_0$ pseudo-norm on the image gradients. Unfortunately its exact minimization requires solving an NP-hard problem. To make the problem tractable some methods have proposed to use approximations of the $L_0$ norm~\cite{Xu2013,Krishnan2011}. Yet, the proposed approximations are non-convex functions that require careful minimization strategies to avoid local minima. 

The algorithms presented so far are all successful implementation of the MAP$_{u,k}$ formulation in~\eqref{eq:map}, nonetheless Levin~\etal~\cite{Levin2011Understanding} have shown that using an $L_p$ norm of the image gradients with any $p > 0$ and a uniform distribution for the blur, the MAP$_{u,k}$ approach favors the no-blur solution $(u = f, \; k = \delta)$, for images blurred with a large enough blur.  They also argue that the success of existing MAP$_{u,k}$ methods is due to various heuristics or reweighing strategies employed during the optimization of~\eqref{eq:gendeb}.

Because of the above concerns, Levin \etal~\cite{Levin2011Understanding} look at a different strategy that marginalizes over all possible sharp images $u$. They solve the following MAP$_k$ problem
\begin{equation}\label{eq:mar}
\arg\max_{k} p(k|f) = \arg\max_{k} \int p(f|u,k)p(u)p(k) du,
\end{equation}
where the sharp image $u$ is estimated by solving a convex problem and where $k$ is given from the previous step. They have shown that, for sufficiently large images, the MAP$_k$ approach converges to the true solution.  Since the right hand side of problem~\eqref{eq:mar} is difficult to compute, it is common to use a Variational Bayesian approach (VB) where one aims at finding an approximation of the probability $p(k|f)$~\cite{Miskin2000,Fergus2006,Babacan2012,Wipf2013,Levin2011}. 

In recent work, Wipf and Zhang~\cite{Wipf2013} have shed new light on the MAP$_{u,k}$  vs MAP$_k$ dispute. 
They have shown that the VB formulation commonly used to solve the MAP$_k$ is equivalent to a non-conventional MAP$_{u,k}$ approach, and that $L_p$ priors with $p \le 0.5$ are able to favor sharp images. They also argue that a VB approach is still preferable because it is more able to avoid local minima compared to a classical MAP$_{u,k}$ approach. 

The work of Wipf and Zhang~\cite{Wipf2013} has focused on $L_p$ priors with $p< 1$ and given novel insights on the mechanism that make them work. 
Our work complements the results of Wipf and Zhang~\cite{Wipf2013} and focuses on the total variation ($p = 1$) prior. We confirm the theoretical results of Levin~\etal~\cite{Levin2011Understanding} and show that an apparently harmless delayed normalization induces a scaling of the signal that ultimately results in the success of total variation based algorithms. This shows that filtered version of the images \cite{Shan2008,Cho2009,Levin2011Understanding}, blur kernel prior \cite{Chan1998,You1996Anisotropic}, edge enhancement \cite{Cho2009,Xu2010} or any other additional strategy are not necessary for solving blind deconvolution.

\section{Problem Formulation}
By combining problem~\eqref{eq:gendeb} with the constraints in eq.~\eqref{eq:constr} and by setting $\gamma=0$, we study the following minimization
\begin{equation}\label{eq:glob}
\begin{array}{ll}
\min_{u,k} & ||k \ast u - f||_2^2 + \lambda J(u)\\
\mbox{subject to } & k \succcurlyeq 0,\quad \|k\|_1 = 1
\end{array}
\end{equation}
where $J(u) = ||u||_{BV} \doteq \int ||\nabla u(\mathbf{x})||_2 d\mathbf{x}$ or $J(u) = ||u_x||_1 + ||u_y||_1$, with $\nabla u \doteq [u_x~u_y]^T$ the gradient of $u$ and $\mathbf{x}\doteq[x~y]^T$, and $\|k\|_1$ corresponds to the $L_1$ norm in eq.~\eqref{eq:constr}. To keep the analysis simple we have stripped the formulation of all unnecessary improvements such as using a basis of filters in $J(u)$~\cite{Shan2008,Cho2009,Levin2011Understanding}, or performing some selection of regions by reweighing the data fitting term~\cite{Xu2010}, or enhancing the edges of the blurry image $f$~\cite{Xu2010,Cho2009}. Compared to previous methods, we also do not use any regularization on the blur kernel ($\gamma=0$).

The formulation in eq.~\eqref{eq:glob} involves the minimization of a constrained non-convex optimization problem. Also, notice that if $(u,k)$ is a solution, then $(u(\mathbf{x} + \mathbf{c}),k(\mathbf{x} + \mathbf{c}))$ are solutions as well for any $\mathbf{c}\in\real^2$. If the additional constraints on $k$ were not used, then the ambiguities would also include $(\alpha_1 u,\frac{1}{\alpha_1}k)$ for non zero $\alpha_1$.

\begin{figure*}[t]
\centering
\begin{minipage}[c]{.28\textwidth}
\centering
\includegraphics[width=\textwidth]{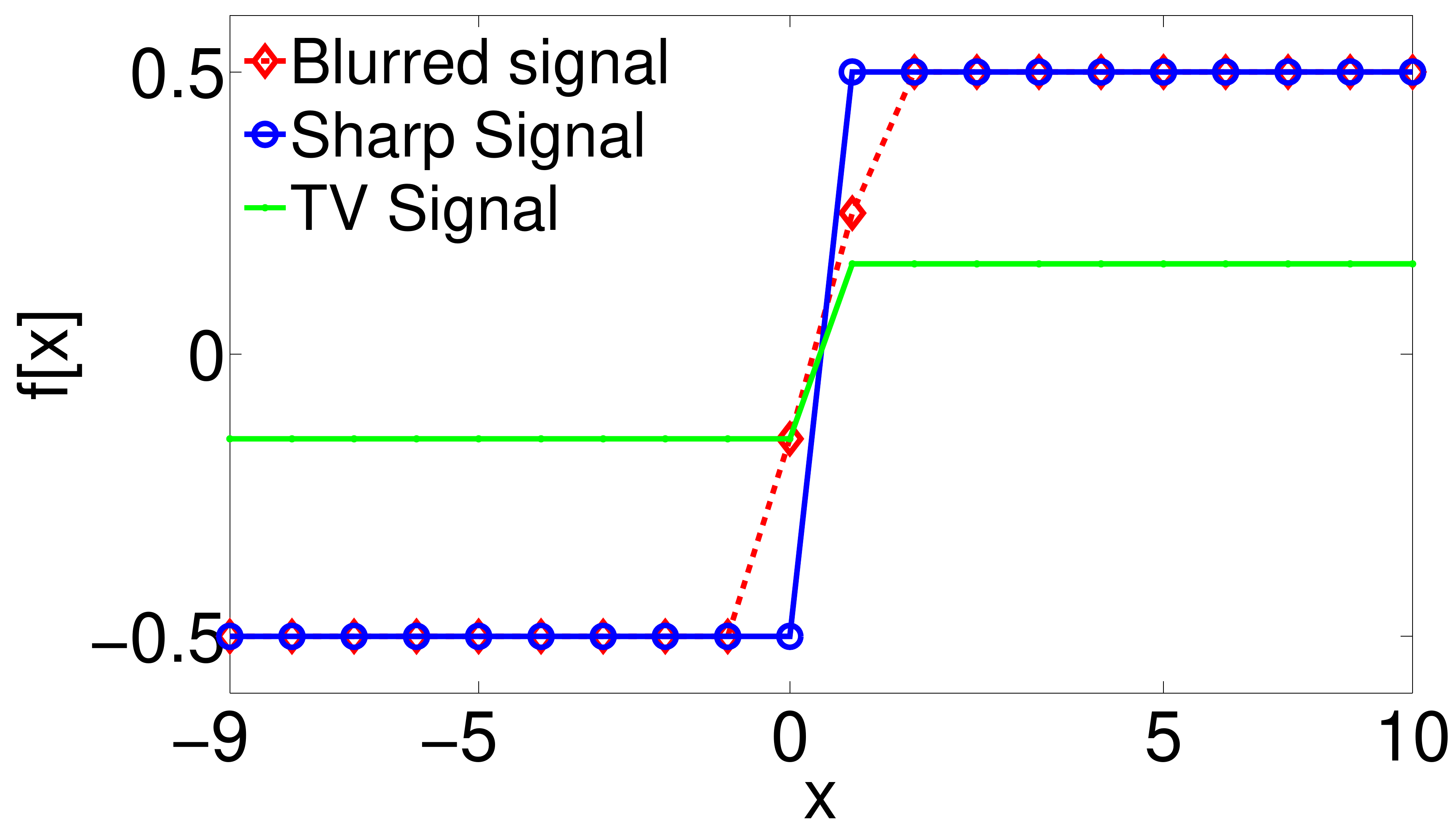} \\
\footnotesize{(a)}
\end{minipage}
\begin{minipage}[c]{.28\textwidth}
\centering
\includegraphics[width=\textwidth]{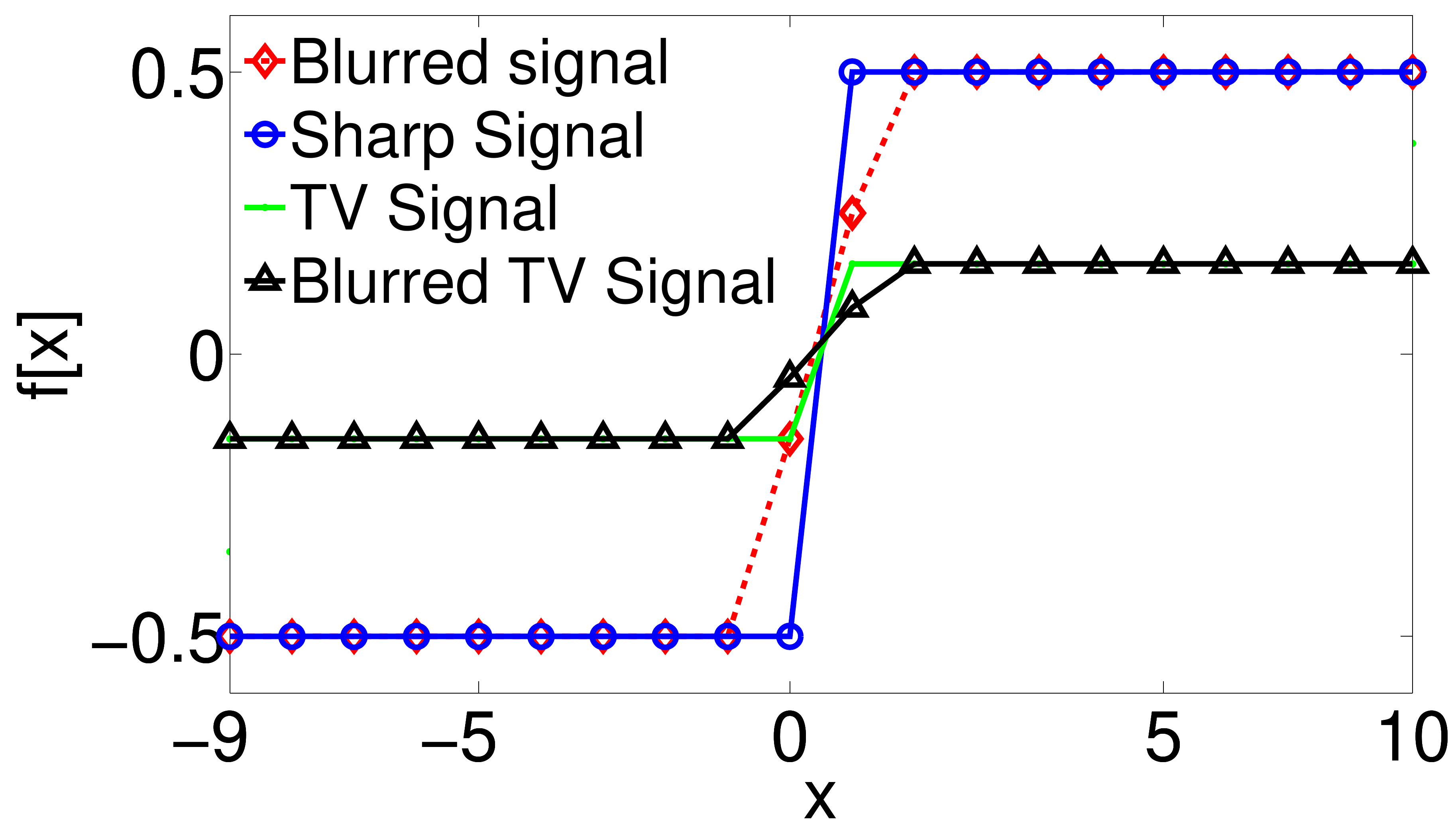}\\
\footnotesize{(b)}
\end{minipage}
\begin{minipage}[c]{.28\textwidth}
\centering
\includegraphics[width=\textwidth]{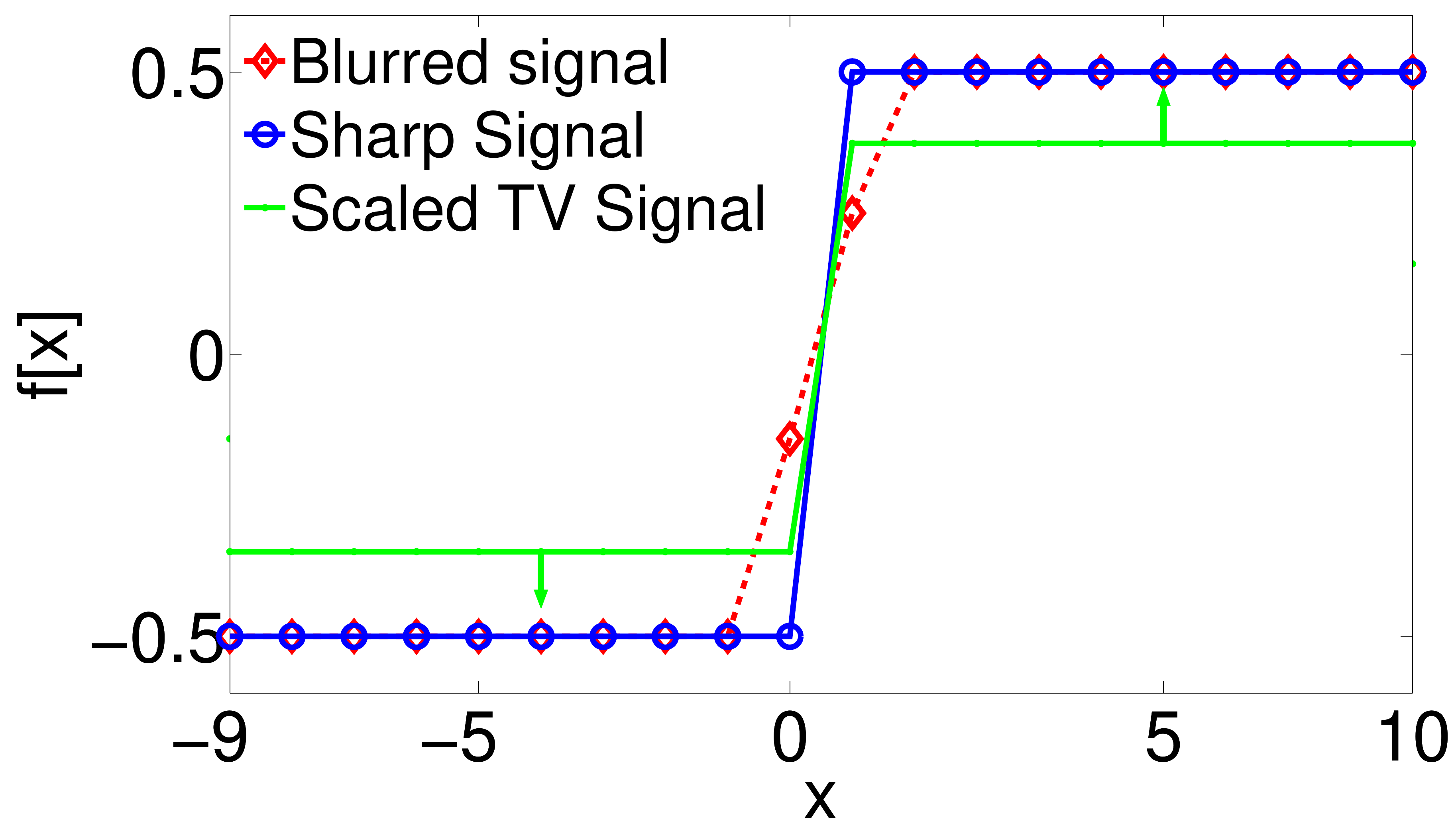}\\
\footnotesize{(c)}
\end{minipage}
\medskip
\caption{ Illustration of Theorem~\ref{the:pam} and Theorem~\ref{the:am} (best viewed in color). The original step function is denoted by a solid-blue line. The TV signal (green-solid) is obtained by solving $\arg\min_u \|u-f\|_2^2+\lambda J(u)$. In (a) we show how the TV denoising algorithm reduces the contrast of a blurred step function (red-dotted). In (b) we illustrate Theorem~\ref{the:am}: If the constraints on the blur are enforced, any blur different from the Dirac delta increases the distance between the input blurry signal and the blurry TV signal (black-solid). In (c) we illustrate Theorem~\ref{the:pam}: In the second step of the PAM algorithm, estimating a blur kernel without a normalization constraint is equivalent to scaling the TV signal. \label{fig:1dtoyt}} 
\end{figure*}

\subsection{Convolution Models and Notation }
The convolution operator in the minimization~\eqref{eq:glob} usually requires some assumptions on the boundaries of the image. We instead propose a new formulation that does not make any boundary assumptions. 

Let $u$ and $f$ be matrices with the same support $m \times n$, and $k$ a matrix with support $h \times w$.  Typically the discrete convolution of $u$ and $k$ is defined by 
\begin{equation}\label{eq:conv}
f = (u \ast k)[i,j] \doteq \sum_{r = -h/2}^{h/2} \sum_{c = -w/2}^{w/2} u[i-r,j-c]k[r,c] 
\end{equation}
for $i = 1,\dots, m$, $j = 1,\dots, n$, where some assumptions are made on the values outside the support of $u$ (Fig.~\ref{fig:conv}a). Commonly used assumptions in the literature are: \emph{symmetric}, where the boundary of the image is mirrored to fill the additional frame around the image; \emph{periodic}, where the image is padded with a periodic repetition of the boundary; \emph{replicate}, where the borders continue with a constant value. The periodic assumption is particularly convenient because it allows the use of the circular convolution theorem and the Discrete Fourier Transform (DFT) to achieve a fast performance. Because real images are rarely periodic,  Liu and Jia \cite{Liu2008} have proposed to extend the size of the blurry image to make it periodic.  Nonetheless, each of the above choices is an approximation of the real physical phenomenon and therefore it introduces an error in the reconstruction of the sharp image.

In this paper we propose to use a different approach, where the blurry image $f$ has support $m - h + 1 \times n - w -1$. In this case we define a new convolution operator, denoted by $\circ$, as
\begin{equation}\label{eq:vconv}
f = (u \circ k)[i,j] \doteq \sum_{r = 0}^{h-1} \sum_{c = 0}^{w-1} u[i+r,j+c] k_- [r,c] 
\end{equation}
for $i = 1,\dots, m - h + 1$, $j = 1,\dots, n - w + 1$ and where $k_- [r,c] = k[h - r, w - c]$ (Fig.~\ref{fig:conv}b). Notice that $k \circ u \neq u \circ k$ in general. Also,  $u \circ k$ is not defined if the support of $k$ is too large ($h>m+1$ and $w>n+1$ ). 

In the following we will choose  $J(u)$ to be the isotropic total variation $J(u) = ||u||_{BV} \doteq \int ||\nabla u(\mathbf{x})||_2 d\mathbf{x}$. By incorporating the above considerations in a discrete setting, the problem in~\eqref{eq:glob} can be written as

\begin{equation}\label{eq:dglob}
\begin{array}{ll}
\displaystyle\min_{u,k} &  \displaystyle\sum_{\mathbf x \in \mathcal{F}} ((k \circ u)[\mathbf x] - f[\mathbf x])^2 + \sum_{\mathbf x \in \mathcal{U}}  \| (\nabla u) [\mathbf x] \|_2\\
 &\mbox{subject to } \displaystyle k \succcurlyeq 0,\quad \|k\|_1 = 1
\end{array}
\end{equation}
where $\mathbf x = (i, j)$, $\mathcal{F} = \{1, \dots, m+h-1\} \times \{1, \dots, n+w-1\}$ and $\mathcal{U} = \{1, \dots, m\} \times \{1, \dots, n\}$\footnote{Notice that  $\mathcal{F}$ represents the support of $f$ and $\mathcal{U}$ the support of $u$.}. 

Since in~\eqref{eq:dglob} the domain of the sharp image $u$ is larger than the domain of the blurry image $f$, solving~\eqref{eq:dglob} requires the estimation of more variables than measurements. This problem is tackled by the total variation term that imposes a weak smoothness constraint beyond the boundary of $f$, instead of the hard assumptions needed when using eq.~\eqref{eq:conv}. We will show that using this formulation gives better results compared to the typical approach that uses eq.~\eqref{eq:conv}. 

While the analysis in our previous paper~\cite{Perrone2014} is based on the circular convolution operator, this paper is entirely based on the convolution operator defined in eq.~\eqref{eq:vconv}.

\section{Analysis of Total Variation Blind Deconvolution}
Recent analysis has highlighted important limitations of the total variation prior regarding the blind deconvolution problem. Still, as mentioned in the Prior Work section many algorithms successfully employ total variation for solving this problem. In this section we confirm the limitations of total variation highlighted in previous work~\cite{Levin2011Understanding}, and show how a small detail in the minimization strategy commonly used by many algorithms allows them to avoid local minima and estimate the correct blur. 

In Section~\ref{sec:local_minima} we show that an $L_p$ norm with $p \ge 1$ can not favor the reconstruction of a sharp image over a blurry one. Therefore, when used in problem~\eqref{eq:dglob} it does not yield the desired solution.

To understand why many algorithms still succeed,  in Section~\ref{sec:am} we study the Alternating Minimization (AM) algorithm, which computes a local optimum of problem~\eqref{eq:dglob}. To facilitate its analysis, it is desirable to have closed-form solutions at each step. However, in 2D there are no known closed-form solutions for any of the steps. Hence, we consider the 1D formulation and initialize the algorithm with the no-blur solution, which makes the first iteration equivalent to a total variation denoising problem (as opposed to a deconvolution problem). It is then possible to estimate a closed-form solution in the 1D case by using the \emph{taut string} algorithm of Davies and Kovacs~\cite{Davies2001}. Moreover, in Section~\ref{sec:tvd} we work with a step function and a $3$ pixel blur, and we show how it is possible to estimate a sharp, but scaled, signal from the first step of the AM algorithm (which we call \emph{TV-signal} in Fig.~\ref{fig:1dtoyt} (a)). In Section~\ref{sec:tvdf} we also show that it is possible to have a similar behavior by using a filtered version of the signals. However, in Section~\ref{sec:ex} we show how the use of the original signals may still be preferable.

By using the solution of the 1D total variation denoising problem, in Section~\ref{sec:aam} we show that a rigorous application of the AM algorithm gives the undesired no-blur solution in its second step. In fact, as shown in Fig.~\ref{fig:1dtoyt} (b), any blur different from the Dirac delta function would increase the difference between the input blurry signal (red plot) and the blurred TV signal (black plot). In Section~\ref{sec:pam} we point out that typical blind deconvolution algorithms do not solve the AM algorithm but a variant, which we call Projected Alternating Minimization (PAM) algorithm, where the constraints on the blur kernel are imposed separately in a delayed step. In Section~\ref{sec:apam} we show how this detail makes the PAM algorithm estimate in its second stage the true blur kernel. The delayed enforcement of the constraints can be seen as equivalent to a scaling of the TV denoised signal (see Fig.~\ref{fig:1dtoyt}  (c)).  Finally, in Section~\ref{sec:dis} we highlight the role of the regularization parameter $\lambda$ and further stress the importance of the scaling principle to make the PAM algorithm succeed. 

\subsection{Analysis of Relevant Local Minima}
\label{sec:local_minima}
In this section we study the $L_p$ norm of the image derivatives as image prior, with a particular emphasis to the case of $p \geq 1$. In a recent work Levin \etal~\cite{Levin2011} have shown that  eq.~\eqref{eq:glob} favors the no-blur solution $(f, \delta)$, when $J(u) = \int |u_x(\mathbf{x})|^p + |u_y(\mathbf{x})|^p d\mathbf{x}$, for any $p > 0$ and either the true blur $k_0$ has a large support or $||k_0||_2^2 \ll 1$. In the following theorem we show that the above result is also true for any kind of blur kernels and for an image prior with $p \geq 1$.
\begin{theorem}\label{the:youngs}
Let $J(u) = \|\nabla u \|_p \doteq \left (\int \|\nabla u(\mathbf{x})\|_p^p d\mathbf{x} \right )^{\frac{1}{p}}$, with $p\in[1,\infty]$, $f$ be the noise-free input blurry image ($n = 0$) and $u_0$ the sharp image. Then,
\begin{equation}
J(f) \leq J(u_0).
\end{equation}
\end{theorem}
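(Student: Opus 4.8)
The plan is to use the fact that $k_0$, being constrained by $k_0 \succcurlyeq 0$ and $\|k_0\|_1 = 1$, has unit mass $\int k_0(\mathbf{x})\,d\mathbf{x} = 1$ and therefore acts as an averaging operator, so that convolution with $k_0$ cannot increase any $L^p$ norm; this is exactly \emph{Young's convolution inequality}, which is what the theorem's label anticipates. The one structural observation needed is that convolution commutes with the gradient: from $f = k_0 \ast u_0$ (with $n = 0$) we get $f_x = k_0 \ast (u_0)_x$ and $f_y = k_0 \ast (u_0)_y$. In the discrete valid-convolution model of eq.~\eqref{eq:vconv} this is immediate, since the finite-difference operator commutes with $\circ$ on the interior by linearity; in the continuous setting it needs only mild smoothness of $u_0$.

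First I would put the functional in separated coordinate form. For $p < \infty$,
\[
J(u)^p = \int \|\nabla u(\mathbf{x})\|_p^p\,d\mathbf{x} = \int \bigl(|u_x(\mathbf{x})|^p + |u_y(\mathbf{x})|^p\bigr)\,d\mathbf{x} = \|u_x\|_{L^p}^p + \|u_y\|_{L^p}^p ,
\]
while for $p = \infty$ one has $J(u) = \max\bigl\{\|u_x\|_{L^\infty},\,\|u_y\|_{L^\infty}\bigr\}$.

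Next, I would apply Young's inequality $\|k_0 \ast g\|_{L^p} \le \|k_0\|_{L^1}\|g\|_{L^p} = \|g\|_{L^p}$ to $g = (u_0)_x$ and $g = (u_0)_y$, which gives $\|f_x\|_{L^p} \le \|(u_0)_x\|_{L^p}$ and $\|f_y\|_{L^p} \le \|(u_0)_y\|_{L^p}$. Substituting into the separated form yields $J(f)^p \le J(u_0)^p$ when $p < \infty$, hence $J(f) \le J(u_0)$; the $p = \infty$ case follows identically from the two coordinate-wise bounds together with monotonicity of $\max$. The anisotropic choice $J(u) = \|u_x\|_1 + \|u_y\|_1$ is handled in exactly the same way.

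There is no genuine obstacle here; the only points deserving care are (i) the interchange of convolution and differentiation just mentioned, and (ii) a domain mismatch — under the operator $\circ$ the image $f$ is supported on $\mathcal{F}$, which is strictly smaller than the support $\mathcal{U}$ of $u_0$ — but this only helps, since restricting the integral (or sum) defining $J(f)$ to a smaller set can only decrease its value, so the inequality is if anything strengthened. It is also worth noting that nonnegativity of $k_0$ is not strictly required for the norm estimate, $\|k_0\|_1 = 1$ alone suffices in Young's inequality; nonnegativity is simply what makes $\|k_0\|_1 = \int k_0\,d\mathbf{x}$, i.e. what justifies reading $k_0$ as a unit-mass average.
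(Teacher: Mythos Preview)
Your proposal is correct and follows essentially the same route as the paper: commute the gradient with the convolution, then invoke Young's inequality together with $\|k_0\|_1 = 1$. The paper's proof is a two-line version that applies Young's inequality directly to $\|k_0 \ast \nabla u_0\|_p$ without separating the $x$- and $y$-components; your coordinatewise split simply makes that step explicit, and your remarks on the domain mismatch and on nonnegativity of $k_0$ are additional care that the paper omits.
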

\begin{IEEEproof}
Because $f$ is noise-free, $f = k_0 \ast u_0$; since the convolution and the gradient are linear operators, we have
\begin{equation}
J(f) =   \| \nabla  (k_0 \ast u_0)\|_p  = \| k_0 \ast \nabla u_0\|_p
\end{equation}
By applying Young's inequality \cite{Bogachev07} (see Theorem 3.9.4, pages 205-206) we have 
\begin{equation}
J(f) =\|k_0 \ast \nabla u_0\|_p \leq \|k_0\|_1   \|\nabla u_0\|_p =  \|\nabla u_0\|_p \doteq J(u_0)
\end{equation}
since $ \|k_0\|_1  = 1$.
\end{IEEEproof}
Since the first term (the data fitting term) in problem~\eqref{eq:glob} is zero for both the no-blur solution $(f, \delta)$ and the true solution $(u_0, k_0)$, Theorem~\ref{the:youngs} states that the no-blur solution has always a smaller, or at most equivalent, cost than the true solution. Notice that Theorem~\ref{the:youngs} is also valid for any $J(u) = \| \nabla u\|_p^r$ for any $r>0$. Thus, it includes as special cases the Gaussian prior $J(u) = ||u||_{H^1}$, when $p = 2$, $r=2$, and the anisotropic total variation prior $J(u) = \|u_x\|_1+\|u_y\|_1$, when $p = 1$, $r=1$.

Theorem~\ref{the:youngs} highlights a strong limitation of the formulation~\eqref{eq:glob}: The exact solution can not be retrieved when an iterative minimizer is initialized at the no-blur solution. 

\subsection{The Alternating Minimization (AM) Algorithm}
\label{sec:am}
To better understand the behavior of a total variation based blind deconvolution algorithm  we consider an \emph{alternating minimization} algorithm that minimizes~\eqref{eq:dglob}. The solution is found by alternating between the estimation of the sharp image given the kernel and the estimation of the kernel given the sharp image. This approach, which we call the AM algorithm, requires solving an unconstrained convex problem in $u$
\begin{equation}\label{eq:am_u}
u^{t+1} \leftarrow \arg\min_u  \sum_{x \in \mathcal{F}} ((k \circ u)[x] - f[x])^2 + \sum_{x \in \mathcal{U}}  \| (\nabla u) [x] \|_2
\end{equation}
and a constrained convex problem in $k$
\begin{equation}\label{eq:am_k}
\begin{array}{ll}
k^{t+1} \leftarrow & \displaystyle\arg\min_k  \displaystyle\sum_{x \in \mathcal{F}} ((k \circ u)[x] - f[x])^2\\
& \mbox{subject to }  k \succcurlyeq 0,\quad \|k\|_1 = 1.
\end{array}
\end{equation}

A convergence analysis of the AM algorithm is still challenging, but when the algorithm is initialized at the no-blur solution, the first step of the AM algorithm requires solving a denoising problem
\begin{equation}\label{eq:denos}
\hat{u} \leftarrow \arg\min_u  \sum_{x \in \mathcal{F}} (u[x] - f[x])^2 + \sum_{x \in \mathcal{U}}  \| (\nabla u) [x] \|_2.
\end{equation}
The total variation denoising algorithm~\eqref{eq:denos} has been widely studied in the literature, and its analysis can give key insights on the behavior of the AM algorithm. In the next section we study this problem and present an important building block for the other results presented in the paper.
\begin{table*}[t!]
\caption{Formulas of $\hat{U}_1$ and $\hat{U}_2$ for  $\lambda \in [\lambda_{min}^l,\lambda_{max}^l)$, $\lambda \in [\lambda_{min}^c,\lambda_{max}^c)$ and $\lambda \in [\lambda_{min}^r,\lambda_{max}^r)$ used in Proposition~\ref{proposition}.\label{table:proposition}} 
\centering
\bgroup
\def\arraystretch{1.7}
\begin{tabular}{|ll*{2}{|c}| }

  \hline 
 & & $\mathbf{\hat{U}_1(\lambda)}$ & $\mathbf{\hat{U}_2(\lambda)}$\\
  \hline 
  \multirow{2}{*} {$\lambda \in [\lambda_{min}^l,\lambda_{max}^l)$}& $\lambda_{min}^l =  (U_2-U_1) (L_2-L_2\delta_1-\delta_2)$ & \multirow{2}{*} {$ U_1+\frac{\lambda}{L_1-2}$} & \multirow{2}{*} {$ \frac{U_1 + U_2L_2}{L_2+1}+\frac{(\delta_1-\delta_2)(U_2-U_1)-\lambda}{L_2+1} $}\\
& $ \lambda_{max}^l = (U_2-U_1)\frac{L_1-2}{L_1+L_2-1}(L_2+\delta_1-\delta_2)$ & &\\
  \hline  
   \multirow{2}{*} {$\lambda \in [\lambda_{min}^c,\lambda_{max}^c)$}&$\lambda_{min}^c =  (U_2-U_1)\max\left\{(L_1-2)\delta_1,(L_2-1)\delta_2\right\}$&\multirow{2}{*} {$U_1+\frac{\delta_1(U_2-U_1)+\lambda}{L_1-1}$} & \multirow{2}{*} {$U_2+\frac{-\delta_2(U_2-U_1)-\lambda}{L_2} $} \\
& $ \lambda_{max}^c = (U_2-U_1)\frac{L_2(L_1-1)-L_2\delta_1-(L_1-1)\delta_2}{L_1+L_2-1}$ & &\\
  \hline 
  \multirow{2}{*} {$\lambda \in [\lambda_{min}^r,\lambda_{max}^r)$}& $\lambda_{min}^r =  (U_2-U_1) (L_1-\delta_1-(L_1-1)\delta_2-1)$&\multirow{2}{*} {$\frac{U_1(L_1-1)+U_2+(\delta_1-\delta_2)(U_2-U_1)+\lambda}{L_1}$} & \multirow{2}{*} {$U_2-\frac{\lambda}{L_2-1} $} \\
& $ \lambda_{max}^r = (U_2-U_1)\frac{L_2-1}{L_1+L_2-1}(L_1-\delta_1+\delta_2-1)$ & &\\
  \hline 
 \end{tabular}
 \egroup
\end{table*}

\subsection{Analysis of 1D Total Variation Denoising}
\label{sec:tvd}
In this section we look at the solution of a 1D total variation denoising problem because analysis of problem~\eqref{eq:dglob} in the literature is still fairly limited and a closed-form solution even for a restricted family of 2D signals is not available. Still, analysis in 1D can provide practical insights.

A total variation denoising problem can be written in 1D as
\begin{equation} \label{eq:rof0}
\hat{u}[x] = \arg\min_u \frac{1}{2}\!\!\! \sum_{x = -L_1+1}^{L_2-1} \!\!\!\!\!\!(u[x] - f[x])^2 + \lambda \!\!\!\!\sum_{x = -L_1}^{L_2-1} \!\!\! |u[x+1] - u[x]|,
\end{equation}
where $u \in [-L_1,L_2]$ and $f \in [-L_1+1,L_2-1]$. For a successful convergence of the AM algorithm it is desirable to have a solution equal or close to the true sharp signal $u^0$. By exploiting recent work of Condat~\cite{Condat2013}, Strong and Chan~\cite{Strong2003} and the \emph{taut string algorithm} of Davies and Kovacs~\cite{Davies2001}, for a simple class of signals, in the following proposition we give the analytical expression for $\lambda$ that gives a sharp, but scaled, version of $u^0$ as the solution of the denoising problem~\eqref{eq:rof0}.

\begin{proposition}\label{proposition}
Let $u_0$ be a 1D step function of the following form
\begin{equation}
\label{eq:defu}
u_0[x] = \left \{ 
\begin{array}{rl}
U_1 & x \in[-L_1,-1]\\
U_2 & x \in [0, L_2]
\end{array}
\right.
\end{equation}
for some $U_1<U_2$ and $L_1,L_2>2$, and $k_0$ be a 3-element blur kernel where $\delta_1\doteq k_0[1]$, $\delta_2\doteq k_0[-1]$ and $k_0[0] = 1-\delta_1-\delta_2$, $\delta_1 + \delta_2 \le 1$ and $\delta_1,\delta_2 \ge 0$. Then, $f$ is the convolution of $u$ with $k$
\begin{equation}
\label{eq:deff}
f[x] = \left \{ 
\begin{array}{ll}
U_1 & x \in[-L_1+1,-2]\\
U_1+\delta_1(U_2-U_1) & x =-1\\
U_2-\delta_2(U_2-U_1) & x =0\\
U_2 & x \in [1, L_2-1].
\end{array}
\right .
\end{equation}
The solution $\hat{u}[x]$ to the problem~\eqref{eq:rof0}  is 
\begin{equation}\label{eq:u1}
\hat{u}[x] = \left \{ 
\begin{array}{ll}
\hat{U}_1(\lambda) & x \in [-L_1, -2]\\
\hat{U}_2(\lambda) & x \in [-1, L_2]
\end{array}
\right .
\end{equation}
when $\lambda \in [\lambda_{min}^l,\lambda_{max}^l)$, is
\begin{equation}\label{eq:u2}
\hat{u}[x] = \left \{ 
\begin{array}{ll}
\hat{U}_1(\lambda)  & x \in [-L_1, -1]\\
\hat{U}_2(\lambda)  & x \in [0, L_2]
\end{array}
\right .
\end{equation}
when $\lambda \in [\lambda_{min}^c,\lambda_{max}^c)$, and is
\begin{equation}\label{eq:u3}
\hat{u}[x] = \left \{ 
\begin{array}{ll}
\hat{U}_1(\lambda) & x \in [-L_1, 0]\\
\hat{U}_2(\lambda)  & x \in [1, L_2]
\end{array}
\right .
\end{equation}
when $\lambda \in [\lambda_{min}^r,\lambda_{max}^r)$. 
Analytical expressions for $\hat{U}_1(\lambda)$,  $\hat{U}_2(\lambda)$, $\lambda_{min}^l$, $\lambda_{max}^l$, $\lambda_{min}^c$, $\lambda_{max}^c$, $\lambda_{min}^r$, $\lambda_{max}^r$,  are defined in Table~\ref{table:proposition}. 
If $\delta_2 = \frac{L_1  - \delta_1 - 1}{L_1 + L_2 - 2}$ or $\delta_2 = L_2 - (L_1 + L_2 -2) \delta_1$ then a $\lambda$ that gives the solution~\eqref{eq:u1},~\eqref{eq:u2} or ~\eqref{eq:u3} does not exist.

\end{proposition}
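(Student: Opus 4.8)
The plan is to turn the denoising problem~\eqref{eq:rof0} into a piece of elementary geometry via the \emph{taut string} characterisation of one-dimensional total variation denoising (Davies and Kovacs~\cite{Davies2001}; see also Condat~\cite{Condat2013} and Strong and Chan~\cite{Strong2003}), and then to read off the three regimes and every entry of Table~\ref{table:proposition} from the shape of a piecewise-linear tube with only a few breakpoints. As a preliminary step I would dispose of the two extra samples: since $u[-L_1]$ and $u[L_2]$ enter only the regularizer, at any minimizer $u[-L_1]=u[-L_1+1]$ and $u[L_2]=u[L_2-1]$, so it is enough to solve~\eqref{eq:rof0} on $\{-L_1+1,\dots,L_2-1\}$ and then extend by these constants; this is why the two flat pieces in~\eqref{eq:u1}--\eqref{eq:u3} reach all the way out to $-L_1$ and $L_2$. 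Let $F$ be the cumulative sum of $f$ given in~\eqref{eq:deff}. Since $f$ is constant equal to $U_1$, then $U_1+\delta_1(U_2-U_1)$, then $U_2-\delta_2(U_2-U_1)$, then $U_2$ on four consecutive runs, $F$ is piecewise linear with breakpoints only at the three transitions of $f$ near the origin, and because $\delta_1,\delta_2\ge 0$ and $\delta_1+\delta_2\le 1$ its successive slopes are nondecreasing; hence $F$ is convex. The taut string theorem identifies $\hat u$ with the discrete derivative of the shortest curve $\hat U$ that is pinned to $F$ at the two endpoints and stays in the tube $\{\,|\hat U-F|\le\lambda\,\}$ at the interior points.

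The crux is that, $F$ being convex with breakpoints only at those three transitions, its taut string $\hat U$ is convex and can change slope only at those same three places; for $\lambda$ away from the trivial extremes it has exactly one knot, so $\hat u$ is a single step whose jump sits at one of the three transitions --- precisely the three profiles~\eqref{eq:u1},~\eqref{eq:u2},~\eqref{eq:u3}. For a fixed knot position the two segments of $\hat U$ are pinned to $F$ at the endpoints and, since the kink is convex, rest on the lower wall $F-\lambda$ at the knot; writing these as two linear equations in the two slopes and solving gives closed forms for $\hat U_1(\lambda)$ and $\hat U_2(\lambda)$. Equivalently, and more transparently, I would use the subgradient optimality conditions of~\eqref{eq:rof0}: on each of the two constant runs of $\hat u$ the value is the mean of $f$ over that run, corrected by $\pm\lambda$ divided by the run length, with the sign fixed by the single upward jump. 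Either route yields the three pairs of formulas in Table~\ref{table:proposition} once the samples in each run are counted --- and this is exactly where the factors $L_1-1$, $L_1-2$, $L_2-1$ come from, the one-pixel shift on each side being the footprint of the three-tap blur in~\eqref{eq:deff}.

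It remains to decide, for each of the three knot positions, the set of $\lambda$ for which the corresponding $\hat U$ genuinely stays inside the tube, i.e.\ $F-\lambda\le\hat U\le F+\lambda$ holds also at the two breakpoints that are not the knot. Each of these is affine in $\lambda$ and reduces to a membership $\lambda\in[\lambda_{min},\lambda_{max})$: the lower endpoint is the smallest $\lambda$ for which the relevant intermediate sample of $f$ has been absorbed into its run, and the upper endpoint is the $\lambda$ at which $\hat U$ becomes tangent to the tube at a further breakpoint and the knot is forced to slide to the neighbouring transition (which is why consecutive regimes abut). Carrying out this bookkeeping produces exactly $\lambda_{min}^{l},\lambda_{max}^{l},\lambda_{min}^{c},\lambda_{max}^{c},\lambda_{min}^{r},\lambda_{max}^{r}$; and substituting $\delta_2=\frac{L_1-\delta_1-1}{L_1+L_2-2}$ or $\delta_2=L_2-(L_1+L_2-2)\delta_1$ makes the endpoints of the pertinent interval coincide, so that regime is empty and no admissible $\lambda$ produces that profile. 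The main obstacle is precisely this last step: the geometry is trivial, but pinning down every index and every $\pm 1$ in the run lengths and in the tube inequalities --- while keeping the half-open intervals consistent across the three cases --- is where the real work lies, and it has to be done by direct computation rather than invoked from a general principle.
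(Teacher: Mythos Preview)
Your approach is essentially the paper's: reduce to the interior by extending constantly at the two extra samples, invoke the taut string characterisation of Davies--Kovac, and for each of the three candidate knot positions verify the remaining tube constraints to delimit the $\lambda$-interval. The paper does exactly this, treating the breakpoints $x=-2,-1,0$ in turn --- first solving the single-constraint shortest-path problem, then checking the two leftover inequalities to obtain $\lambda_{\min}$ and $\lambda_{\max}$.

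One slip: for a convex $F$ the chord from endpoint to endpoint lies \emph{above} $F$, so when the tube binds it is the \emph{upper} wall $F+\lambda$ that is touched at the knot, not the lower one; the paper's explicit $s[-2]=r[-2]+\lambda$ (and analogously at $x=-1,0$) confirms this. Your alternative route via the Strong--Chan subgradient conditions (run mean $\pm\,\lambda/\text{run length}$, with the sign fixed by the direction of the single jump) is immune to this sign confusion and is arguably the cleaner way to read off the entries of Table~\ref{table:proposition}; the paper sticks to the taut string geometry throughout.
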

\begin{IEEEproof}
See Section \ref{proof:proposition}.
\end{IEEEproof}
Proposition~\ref{proposition} shows that for a wide range of signals it is possible to obtain a sharp signal by solving problem~\eqref{eq:rof0}. Notice, however, that total variation regularization locally scales the input signal (see also illustration in Fig~\ref{fig:1dtoyt} (a)). We will show in the next sections that this apparently insignificant scaling has a fundamental role in the convergence of the AM algorithm.

\subsection{Filtered Image Model}
\label{sec:tvdf}
A common practice in many MAP$_k$ methods that use a Variational Bayesian approach is to solve problem~\eqref{eq:dglob} using the gradients of $u$ and $f$ instead of the original signals~\cite{Fergus2006,Wipf2013,Levin2011}. In this section we briefly consider this case for the AM algorithm. In this case the 1D denoising problem of~\eqref{eq:rof0} becomes 

\begin{align} \label{eq:rof0d}
\hat{u}_x[x] = \arg\min_{u_x} \frac{1}{2}\!\!\!\!\sum_{x = -L_1+1}^{L_2-2}\!\!\!\! (u_x[x] - f_x[x])^2%\quad\quad\quad\quad~\\
 + \lambda \sum_{x = -L_1}^{L_2-1} |u_x[x]|.%\nonumber
\end{align}
where $u_x[x] = u[x+1] - u[x]$ and $f_x[x]  = f[x+1] - f[x]$.
Problem~\eqref{eq:rof0d} can be easily solved in closed form by soft-thresholding, therefore for a simple class of signals as in Proposition~\ref{proposition} we can seek for values of $\lambda$ such that the solution $\hat{u}_x$ is the derivative of a sharp signal, \ie, a Dirac delta.

\begin{theorem}\label{the:grad}
The solution of problem~\eqref{eq:rof0d} with $f_x[x] = f[x+1] - f[x]$, where $f[x]$ is defined in~\eqref{eq:deff}, is
\begin{equation}
\label{eq:ux1}
u_x[x] = \left \{ 
\begin{array}{ll}
0 & x \neq -2\\
(\delta_1 - \max(\delta_2, 1 - \delta_1 - \delta_2)) (U_2 - U_1) & x = -2
\end{array}
\right .
\end{equation}
if $\delta_1 > \max(\delta_2, \frac{1 - \delta_2}{2})$ and $\lambda \ge (U_2 - U_1)\max(\delta_2, 1 - \delta_1 - \delta_2)$, is
\begin{equation}
\label{eq:ux2}
u_x[x] = \left \{ 
\begin{array}{ll}
0 & x \neq -1\\
(1 - \delta_1 - \delta_2 - \max(\delta_1,\delta_2))(U_2 - U_1) & x = -1
\end{array}
\right .
\end{equation}
if $1 > \max(2 \delta_1 +\delta_2,2 \delta_2 +\delta_1)$ and $\lambda \ge (U_2 - U_1)\max(\delta_1, \delta_2)$, and is
\begin{equation}
\label{eq:ux3}
u_x[x] = \left \{ 
\begin{array}{ll}
0 & x \neq 0\\
(\delta_2 - \max(\delta_1, 1 - \delta_1 - \delta_2))(U_2 - U_1) & x = 0
\end{array}
\right .
\end{equation}
if $\delta_2 > \max(\delta_1, \frac{1 - \delta_1}{2})$ and $\lambda \ge (U_2 - U_1)\max(\delta_1, 1 - \delta_1 - \delta_2)$.

If $\delta_2 = \delta_1 \ge 1/3$, if $\delta_1 = (1 - \delta_2)/2 \ge 1/3$ or if $\delta_2 = (1 - \delta_1)/2 \ge 1/3$ then solving problem~\eqref{eq:rof0d} can never lead to the solutions in~\eqref{eq:ux1},~\eqref{eq:ux2} and~\eqref{eq:ux3} for any possible value of $\lambda$.
\end{theorem}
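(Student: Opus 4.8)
The plan is to exploit that, in contrast with~\eqref{eq:rof0}, problem~\eqref{eq:rof0d} is \emph{fully separable} in the unknowns $u_x[x]$: its objective is a sum of one-variable terms $\tfrac12(u_x[x]-f_x[x])^2+\lambda|u_x[x]|$, where for $x\in\{-L_1,L_2-1\}$ there is no fidelity term (equivalently $f_x[x]=0$). The minimizer is therefore given coordinate-wise by soft-thresholding, $\hat u_x[x]=\sign(f_x[x])\max(|f_x[x]|-\lambda,0)$. The first step is to read off $f_x[x]=f[x+1]-f[x]$ from~\eqref{eq:deff}: it vanishes for all $x$ except $x=-2,-1,0$, where it equals $\delta_1(U_2-U_1)$, $(1-\delta_1-\delta_2)(U_2-U_1)$ and $\delta_2(U_2-U_1)$ respectively, and since $U_1<U_2$, $\delta_1,\delta_2\ge0$ and $\delta_1+\delta_2\le1$, all three of these values are $\ge0$.

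Consequently $\hat u_x$ is supported inside $\{-2,-1,0\}$, and demanding that it be a single Dirac delta is the same as demanding that exactly one of the three magnitudes survive the threshold while the other two are annihilated. Abbreviating $a=\delta_1(U_2-U_1)$, $b=(1-\delta_1-\delta_2)(U_2-U_1)$, $c=\delta_2(U_2-U_1)$, a spike at $x=-2$ occurs exactly when $\lambda\ge\max(b,c)$ and $\lambda<a$; such a $\lambda$ exists iff $a>\max(b,c)$. I would then rewrite $a>b$ as $\delta_1>\tfrac{1-\delta_2}{2}$ and $a>c$ as $\delta_1>\delta_2$, obtaining the hypothesis $\delta_1>\max(\delta_2,\tfrac{1-\delta_2}{2})$, and evaluate the surviving value $a-\lambda$ at the lower end $\lambda=\max(b,c)=(U_2-U_1)\max(\delta_2,1-\delta_1-\delta_2)$, which reproduces~\eqref{eq:ux1}. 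The cases of a spike at $x=-1$ and at $x=0$ are treated identically, producing the feasibility conditions $b>\max(a,c)$ (i.e.\ $1>\max(2\delta_1+\delta_2,2\delta_2+\delta_1)$) and $c>\max(a,b)$ (i.e.\ $\delta_2>\max(\delta_1,\tfrac{1-\delta_1}{2})$), together with the values~\eqref{eq:ux2} and~\eqref{eq:ux3}.

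For the non-existence claims, I would note that a single-spike solution requires the largest of $a,b,c$ to be a \emph{strict} maximum, and that each degenerate hypothesis pins two of them together precisely at the top: $\delta_2=\delta_1$ forces $a=c$, so a spike is possible only via~\eqref{eq:ux2}, which needs $3\delta_1<1$; $\delta_1=\tfrac{1-\delta_2}{2}$ forces $a=b$, so only~\eqref{eq:ux3} can occur, which needs $\delta_2>\tfrac13$; $\delta_2=\tfrac{1-\delta_1}{2}$ forces $b=c$, so only~\eqref{eq:ux1} can occur, which needs $\delta_1>\tfrac13$. In each case the extra assumption $\ge\tfrac13$ contradicts that requirement, so no value of $\lambda$ yields any of~\eqref{eq:ux1}--\eqref{eq:ux3}.

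The calculations in the first two paragraphs are routine; the points that deserve care are (i) verifying $\sign(f_x[x])=+1$ at the three active locations, which is where $U_1<U_2$ and $\delta_1+\delta_2\le1$ enter, and (ii) being precise about the admissible $\lambda$-interval — the displayed formulas correspond to $\lambda$ at its lower endpoint $(U_2-U_1)\max(\text{the other two magnitudes})$, with the spike shrinking as $\lambda$ grows and disappearing once $\lambda$ reaches the surviving magnitude, so I would state this interval explicitly rather than only the inequality. The only real (and minor) obstacle is organising the three-way case split so that the tie analysis for the degenerate hypotheses drops out without repetition.
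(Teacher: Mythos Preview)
Your proposal is correct and follows essentially the same approach as the paper: both compute $f_x$ explicitly, observe that~\eqref{eq:rof0d} is separable and solved coordinate-wise by soft-thresholding, and then read off the three single-spike cases and the degenerate ties. Your treatment is in fact slightly more careful than the paper's in two places --- you make explicit that the displayed spike values in~\eqref{eq:ux1}--\eqref{eq:ux3} correspond to $\lambda$ at the \emph{lower} endpoint (the paper writes ``$\lambda\ge$'' but the formula only matches at equality), and your tie analysis for the degenerate hypotheses is spelled out more fully than the paper's one-line conclusion.
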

\begin{IEEEproof}
See Section \ref{proof:the:grad}.
\end{IEEEproof}
Theorem~\ref{the:grad} gives conditions on $\lambda$ such that the TV denoising of filtered signals gives a scaled version of  the true sharp signal. This result is similar to the one given by Proposition~\ref{proposition} for the classical TV denoising algorithm, nonetheless there are still important differences that we highlight in the next section.

\subsection{Existence of Sharp Signal Solutions}
\label{sec:ex}
Both Theorem~\ref{the:grad} and Proposition~\ref{proposition} give conditions such that it is possible to estimate a sharp signal from a blurry one by solving a denoising problem. An important question is to ask under what conditions a sharp signal can not be estimated from a blurry one. We will consider the space of all possible 3-element blurs parametrized in $\delta_1$ and $\delta_2$ and call \emph{degenerate region} the configurations for which a sharp signal can not be estimated. Notice that the degenerate region corresponds to signals where it is only possible to remove one of the two steps of the blurry signal, or where the minimum $\lambda$ required to get a sharp signal is equal to the one that makes the whole signal constant. 

For the classical TV denoising algorithm from Theorem~\ref{the:grad} we have that if the true blur lies on two lines, parametrized  on $L_1$ and $L_2$, then it is not possible to estimate a sharp signal (see Fig.~\ref{fig:nod} (a) and (b)). Because of the parametrization on $L_1$ and $L_2$, different signals have different degenerate regions. We can generalize the discussion to a general piecewise constant signal by dividing it in many step signals. Thus, the denoising problem of a piecewise signal is  equivalent to many denoising problems of different step signals with careful treatment at the boundaries. If each signal has different degenerate regions, it is likely that for each blur configuration the majority of the signal will not be in a degenerate case. 

For the TV denoising of the filtered signal, the degenerate region is denoted by three segments (see Fig.~\ref{fig:nod} (c)). Since it is the same for any possible signal, it would not be changed by considering a general piecewise signal as in the case of problem~\eqref{eq:dglob}. Thus, in this case it is not possible  to deblur  images blurred with blurs that lie in the degenerate case.

The above discussion is intentionally informal, as providing a formal proof would require too much space. Still, it suggests that there may be aspects, previously not considered in the literature, that support the use of the original signal in problem~\eqref{eq:dglob} in lieu of their filtered versions.

\begin{figure}[t]
\centering
\begin{minipage}[c]{.15\textwidth}
\centering
\includegraphics[width=\textwidth]{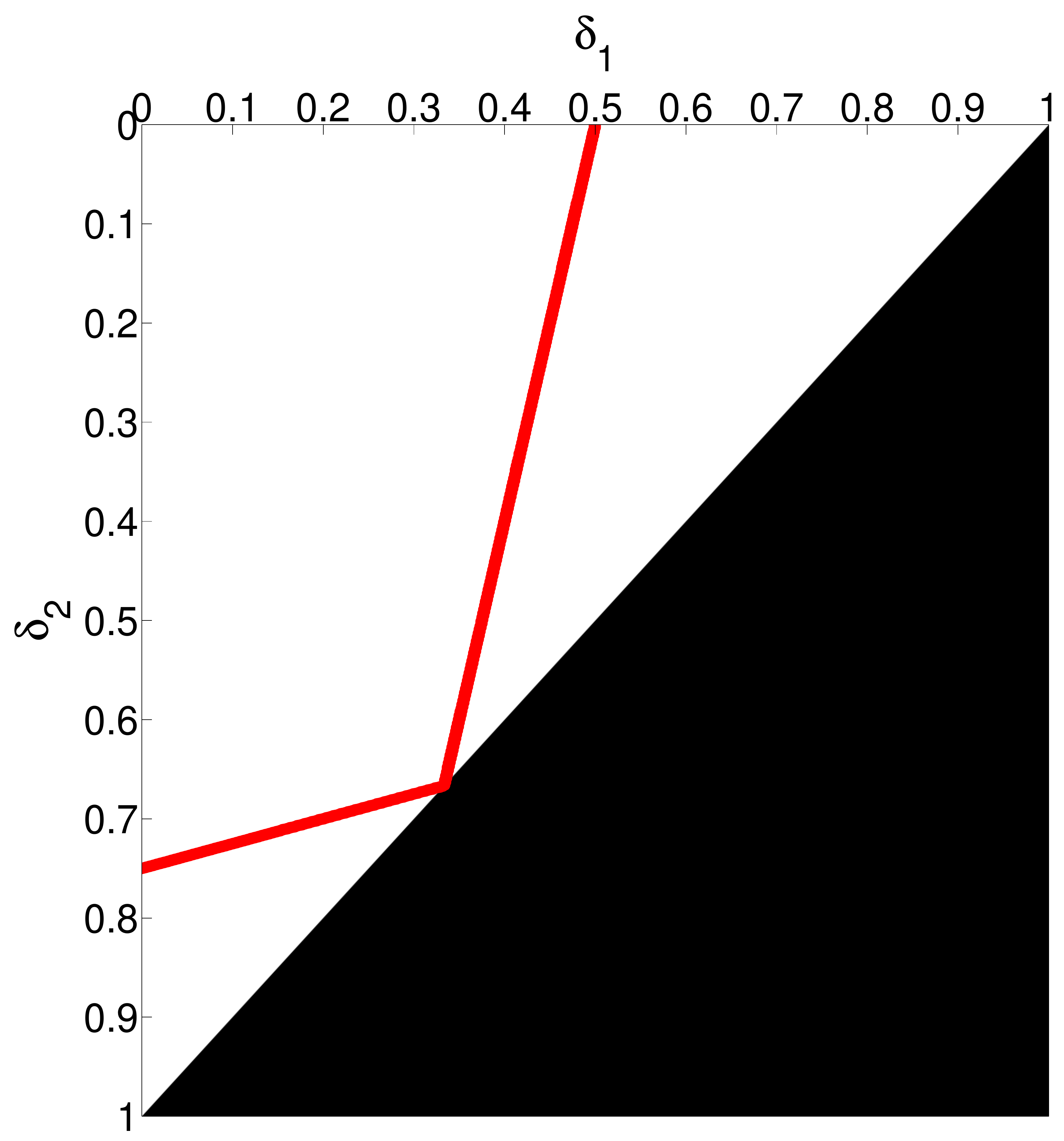}
\footnotesize{a)} 
\end{minipage}
\begin{minipage}[c]{.15\textwidth}
\centering
\includegraphics[width=\textwidth]{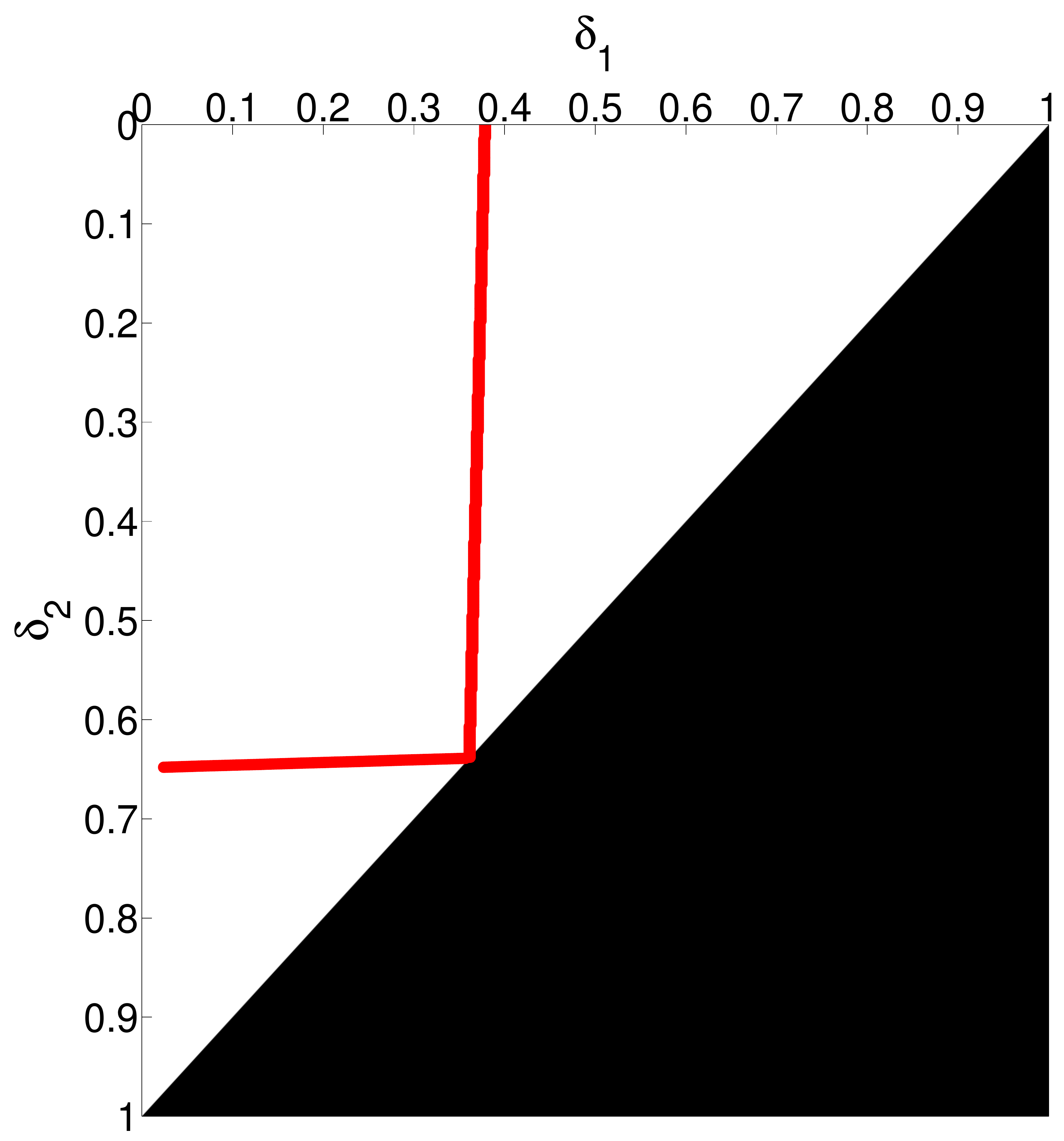}
\footnotesize{b)}
\end{minipage}
\begin{minipage}[c]{.15\textwidth}
\centering
\includegraphics[width=\textwidth]{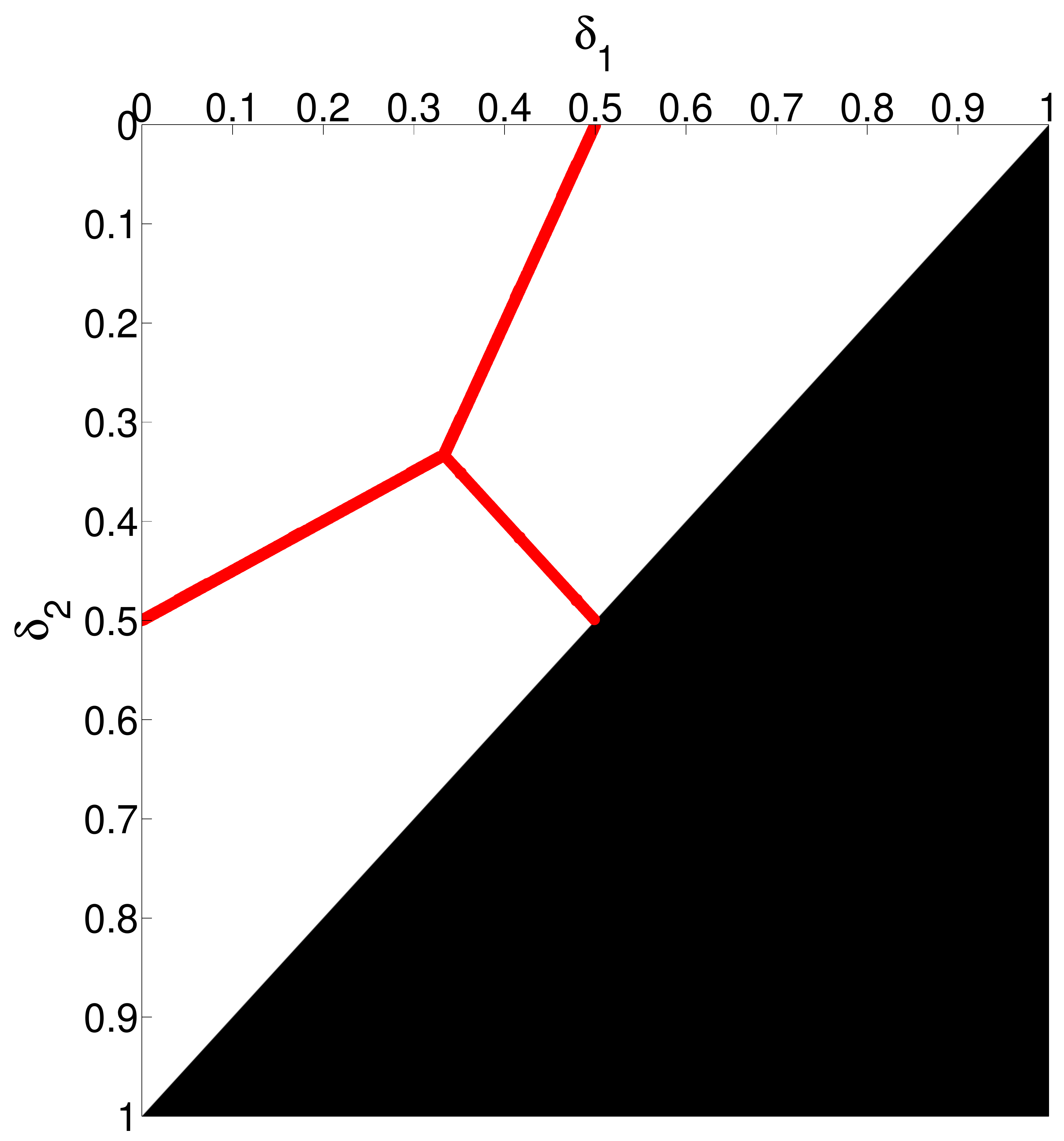}
\footnotesize{c)}
\end{minipage}
\medskip
\caption{Illustration of configurations of $\delta_1$ and $\delta_2$ for which it is not possible to estimate a sharp signal in problem~\eqref{eq:rof0} (cases a) and b)~) and in problem~\eqref{eq:rof0d} (case c)~). The region of feasible solutions is where $\delta_1 + \delta_2 \le 1$ is satisfied and is denoted by the white region; the red lines denote the configurations that can not lead to a sharp signal for: a) a signal as in Proposition~\ref{proposition} such that $L_1 = 3$ and $L_2 = 3$; b)  a signal as in Proposition~\ref{proposition} such that $L_1 = 15$ and $L_2 = 24$; c) a signal as in Theorem~\ref{the:grad}.\label{fig:nod} }
\end{figure}

\subsection{Analysis of the AM Algorithm}
\label{sec:aam}
The study of the total variation denoising algorithm is important because it represents the first step of the AM algorithm, when the blur is initialized with a Dirac delta.
In the next theorem we show how the exact AM algorithm can not leave the no-blur solution $(f, \delta)$ when $f$ is defined as in~\eqref{eq:deff} and the first step of the AM algorithm gives as a solution a sharp signal.

\begin{theorem}\label{the:am}
Let $f$, $u^0$ and $k^0$ be the same as in Proposition~\ref{proposition}. Then, for a $\lambda \in [\lambda_{min}^l,\lambda_{max}^l)$, $\lambda \in [\lambda_{min}^c,\lambda_{max}^c)$ or $\lambda \in [\lambda_{min}^r,\lambda_{max}^r)$  the AM algorithm converges to the solution $ k = \delta$.
\end{theorem}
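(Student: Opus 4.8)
Proof plan. The plan is to follow the AM iteration explicitly, starting from the no-blur initialization. Since $k^{0}=\delta$, the first $u$-update~\eqref{eq:am_u} coincides with the denoising problem~\eqref{eq:denos}, which in the $1$D setting of Proposition~\ref{proposition} is~\eqref{eq:rof0}; its minimizer is unique, because the functional is strictly convex in $u$, and by Proposition~\ref{proposition} it is the step function $\hat u$ of~\eqref{eq:u1},~\eqref{eq:u2} or~\eqref{eq:u3}, according to which of the three intervals $[\lambda_{min}^{l},\lambda_{max}^{l})$, $[\lambda_{min}^{c},\lambda_{max}^{c})$, $[\lambda_{min}^{r},\lambda_{max}^{r})$ contains $\lambda$. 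Denote by $\hat U_{1}<\hat U_{2}$ its two levels --- the inequality is strict because $\lambda$ lies strictly below the value ($\lambda_{max}^{l}$, $\lambda_{max}^{c}$ or $\lambda_{max}^{r}$) at which $\hat U_{2}-\hat U_{1}$ vanishes --- and let $a$ be the index immediately to the left of its jump, so that $\hat u[x]=\hat U_{1}$ for $x\le a$ and $\hat u[x]=\hat U_{2}$ for $x\ge a+1$; from~\eqref{eq:u1}--\eqref{eq:u3} one has $a=-2,-1,0$ respectively. It then suffices to prove that the second step~\eqref{eq:am_k}, evaluated at $u=\hat u$, returns $k=\delta$: once this is established, $(\hat u,\delta)$ is a fixed point of the AM map (the subsequent $u$-step is again the denoising problem~\eqref{eq:denos}, whose solution is $\hat u$, and so on), it is reached after a single full iteration from $(f,\delta)$, and the iterates remain there; in particular the algorithm converges to $k=\delta$.

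For the $k$-step the key observation is that $\hat u$ is constant on each side of a single jump, while $k$ is a $3$-tap kernel with $\|k\|_{1}=1$; hence $(k\circ\hat u)[x]=\hat U_{1}$ for every $x\le a-1$ and $(k\circ\hat u)[x]=\hat U_{2}$ for every $x\ge a+2$, independently of $k$. Parametrizing the admissible kernels by $\beta_{1}\doteq k[1]\ge0$, $\beta_{2}\doteq k[-1]\ge0$, $k[0]=1-\beta_{1}-\beta_{2}\ge0$ and using the convolution convention of~\eqref{eq:deff}, the only $k$-dependent residuals sit at the two transition pixels $a$ and $a+1$, where a short computation gives
\[
(k\circ\hat u)[a]=\hat U_{1}+\beta_{1}(\hat U_{2}-\hat U_{1}),\qquad (k\circ\hat u)[a+1]=\hat U_{2}-\beta_{2}(\hat U_{2}-\hat U_{1}).
\]
Thus the objective of~\eqref{eq:am_k} equals a constant in $k$ plus $\big((k\circ\hat u)[a]-f[a]\big)^{2}+\big((k\circ\hat u)[a+1]-f[a+1]\big)^{2}$, a sum of one term in $\beta_{1}$ only and one in $\beta_{2}$ only, to be minimized over $\beta_{1},\beta_{2}\ge0$, $\beta_{1}+\beta_{2}\le1$.

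The heart of the argument is then the pair of inequalities
\[
\hat U_{1}\ \ge\ f[a],\qquad\qquad \hat U_{2}\ \le\ f[a+1],
\]
i.e.\ the denoised step already over- and under-shoots $f$ at the two transition pixels. Granting them, the map $\beta_{1}\mapsto\big(\hat U_{1}+\beta_{1}(\hat U_{2}-\hat U_{1})-f[a]\big)^{2}$ has a nonnegative argument that is strictly increasing on $\beta_{1}\ge0$ (here $\hat U_{2}-\hat U_{1}>0$ enters), so it is uniquely minimized at $\beta_{1}=0$, and symmetrically the other term is uniquely minimized at $\beta_{2}=0$; hence $k=\delta$ is the unique solution of~\eqref{eq:am_k}, as required. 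Checking these two inequalities is a case-by-case computation from~\eqref{eq:deff} and Table~\ref{table:proposition}: for $\lambda\in[\lambda_{min}^{c},\lambda_{max}^{c})$ they read $\hat U_{1}\ge U_{1}+\delta_{1}(U_{2}-U_{1})$ and $\hat U_{2}\le U_{2}-\delta_{2}(U_{2}-U_{1})$, which reduce to $\lambda\ge(L_{1}-2)\delta_{1}(U_{2}-U_{1})$ and $\lambda\ge(L_{2}-1)\delta_{2}(U_{2}-U_{1})$ and are therefore implied by $\lambda\ge\lambda_{min}^{c}$; the case $\lambda\in[\lambda_{min}^{l},\lambda_{max}^{l})$ collapses in the same way to $\lambda\ge\lambda_{min}^{l}$, and the case $\lambda\in[\lambda_{min}^{r},\lambda_{max}^{r})$, by the analogous computation with $\delta_{1},L_{1}$ and $\delta_{2},L_{2}$ interchanged, to $\lambda\ge\lambda_{min}^{r}$.

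I expect this last regime-by-regime verification to be the only genuinely laborious step --- substituting the closed forms of Table~\ref{table:proposition} into $\hat U_{1}\ge f[a]$ and $\hat U_{2}\le f[a+1]$ and seeing that they collapse exactly to the stated lower bounds on $\lambda$; everything else is structural. Three minor points round out the proof: each AM subproblem has a unique minimizer (the $u$-step is strictly convex; the $k$-step objective is strictly convex in $(\beta_{1},\beta_{2})$ because $\hat U_{2}-\hat U_{1}>0$), so ``converges'' is unambiguous; the flat stretches of $\hat u$ flanking its jump, used in the splitting, are nonempty since $L_{1},L_{2}>2$; and the configurations of $(\delta_{1},\delta_{2})$ for which no admissible $\lambda$ exists --- the two lines singled out in Proposition~\ref{proposition} --- make the hypothesis vacuous and need not be treated.
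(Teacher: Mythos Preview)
Your proposal is correct and follows essentially the same route as the paper: parametrize the admissible $3$-tap kernel by $(\beta_{1},\beta_{2})$, observe that only the two transition pixels carry $k$-dependence, and reduce the $k$-step to the pair of overshoot/undershoot inequalities $\hat U_{1}\ge f[a]$, $\hat U_{2}\le f[a+1]$, which in each regime collapse exactly to $\lambda\ge\lambda_{\min}^{\bullet}$ from Table~\ref{table:proposition}. Your write-up is in fact slightly more complete than the paper's, since you make explicit the fixed-point argument (why reaching $k=\delta$ after one full sweep implies the iterates stay there) and the strict convexity giving uniqueness of each subproblem's minimizer.
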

\begin{proof}
See Section \ref{proof:the:am}.
\end{proof}

The result in Theorem~\ref{the:am} confirms what it has been concluded by Theorem~\ref{the:youngs}.  In the next section we show a variant of the AM algorithm, which is commonly used in many other blind deconvolution papers, that behave in a completely different way than the AM algorithm for the same class of signals of Proposition~\ref{proposition}. 

\subsection{The Projected Alternating Minimization (PAM) Algorithm}
\label{sec:pam}
Many methods in the literature minimize problem~\eqref{eq:dglob}  by using a variant of the AM algorithm. This variant consists in alternating between minimizing the unconstrained convex problem~\eqref{eq:am_u} in $u$ as in the AM algorithm, 
and an unconstrained convex problem in $k$
\begin{equation}\label{eq:pam_k}
\begin{array}{ll}
k^{t+1/3} \leftarrow &\displaystyle \arg\min_k  \displaystyle\sum_{x \in \mathcal{F}} ((k \circ u)[x] - f[x])^2,
\end{array}
\end{equation}
followed by two sequential projections where one applies the constraints on the blur $k$, \ie,
\begin{align}
k^{t+2/3} \leftarrow \max\{k^{t+1/3},0\},\quad~~
k^{t+1} \leftarrow  \frac{k^{t+2/3}}{\| k^{t+2/3}\|_1}.
\end{align}
We call this iterative procedure the PAM algorithm. The choice of imposing the constraints sequentially rather than during the gradient descent on $k$ seems a rather innocent and acceptable approximation of the correct procedure (AM). However, this is not the case and we will see that with this arrangement one can achieve the desired solution.

\begin{figure}[t]
\centering
\begin{minipage}[c]{.15\textwidth}
\centering
\includegraphics[width=\textwidth]{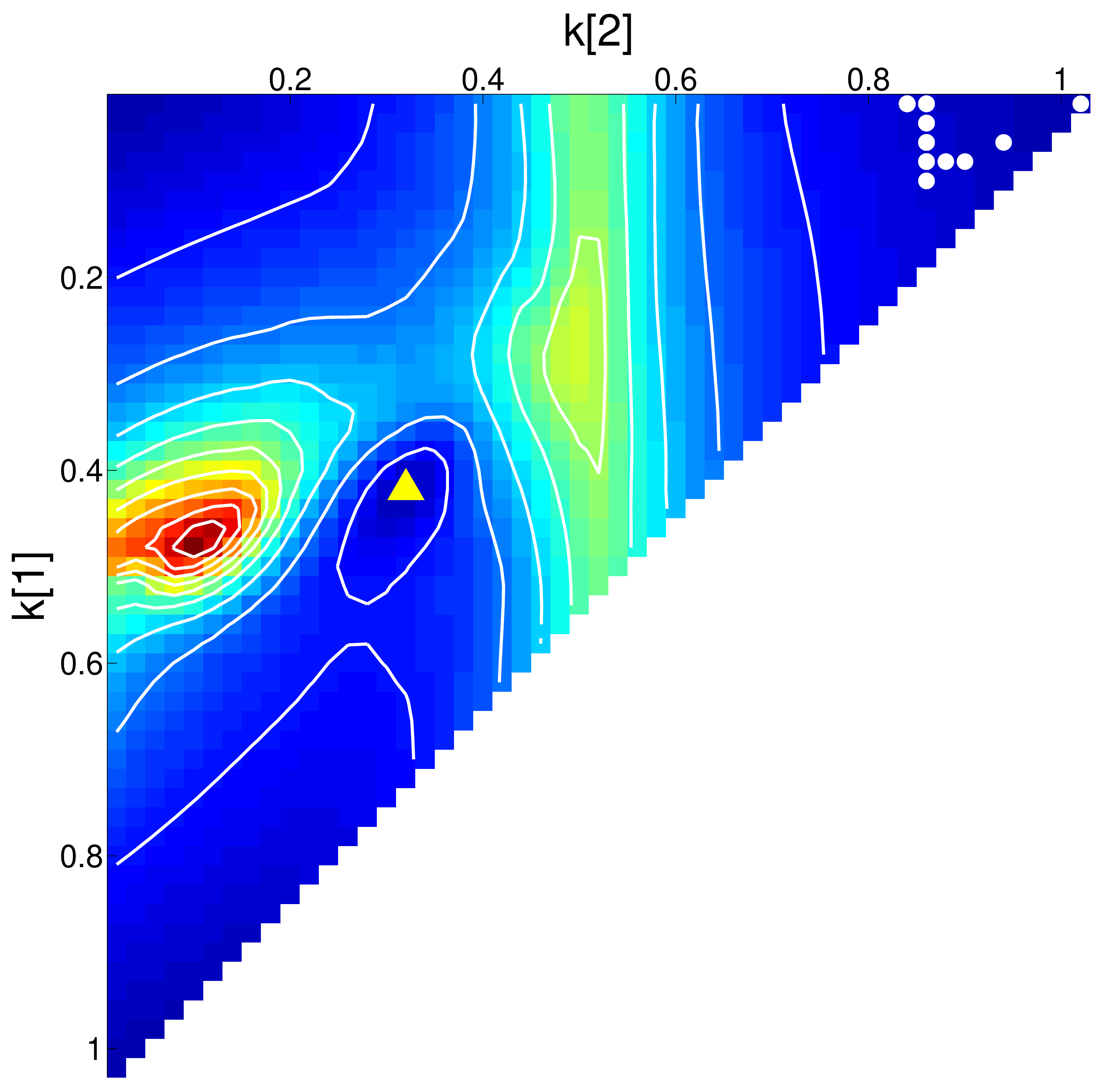}
\footnotesize{$\lambda = 0.0001$} 
\end{minipage}
\begin{minipage}[c]{.15\textwidth}
\centering
\includegraphics[width=\textwidth]{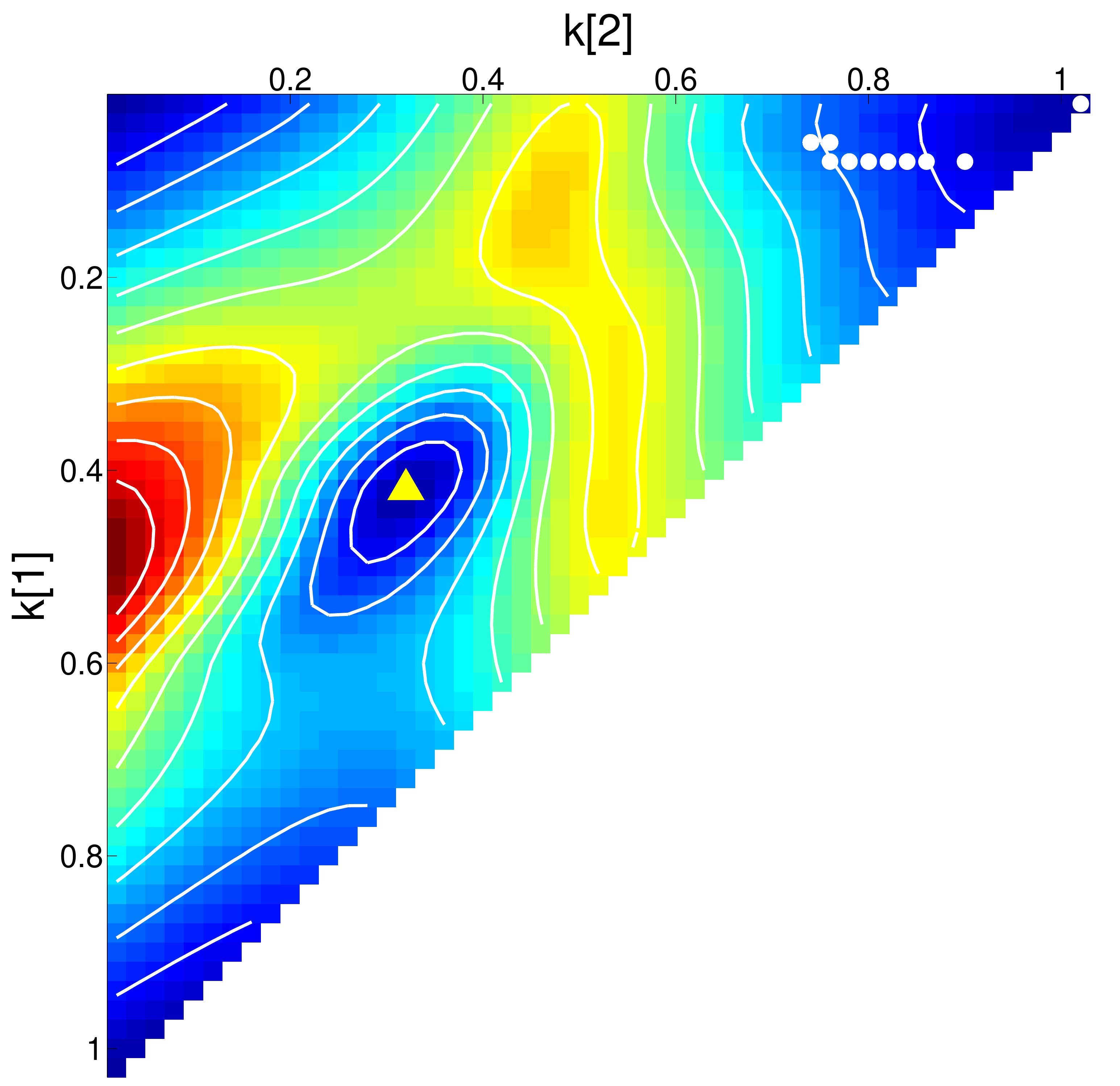}
\footnotesize{$\lambda = 0.001$}
\end{minipage}
\begin{minipage}[c]{.15\textwidth}
\centering
\includegraphics[width=\textwidth]{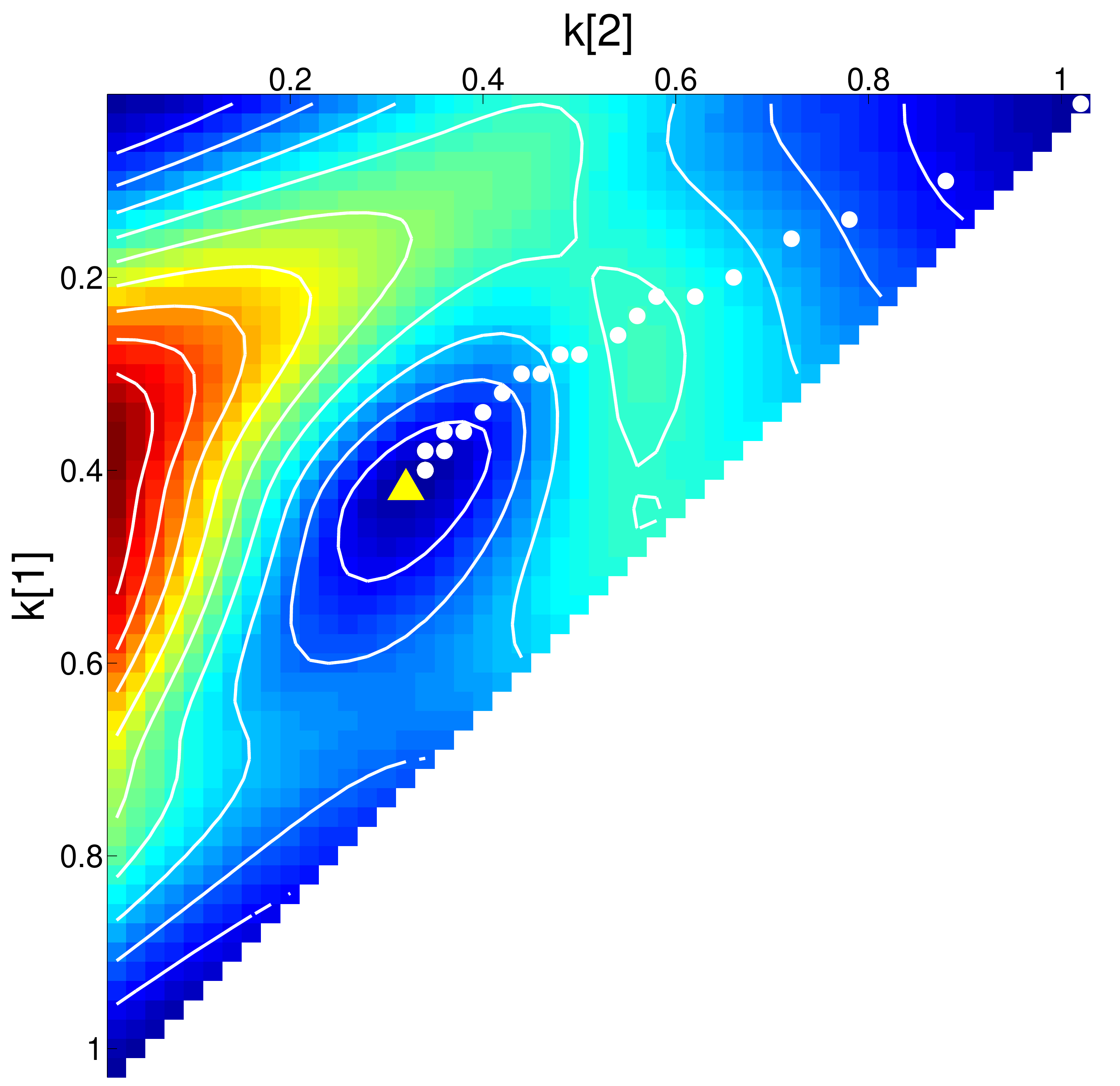}
\footnotesize{$\lambda = 0.01$}
\end{minipage}
\medskip
\caption{Illustration of Theorem~\ref{the:youngs} (best viewed in color). In this example we show a 1D experiment where we blur a step function with $k_0 = [0.4; 0.3; 0.3]$. We visualize the cost function of eq.~\eqref{eq:dglob} for three different values of the parameter $\lambda$. Since the blur integrates to $1$, only two of the three components are free to take values on a triangular domain (the upper-left triangle in each image). We denote with a yellow triangle the true blur $k_0$ and with white dots the intermediate blurs estimated during the minimization via the PAM algorithm. Blue pixels have lower values than the red pixels. Dirac delta blurs are located at the three corners of each triangle. At these locations, as well as at the true blur, there are local minima. Notice how the path of the estimated blur on the rightmost image ascends and then descends a hill in the cost functional.\label{fig:1dtoy} }
\end{figure}

\subsection{Analysis of the PAM Algorithm}
\label{sec:apam}
Our first claim is that this procedure does not minimize the original problem~\eqref{eq:glob}. To support this claim we start by showing some experimental evidence in Fig.~\ref{fig:1dtoy}. In this test we work on a 1D version of the problem. We blur a hat function with one blur of size $3$ pixels, and we show the minimum of eq.~\eqref{eq:dglob} for all possible feasible blurs. Since the blur has only $3$ nonnegative components and must add up to $1$, we only have $2$ free parameters bound between $0$ and $1$. Thus, we can produce a 2D plot of the minimum of the energy with respect to $u$ as a function of these two parameters.
The blue color denotes a small cost, while the red color denotes a large cost. The figures reveal three local minima at the corners, due to the $3$ different shifted versions of the no-blur solution, and the local minimum at the true solution ($k_0=[0.4,0.3,0.3]$) marked with a yellow triangle. We also show with white dots the path followed by $k$ estimated via the PAM algorithm by starting from one of the no-blur solutions (upper-right corner). Clearly one can see that the PAM algorithm does not follow a minimizing path in the space of solutions of problem~\eqref{eq:dglob}, and therefore does not minimize the same energy.

Furthermore, we show how the PAM algorithm succeeds in estimating the true blur in two steps whereas the AM algorithm estimates a Dirac delta.

\begin{theorem}\label{the:pam}
Let $f$, $u^0$ and $k^0$ be the same as in Proposition~\ref{proposition}. Let also constraint $u^0$ to be a zero-mean signal.

Then, if there exists a  $\lambda \in [\lambda_{min}^l,\lambda_{max}^l)$, $\lambda \in [\lambda_{min}^c,\lambda_{max}^c)$ or $\lambda \in [\lambda_{min}^r,\lambda_{max}^r)$  the PAM algorithm estimates the true blur $k = k_0$ (or a shifted version of it) in two steps, when starting from the no-blur solution pair $(f, \delta)$.
\end{theorem}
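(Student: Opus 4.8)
The plan is to exploit the ``scaling principle'' illustrated in Fig.~\ref{fig:1dtoyt}(c): the first PAM step returns a sharp but contrast‑reduced signal $\hat u$, while the second step --- an \emph{unconstrained} least‑squares fit of $k$ followed by the \emph{delayed} positivity projection and the $L_1$ renormalization --- reacts by inflating the kernel, so that after renormalization exactly $k_0$ is recovered. First I would note that, since the algorithm starts from $k^0=\delta$, the $u$‑update \eqref{eq:am_u} is exactly the $1$D total variation denoising problem \eqref{eq:rof0} with the $f$ of \eqref{eq:deff}; hence Proposition~\ref{proposition} applies, and for a $\lambda$ in the (assumed nonempty) admissible interval the minimizer $\hat u$ is the two‑level step function \eqref{eq:u1}, \eqref{eq:u2} or \eqref{eq:u3}, with levels $\hat U_1(\lambda),\hat U_2(\lambda)$ read off Table~\ref{table:proposition}.

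The second step is to show that $\hat u$ is, up to an integer translation $s\in\{-1,0,1\}$, a scalar multiple of $u_0$: with $\alpha\doteq(\hat U_2-\hat U_1)/(U_2-U_1)\in(0,1)$ one has $\hat u[x]=\alpha\,u_0[x-s]$, where $s=0$ in the central regime \eqref{eq:u2} and $s=\mp 1$ in the regimes \eqref{eq:u1}/\eqref{eq:u3} in which the single jump of $\hat u$ has moved one sample off that of $u_0$. The zero‑mean hypothesis on $u_0$ is used precisely here: it pins down the remaining degree of freedom and makes the levels of $\hat u$ line up as a clean rescaling of $u_0$ rather than an affine image of it; it also yields $U_1<0<U_2$, hence $\hat U_1\neq0$, a fact needed below.

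Then I would analyze the $k$‑step. Writing $u_0=\alpha^{-1}\hat u$ on the relevant support (up to the translation $s$) and using $f=k_0\circ u_0$ together with the translation covariance of the valid convolution, one gets $f=\alpha^{-1}\,k_0^{(s)}\circ\hat u$, so the inflated, translated kernel $\alpha^{-1}k_0^{(s)}$ makes the data term of \eqref{eq:pam_k} vanish. Uniqueness of this minimizer reduces to a full‑column‑rank check on the Toeplitz matrix whose rows are the length‑$3$ windows of $\hat u$: among them are $(\hat U_1,\hat U_1,\hat U_1)$, $(\hat U_1,\hat U_1,\hat U_2)$, $(\hat U_1,\hat U_2,\hat U_2)$, with determinant $-\hat U_1(\hat U_2-\hat U_1)^2\neq0$. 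Hence $k^{t+1/3}=\alpha^{-1}k_0^{(s)}$; the positivity projection is vacuous since $\alpha>0$ and $k_0\succcurlyeq0$, so $k^{t+2/3}=k^{t+1/3}$; and the normalization divides by $\|\alpha^{-1}k_0^{(s)}\|_1=\alpha^{-1}$, giving $k^{t+1}=k_0^{(s)}$, i.e.\ $k_0$ up to a shift. This is the exact counterpart --- and contrast --- to Theorem~\ref{the:am}: in the AM $k$‑step \eqref{eq:am_k} the constraint $\|k\|_1=1$ rules out the inflated kernel $\alpha^{-1}k_0$ (whose $L_1$ norm is $\alpha^{-1}>1$), so the best feasible kernel collapses to $\delta$.

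The hard part, I expect, is the middle step: proving that $\hat u$ is \emph{exactly} a translated rescaling of $u_0$ --- so that $f=\alpha^{-1}k_0^{(s)}\circ\hat u$ holds with zero residual --- and doing this uniformly across the three $\lambda$‑regimes of Proposition~\ref{proposition}, with careful bookkeeping of the supports, of the induced shift $s$, and of exactly how the zero‑mean constraint removes the affine ambiguity. Once that is settled, the rank computation and the effect of the two delayed projections are routine.
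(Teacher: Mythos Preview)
Your proposal is correct and follows essentially the same route as the paper: invoke Proposition~\ref{proposition} to obtain $\hat u$ as a two-level step, use the zero-mean hypothesis (together with mean preservation of TV denoising and of convolution with a unit-mass kernel) to write $\hat u=\alpha\,u_0[\cdot-s]$ for $s\in\{-1,0,1\}$, and then show that the unconstrained least-squares $k$-step returns $\alpha^{-1}k_0^{(s)}$, which the delayed normalization turns into $k_0^{(s)}$. Your explicit Toeplitz rank argument for uniqueness of the least-squares minimizer and your check that the positivity projection is vacuous are details the paper leaves implicit.
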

\begin{proof}
See Section~\ref{proof:the:pam}.
\end{proof}

\begin{figure}[t]
\centering
\begin{minipage}[c]{.145\textwidth}
\centering
\includegraphics[width=\textwidth]{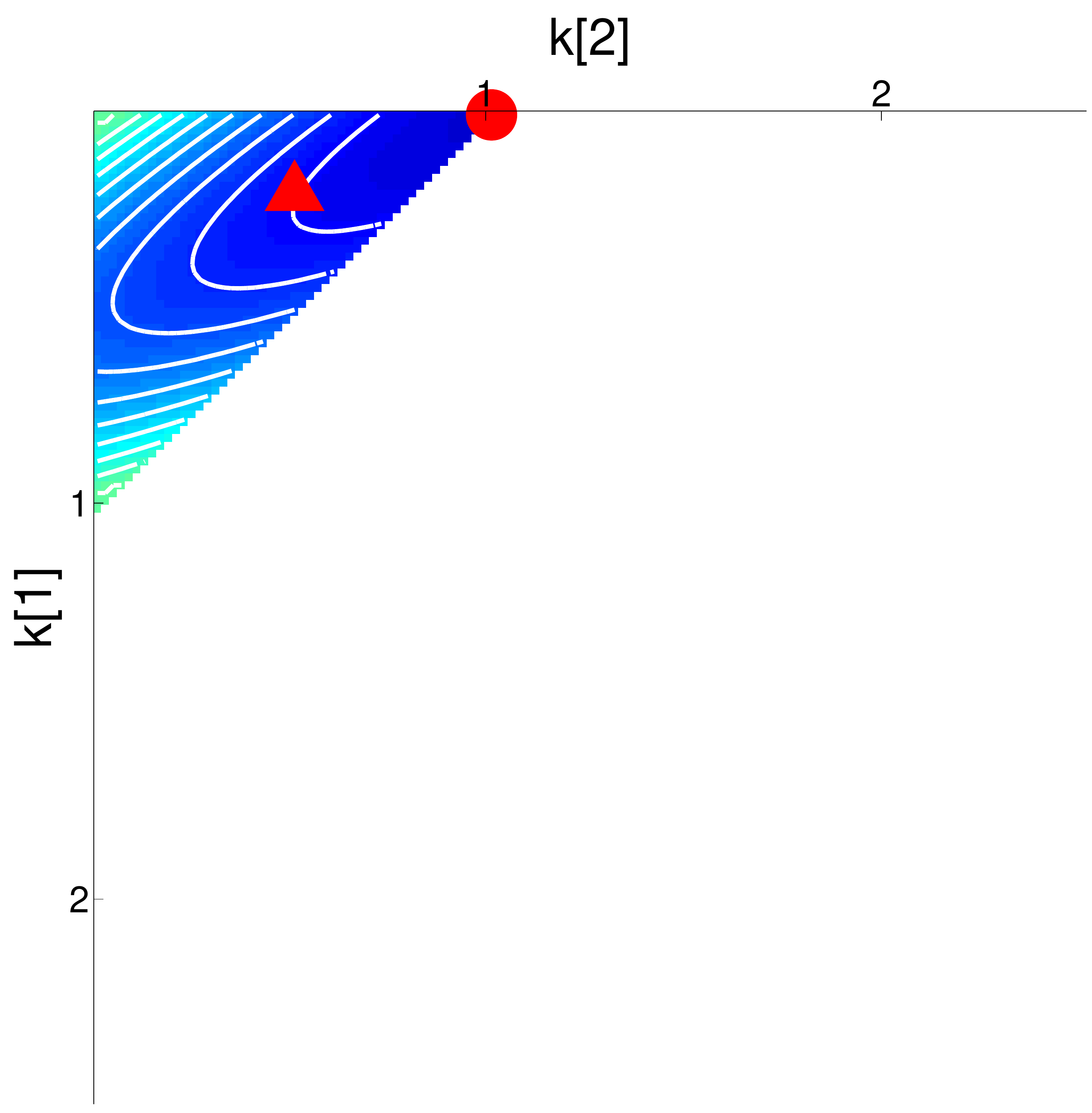} 
\end{minipage}
\begin{minipage}[c]{.145\textwidth}
\centering
\includegraphics[width=\textwidth]{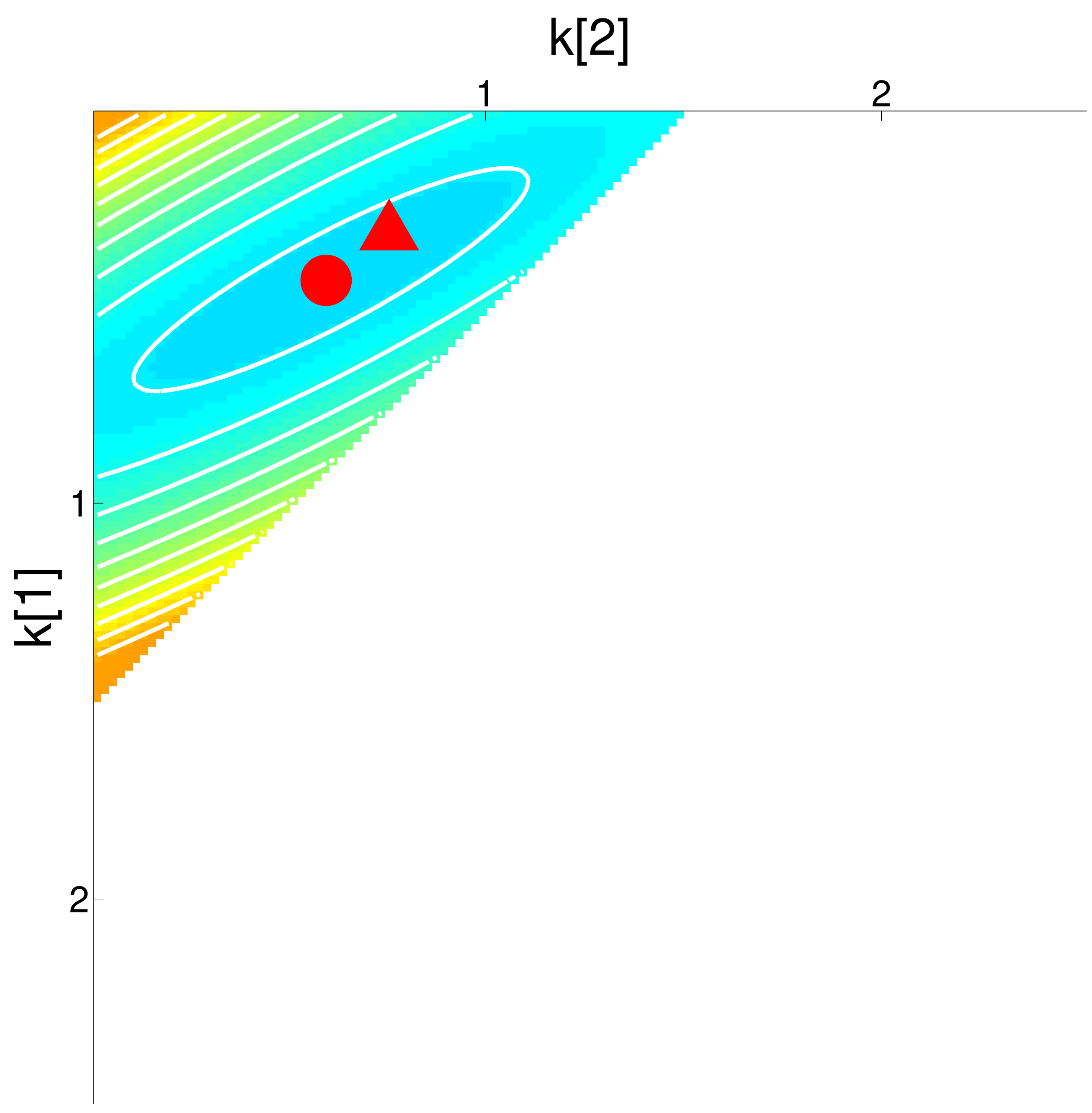}
\end{minipage}
\begin{minipage}[c]{.145\textwidth}
\centering
\includegraphics[width=\textwidth]{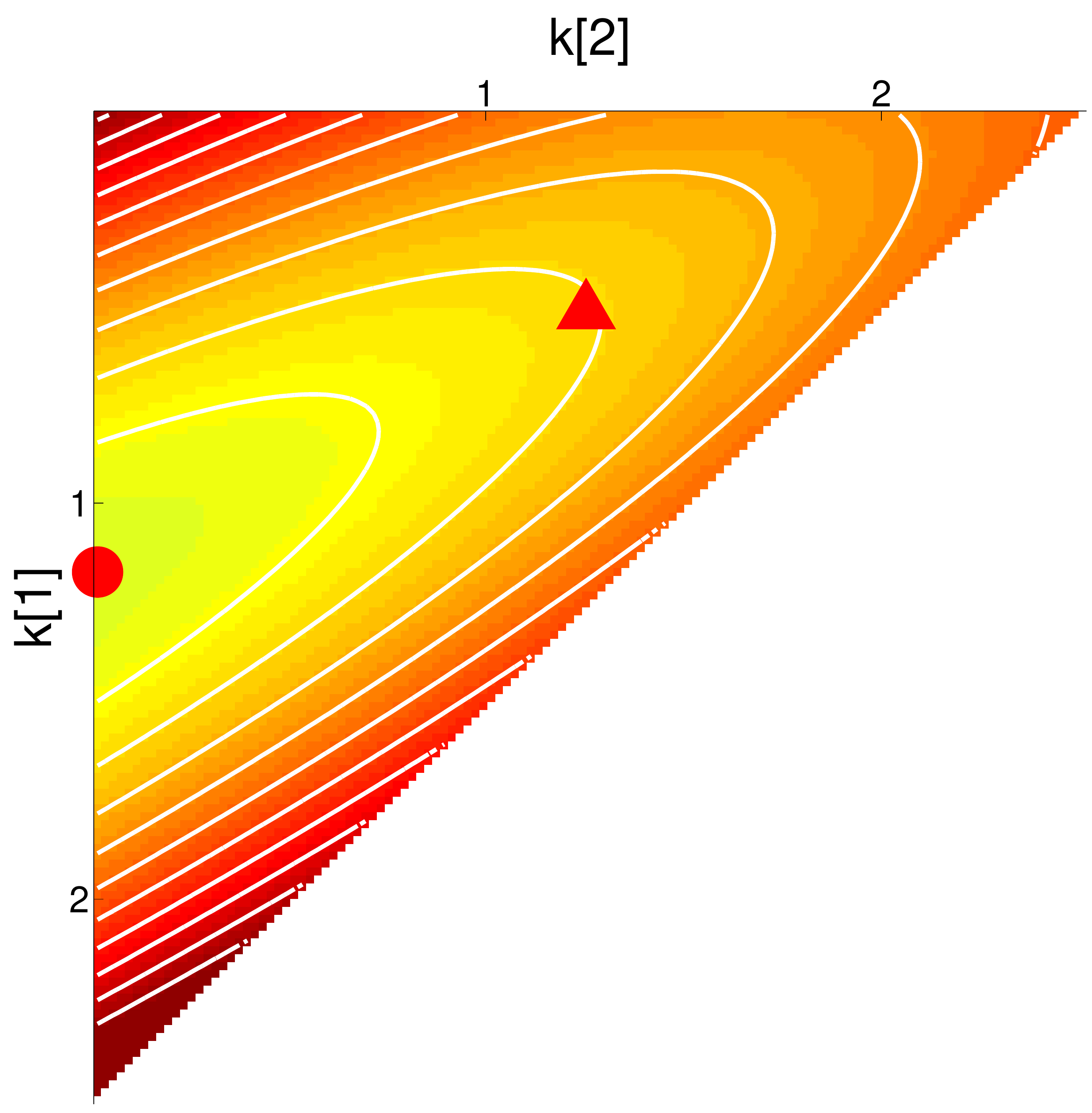}
\end{minipage}
\begin{minipage}[c]{.01\textwidth}
\centering
\footnotesize{\begin{turn}{90}$\lambda = 0.1$\end{turn}}
\end{minipage}
\centering
\begin{minipage}[c]{.145\textwidth}
\centering
\includegraphics[width=\textwidth]{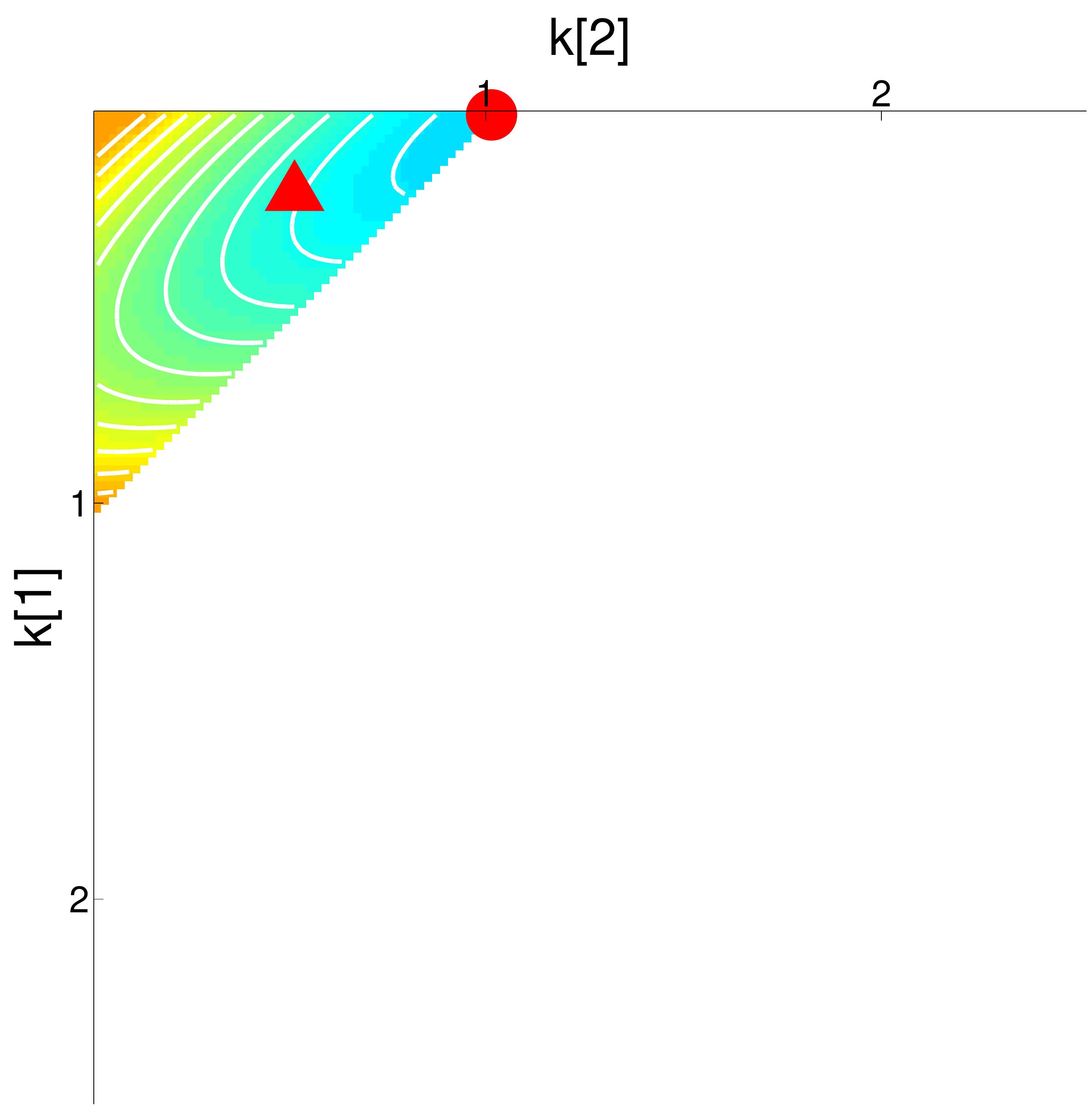} 
\end{minipage}
\begin{minipage}[c]{.145\textwidth}
\centering
\includegraphics[width=\textwidth]{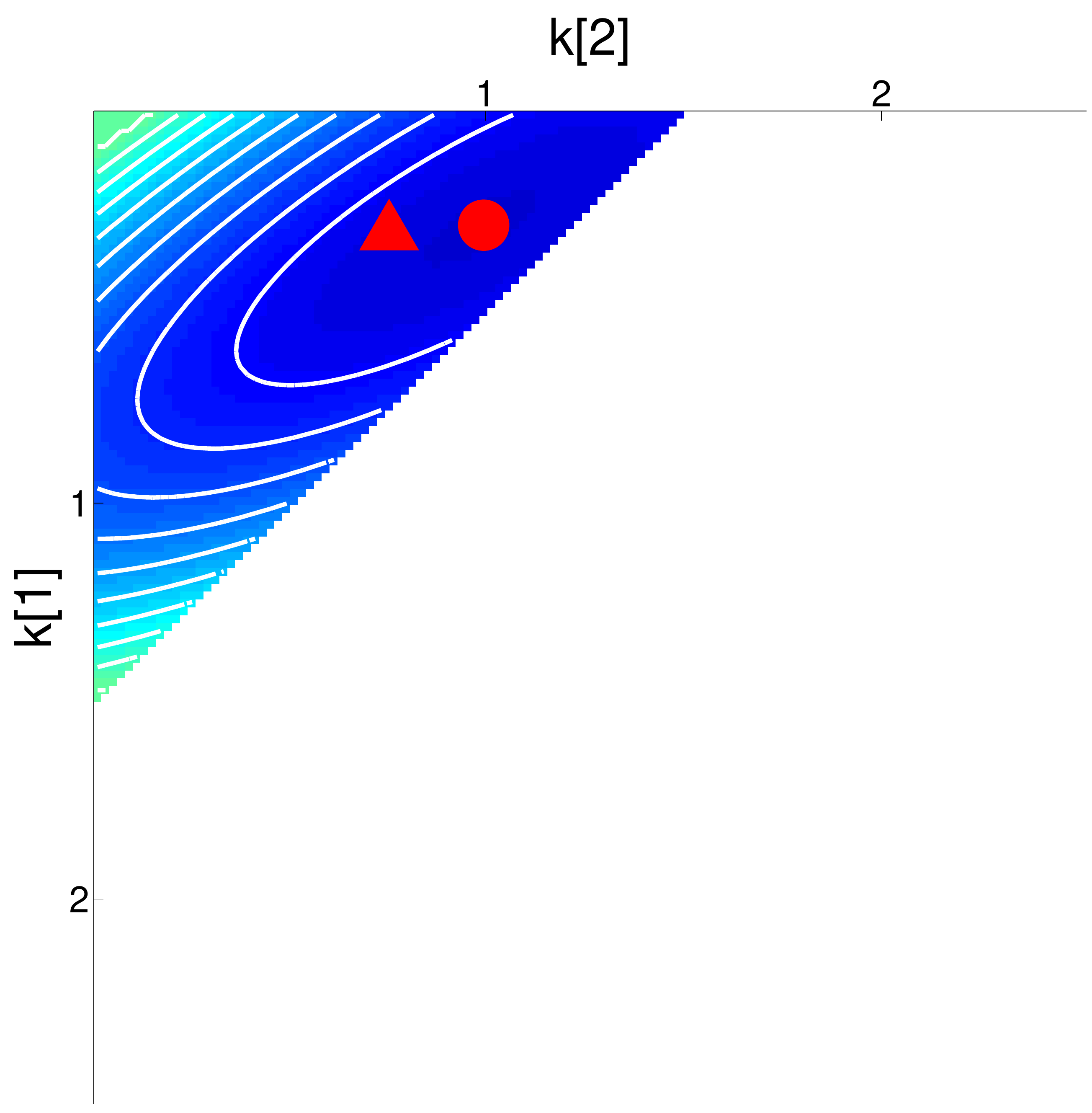}
\end{minipage}
\begin{minipage}[c]{.145\textwidth}
\centering
\includegraphics[width=\textwidth]{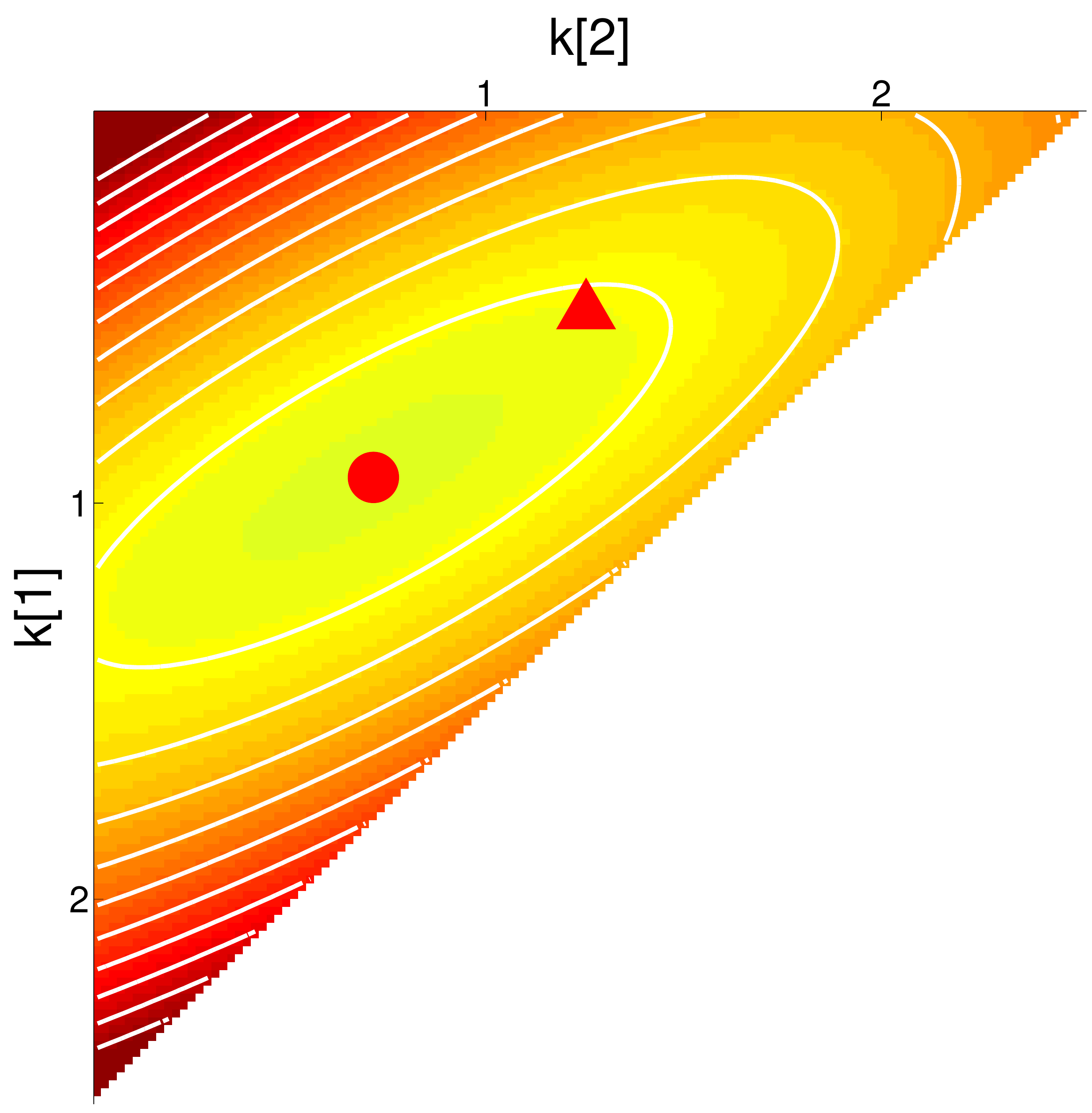}
\end{minipage}
\begin{minipage}[c]{.01\textwidth}
\centering
\footnotesize{\begin{turn}{90}$\lambda = 1.5$\end{turn}}
\end{minipage}
\begin{minipage}[c]{.145\textwidth}
\centering
\includegraphics[width=\textwidth]{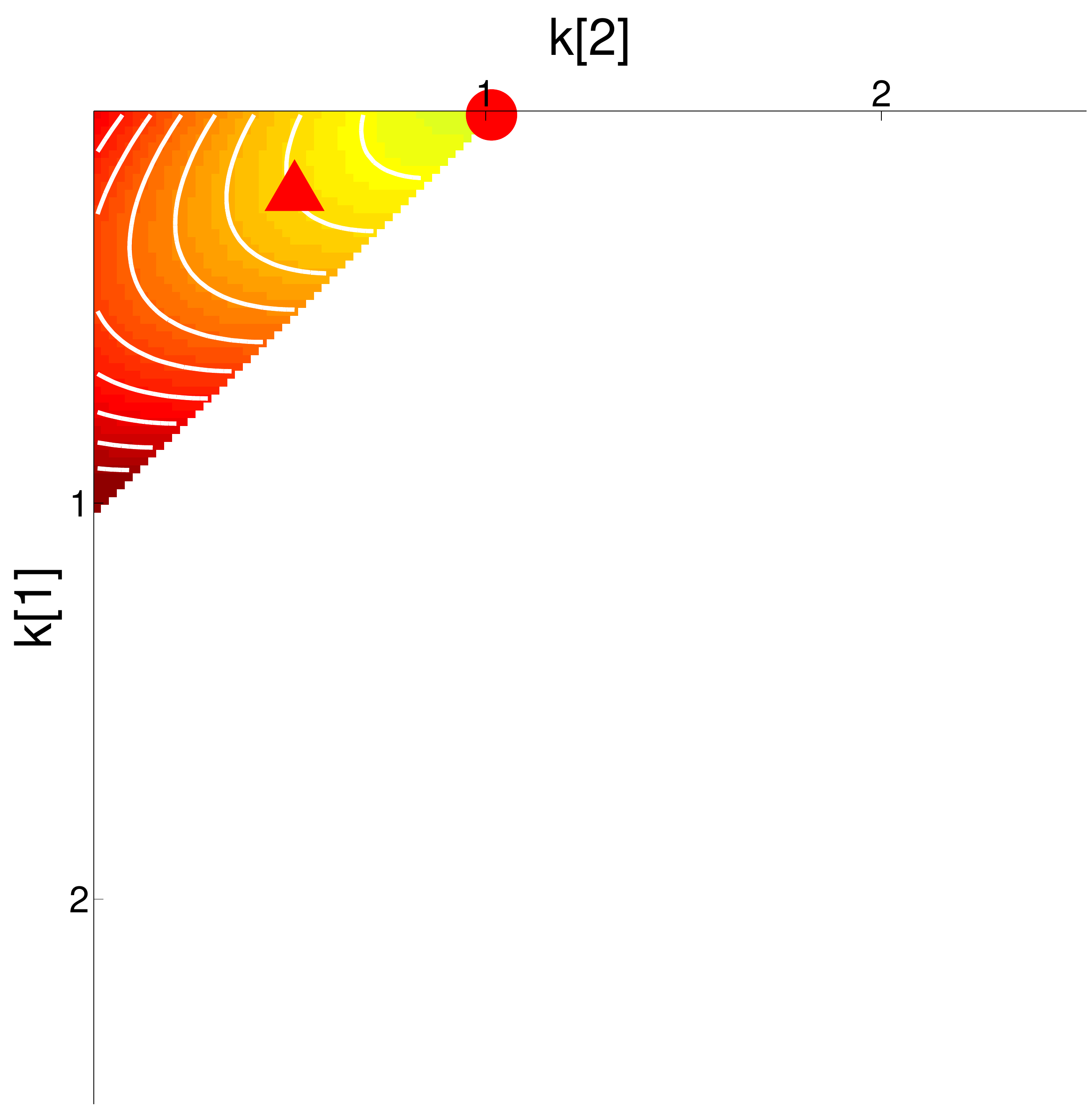} 
\footnotesize{$||k||_1 = 1$}
\end{minipage}
\begin{minipage}[c]{.145\textwidth}
\centering
\includegraphics[width=\textwidth]{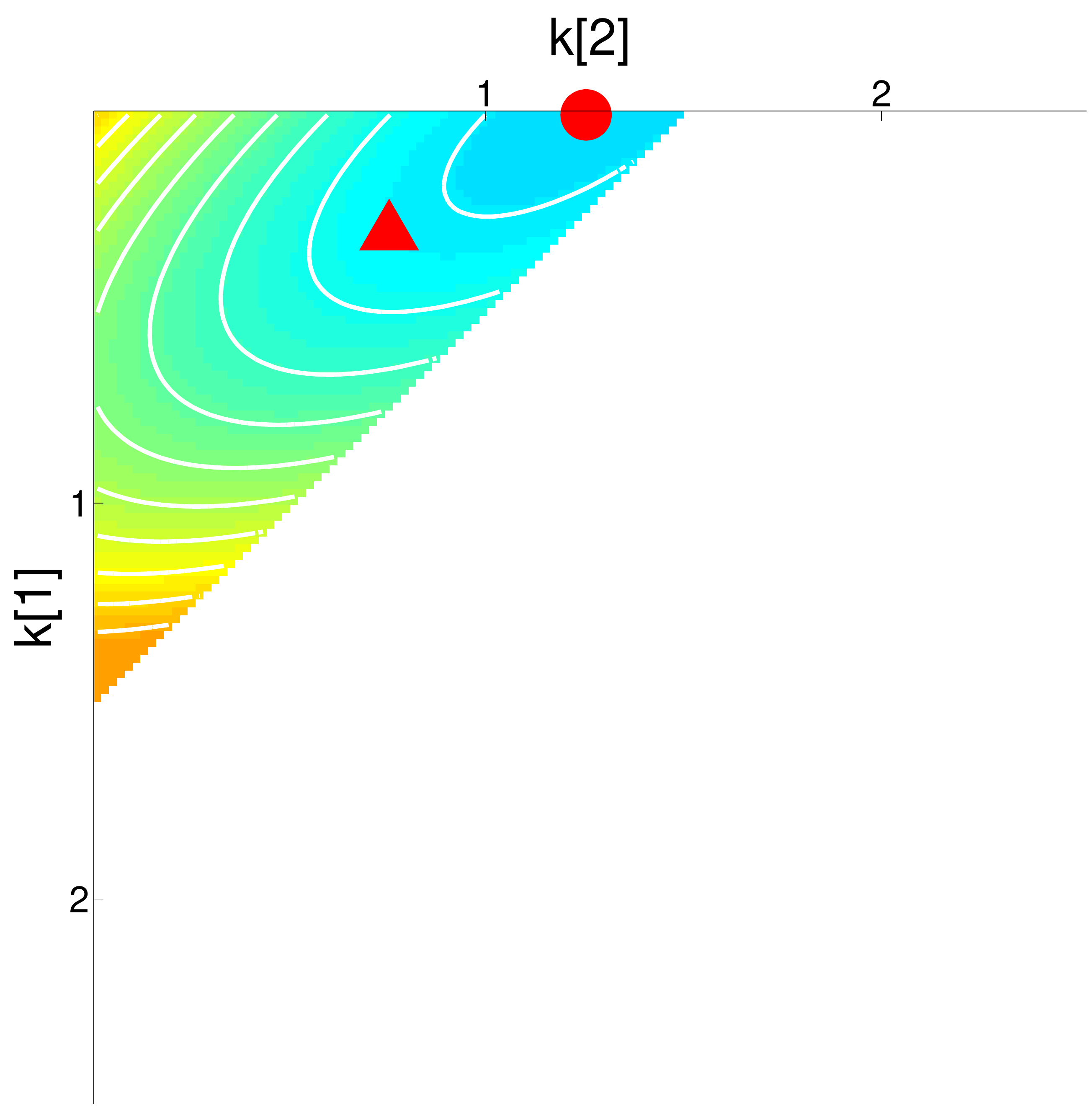}
\footnotesize{$||k||_1 = 1.5$}
\end{minipage}
\begin{minipage}[c]{.145\textwidth}
\centering
\includegraphics[width=\textwidth]{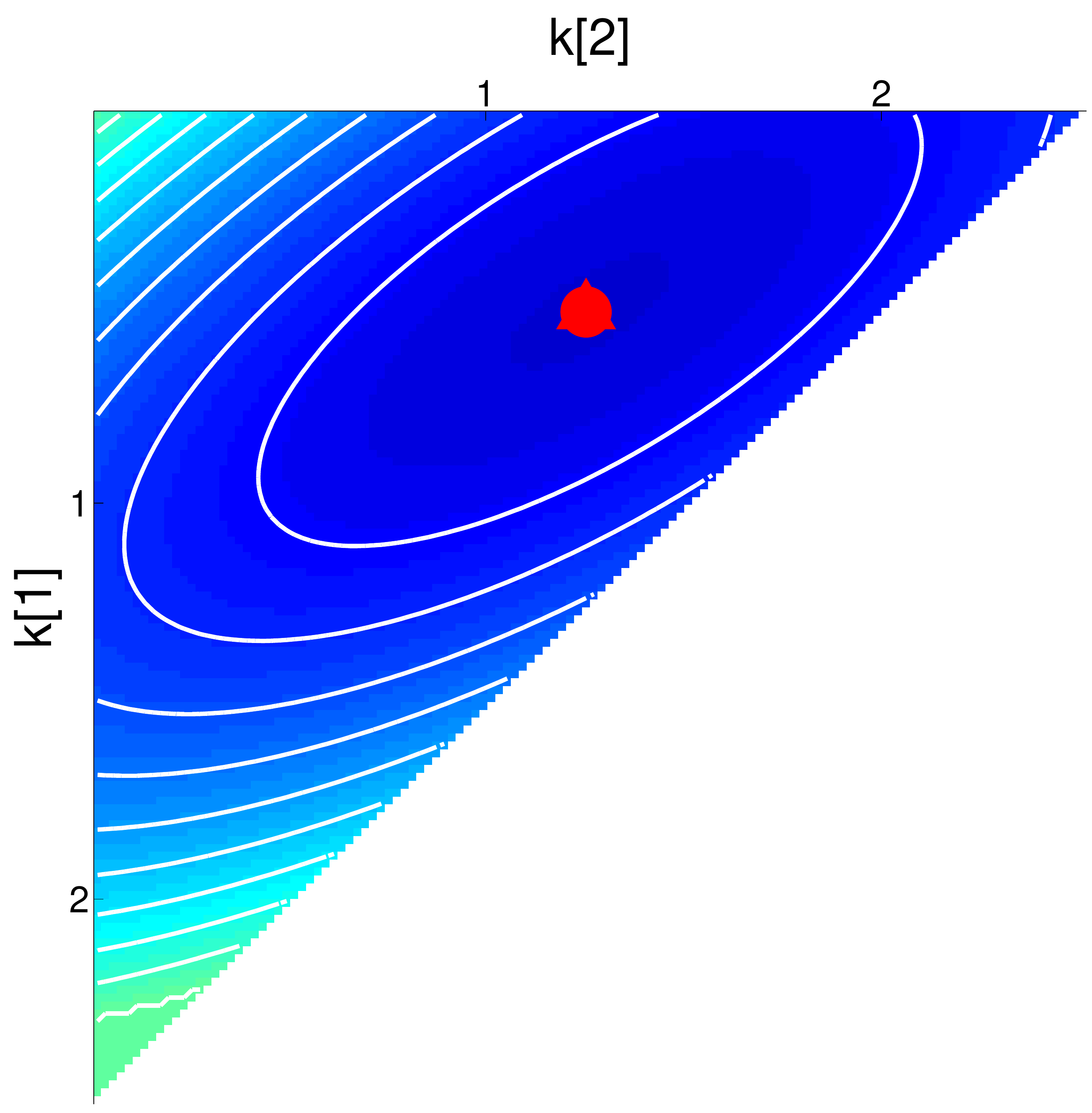}
\footnotesize{$||k||_1 = 2.5$}
\end{minipage}
\begin{minipage}[c]{.01\textwidth}
\centering
\footnotesize{\begin{turn}{90}$\lambda = 2.5$\end{turn}}
\end{minipage}
\medskip
\caption{Illustration of Theorem~\ref{the:pam} (best viewed in color). Each row represents the visualization of the cost function for a particular value of the parameter $\lambda$. Each column shows the cost function for three different blur normalizations: $||k||_1 = 1,~1.5,~\mbox{and}~2.5$.  We denote the scaled true blur $k_0=[0.2,~0.5,~0.3]$ (with $||k||_1 = 1$) with a red triangle and with a red dot the cost function minimum. The color coding is such that: blue $<$ yellow $<$ red; each row shares the same color coding for cross comparison. 
\label{fig:norms} }
\end{figure}

\subsection{Discussion}
\label{sec:dis}
Theorems~\eqref{the:pam} and~\eqref{the:am} show that with 1D zero-mean step signals and no-blur initialization, for some values of $\lambda$ PAM converges to the correct blur (and only in $2$ steps) while AM does not. We used a step signal to keep the discussion simple and because it is fundamental to illustrate the behavior of both algorithms at edges in a 2D image. However, as already mentioned in sec.~\ref{sec:ex}, extensions of the above results to blurs with a larger support and beyond step functions are possible by dividing the signal in many step functions and by a careful treatment of the boundaries.

In Fig.~\ref{fig:norms} we illustrate two further aspects of Theorem~\ref{the:pam} (it is recommended that images are viewed in color): 1) the advantage of a non unitary normalization of $k$ during the optimization step (which is a key feature of PAM) and 2) the need for a sufficiently large regularization parameter $\lambda$. In the top row images we set $\lambda=0.1$. Then, we show the cost $\|k \ast u^1 - f\|_2^2$, with $u^1 = \arg\min_u \|u-f\|_2^2+\lambda J(u)$, for all possible 1D blurs $k$ with a $3$-pixel support under the constraints $\|k\|_1 = 1,~1.5,~2.5$ respectively. This is the cost that PAM minimizes at the second step when initialized with $k = \delta$. Because $k$ has three components and we fix its normalization, we can illustrate the cost as a 2D function of $k[1]$ and $k[2]$ as in Fig.~\ref{fig:1dtoy}. However, as the normalization of $k$ grows, the triangular domain of $k[1]$ and $k[2]$ increases as well. Since the optimization of the blur $k$ is unconstrained, the optimal solution will be searched both within the domain and across normalization factors. Thanks to the color coding scheme, one can immediately see that the case of $\|k\|=1$ achieves the global minimum, and hence the solution is the Dirac delta.
However, as we set $\lambda=1.5$ in the second row or $\lambda=2.5$ in the bottom row, 
we can see a shift of the optimal value for non unitary blur normalization values and also for a shift of the global minimum to the desired solution (bottom-right plot).

\section{Implementation}
In Algorithm~\ref{algo} we show the pseudo-code of our adaptation of TVBD. At each iteration we perform just one gradient descent step on $u$ and on $k$ as proposed by Chan and Wong~\cite{Chan1998}  because we experimentally noticed that this speeds up the convergence of the algorithm.
The gradient descent results in the following update for the sharp image $u$ at the $t$-th iteration
\begin{align}\label{eq:steps}
u^{t+1} \!\leftarrow\! u^t - \epsilon_u \left(k^{t}_- \bullet (k^{t} \circ u^{t} - f) - \lambda \nabla \cdot \frac{\nabla u^{t}}{|\nabla u^{t}|}\right)
\end{align}
for some step $\epsilon>0$ where $k_-(\mathbf{x}) = k(-\mathbf{x})$ and $\bullet$ denotes the discrete convolution operator where the result is the full convolution region, \ie, if $f = u \bullet k$, with $k \in \mathbf{R}^{h \times w}$, $u \in \mathbf{R}^{m \times n}$, then we have $f \in \mathbf{R}^{(m + h - 1) \times (n + w -1)}$ with zero padding as boundary condition. 
The iteration on the blur kernel $k$ is instead given by
\begin{align}
k^{t+1/3} \leftarrow k^t - \epsilon_k \left(u^{t+1}_- \circ  (k^{t} \circ u^{t+1} - f) \right ).
\end{align}

\begin{algorithm}[t!]
\caption{Blind Deconvolution Algorithm \label{algo}}
\small{
 \SetAlgoLined
 \KwData{$f$, size of blur, initial large $\lambda$, final $\lambda_{min}$}
 \KwResult{$u$,$k$ }
 $u_0 \leftarrow \mbox{pad}(f)$\;
 $k_0 \leftarrow \mbox{uniform}$\;
 \While{not converged}{
 $u^{t+1} \!\leftarrow\! u^t - \epsilon_u \left(k^{t}_- \bullet (k^{t} \circ u^{t} - f) - \lambda \nabla \cdot \frac{\nabla u^{t}}{|\nabla u^{t}|}\right)$\;
 $k^{t+1/3} \leftarrow k^t - \epsilon_k \left(u^{t+1}_- \circ  (k^{t} \circ u^{t+1} - f) \right ) $\;
 $k^{t+2/3} \leftarrow  \max\{k^{t+1/3},0\}$\;
 $k^{t+1} \leftarrow \frac{k^{t+2/3}}{\| k^{t+2/3}\|_1}$\;
 $\lambda \leftarrow \max\{0.99\lambda,\lambda_{min}\}$\;
 $t \leftarrow t + 1$\;
  }
  $u \leftarrow u^{t+1}$\;
  $k \leftarrow k^{t+1}$\;
}
\end{algorithm}
From Theorem~\eqref{the:pam} we know that a big value for the parameter $\lambda$ helps avoiding the no-blur solution, but in practice it also makes the estimated sharp image $u$ too ``cartooned''. We found that iteratively reducing the value of $\lambda$ as specified in Algorithm~\ref{algo} helps getting closer to the true solution. 

In the following paragraphs we highlight some other important features of Algorithm~\ref{algo}.\\
\textbf{Pyramid scheme.} 
While all the theory holds at the original input image size, to speed up the algorithm we also make use of a pyramid scheme, where we scale down the blurry image and the blur size until the latter is $3 \times 3$ pixels. We then launch our deblurring algorithm from the lowest scale, then upsample the results and use them as initialization for the following scale. This procedure provides a significant speed up of the algorithm. On the smallest scale, we initialize our optimization from a uniform blur.\\
\textbf{Color images.}
For the blur estimation many methods first convert color images to grayscale. In contrast with this common practice, we extended our algorithm to work directly with color images. Recently, many papers have proposed algorithms for color denoising using TV regularization~\cite{Saito2011,Goldlucke2010,Blomgren1996}. In our implementation we use the method of Blomgren and Chan~\cite{Blomgren1996}, and all the results on color images that we show in this papers are obtained by solving the blind-deconvolution problem on the original color space.

\section{Experiments}
We evaluated the performance of our algorithm on several images and compared with state-of-the-art algorithms. We provide our unoptimized Matlab implementation on our website\footnote{http://www.cvg.unibe.ch/dperrone/tvdb/}. Our blind deconvolution implementation processes images of $255\times 225$ pixels with blurs of about $20\times20$ pixels in around 2-5 minutes, while our non-blind deconvolution algorithm takes about $10-30$ seconds.

In our experiments we used the dataset from~\cite{Levin2011Understanding} in the same manner illustrated by the authors. For the whole test we used $\lambda_{min} =0.0006$. We used the non-blind deconvolution algorithm from~\cite{Levin2007} with $\lambda = 0.0068$ and for each method we compute the cumulative histogram of the deconvolution error ratio across test examples, where the $i$-th bin counts the percentage of images for which the error ratio is smaller than $i$. The deconvolution error ratio, as defined in~\cite{Levin2011Understanding}, measures the ratio between the SSD deconvolution error with the estimated and correct kernels. In Fig.~\ref{fig:ssd} we show the cumulative histogram of the error ratio for Algorithm~\ref{algo}, the algorithm of Levin \etal~\cite{Levin2007}, the algorithm of Fergus \etal~\cite{Fergus2006}, the algorithm of Babacan \etal~\cite{Babacan2012}, the algorithm of Wipf and Zhang~\cite{Wipf2013} and the one of Cho and Lee~\cite{Cho2009}. Algorithm~\ref{algo} performs on par with the one from Levin \etal~\cite{Levin2007}, with a slightly higher number of restored images with small error ratio. 
 
In Fig.~\ref{fig:jia} we show a comparison between our method and the one proposed by Xu and Jia~\cite{Xu2010}. Their algorithm is able to restore sharp images even when the blur size is large by using an edge selection scheme that selects only large edges. This behavior is automatically minimicked by Algorithm~\ref{algo} thanks to the TV prior. Also, in the presence of noise, Algorithm~\ref{algo} performs visually on a par with the state-of-the-art algorithms as shown in Fig.~\ref{fig:zhong}. 

Recently K{\"o}hler \etal~\cite{Kohler2012} introduced a dataset of images blurred by camera shake blur. Even if this kind of artifact produces space-varying blur, many algorithms that assume shift-invariant blur perform well on many images of the dataset. In Fig.~\ref{Kohler} we show an example from the dataset of~\cite{Kohler2012} where our algorithm is as robust to camera shake blur as other shift-invariant algorithms.

For testing purposes we also adapted our algorithm to support different boundary conditions by substituting the convolution operator described in~\eqref{eq:vconv}  with the one in~\eqref{eq:conv}. 

In  Fig.~\ref{fig:ssdBound} we show a comparison on the dataset of~\cite{Levin2011Understanding} between our original approach and the adaptations with different boundary conditions. For each boundary condition we evaluated the ratios between the SSD deconvolution errors of the estimated and correct kernels. The implementations with the different boundary conditions perform worse than our free-boundary implementation, even if pre-processing the blurry image with the method of Liu and Jia \cite{Liu2008} considerably improves the performance of the \emph{periodic} boundary condition.

Recent algorithms estimate the blur by using filtered versions of $u$ and $f$ in the data fitting term (typically the gradients or the Laplacian) . This choice might improve the estimation of the blur because it reduces the error at the boundary when using any of the previous approximations, but it might result also in a larger sensitivity to noise. In Fig.~\ref{fig:ssdBoundFilt} we show how with the use of the filtered images for the blur estimation the performance of the \emph{periodic} and \emph{replicate} boundary conditions improves, while the others get slightly worse. Notice that our implementation still achieves better results than other boundary assumptions.

\begin{figure*}[t]
\centering 
\begin{minipage}[t]{0.29\linewidth}
\centering
\includegraphics[width=\textwidth,height=4cm]{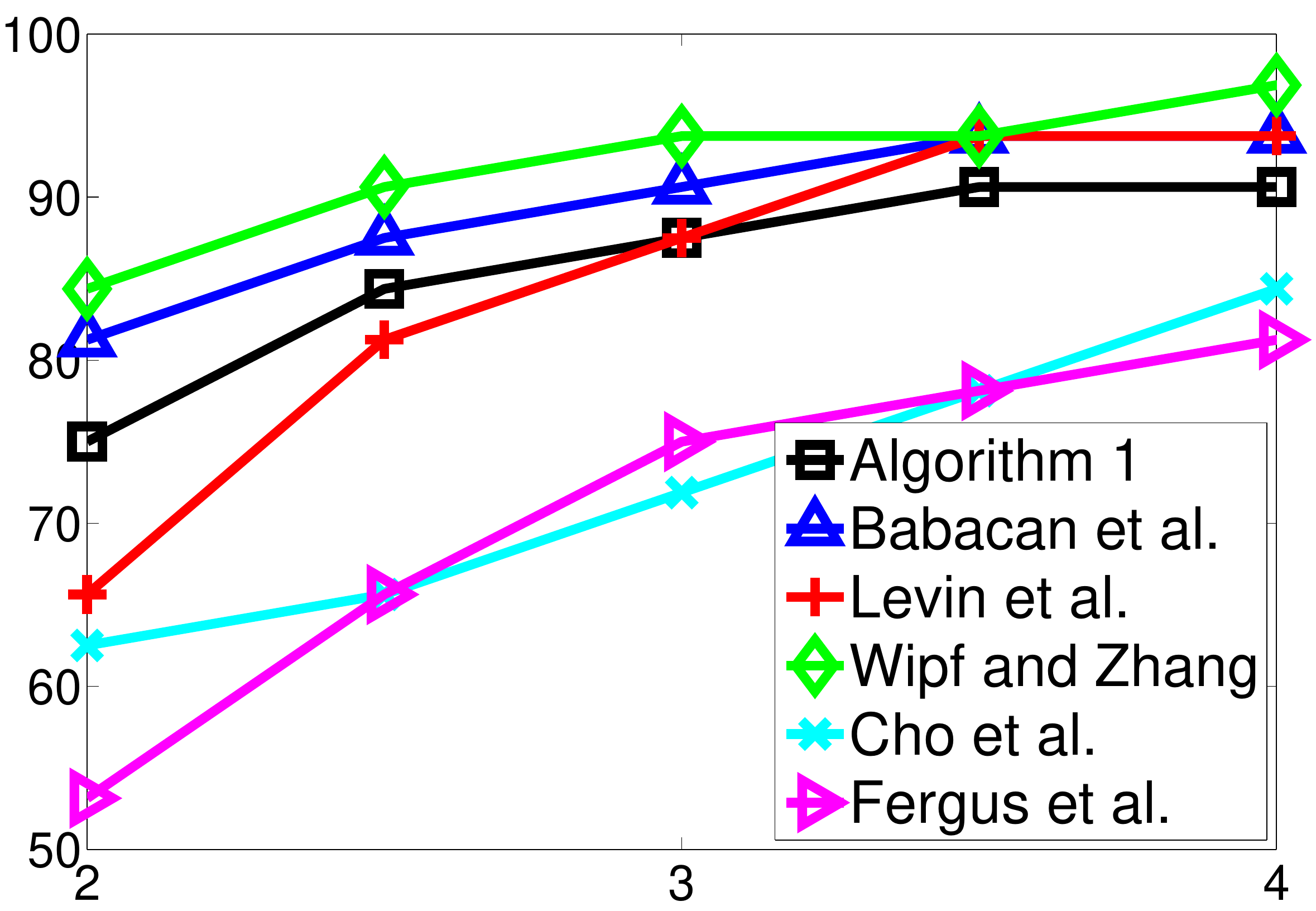}
\caption{Comparison between Algorithm~\ref{algo} and recent state-of-the-art algorithms.}
\label{fig:ssd}
\end{minipage}
\hspace{0.05cm}
\begin{minipage}[t]{0.29\linewidth}
\centering
\includegraphics[width=\textwidth,height=4cm]{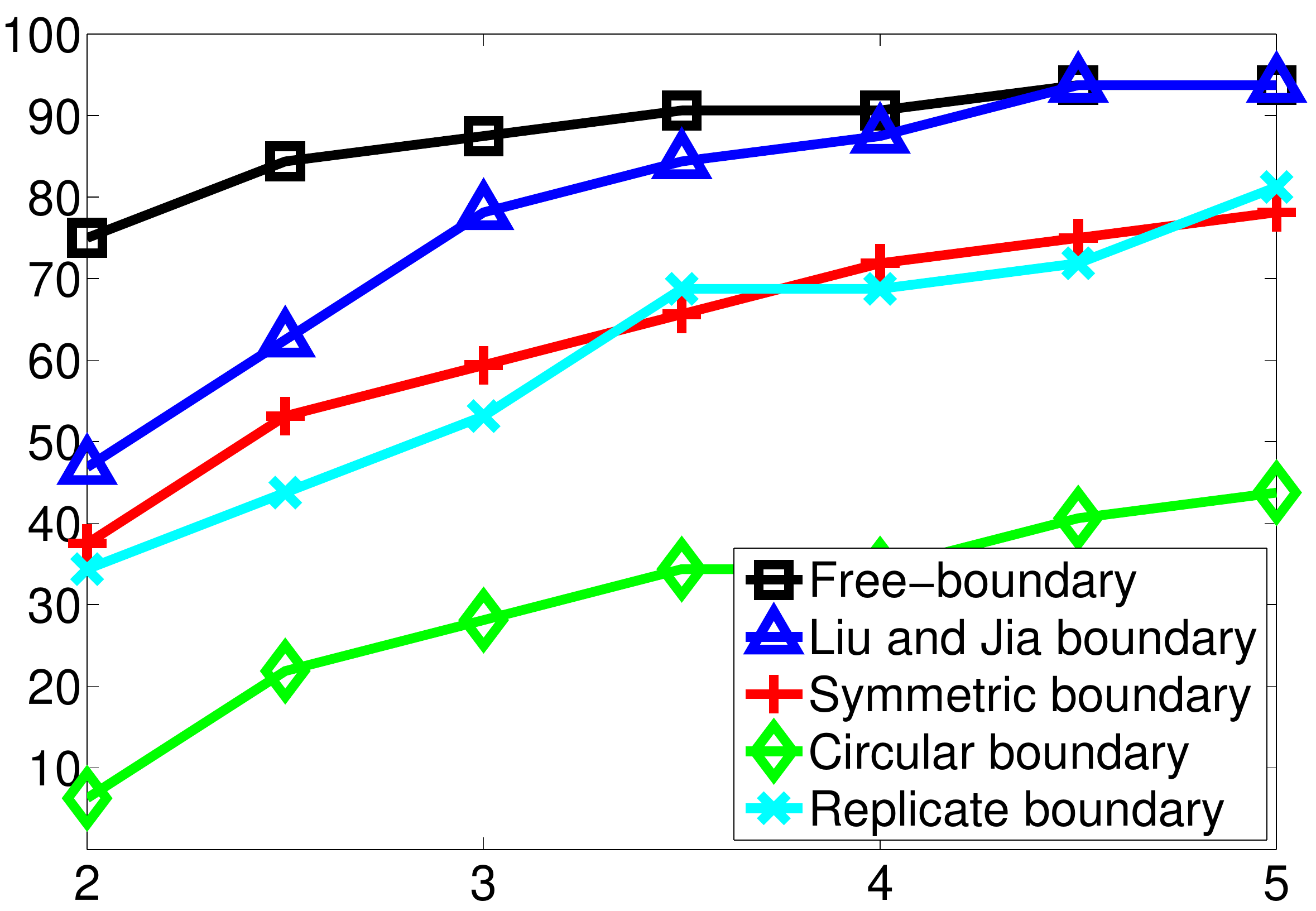}
\caption{Comparison of Algorithm~\ref{algo} with different boundary conditions.}
\label{fig:ssdBound}
\end{minipage}
\hspace{0.05cm}
\begin{minipage}[t]{0.29\linewidth}
\centering
\includegraphics[width=\textwidth,height=4cm]{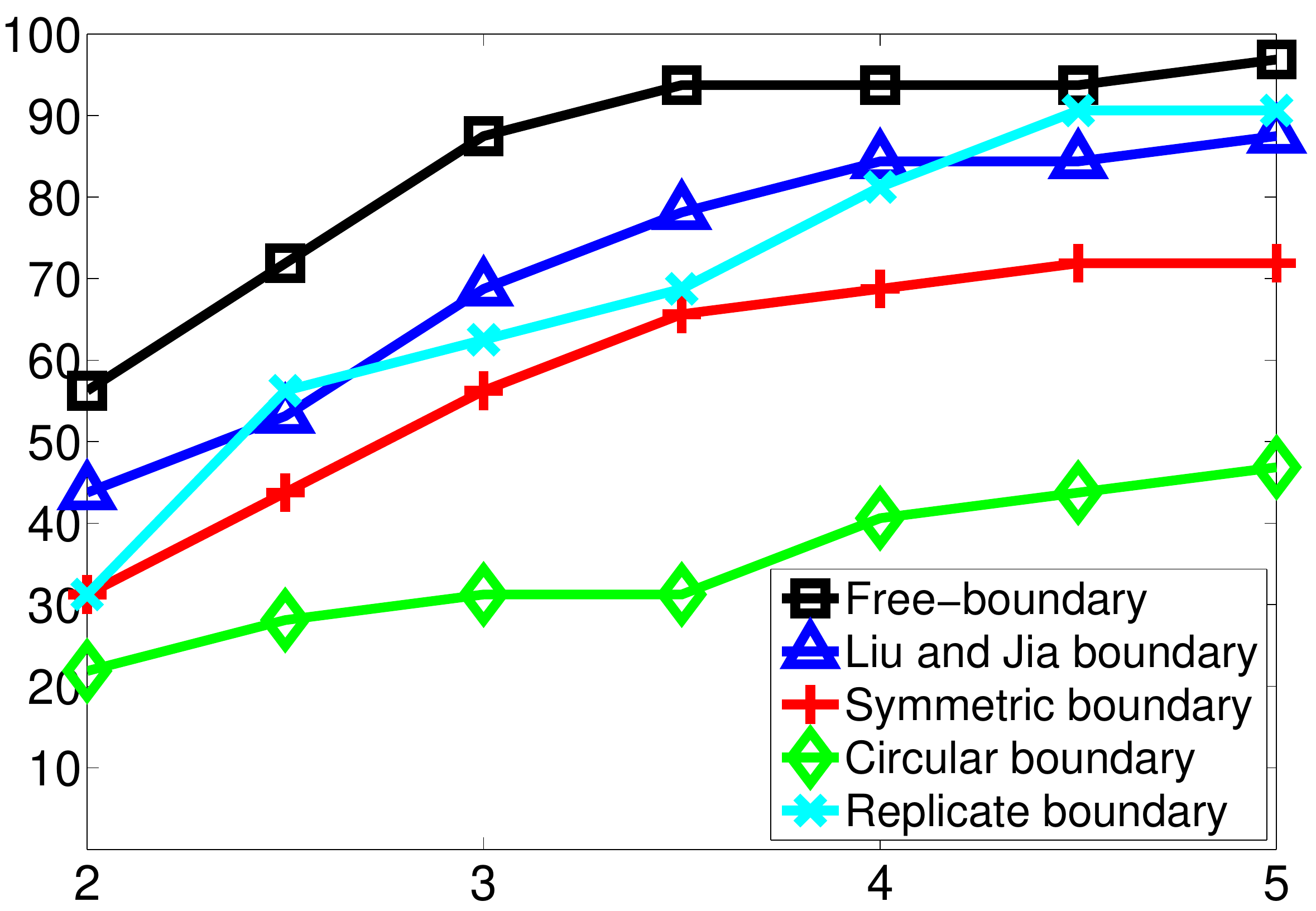}
\caption{As in Fig.~\ref{fig:ssdBound}, but using a filtered version of the images for the blur estimation.}
\label{fig:ssdBoundFilt}
\end{minipage}
\end{figure*}

\begin{figure*}[t] 
 \centering 
 \begin{minipage}[t]{.3\textwidth} 
  \centering 
 \includegraphics[width=\textwidth]{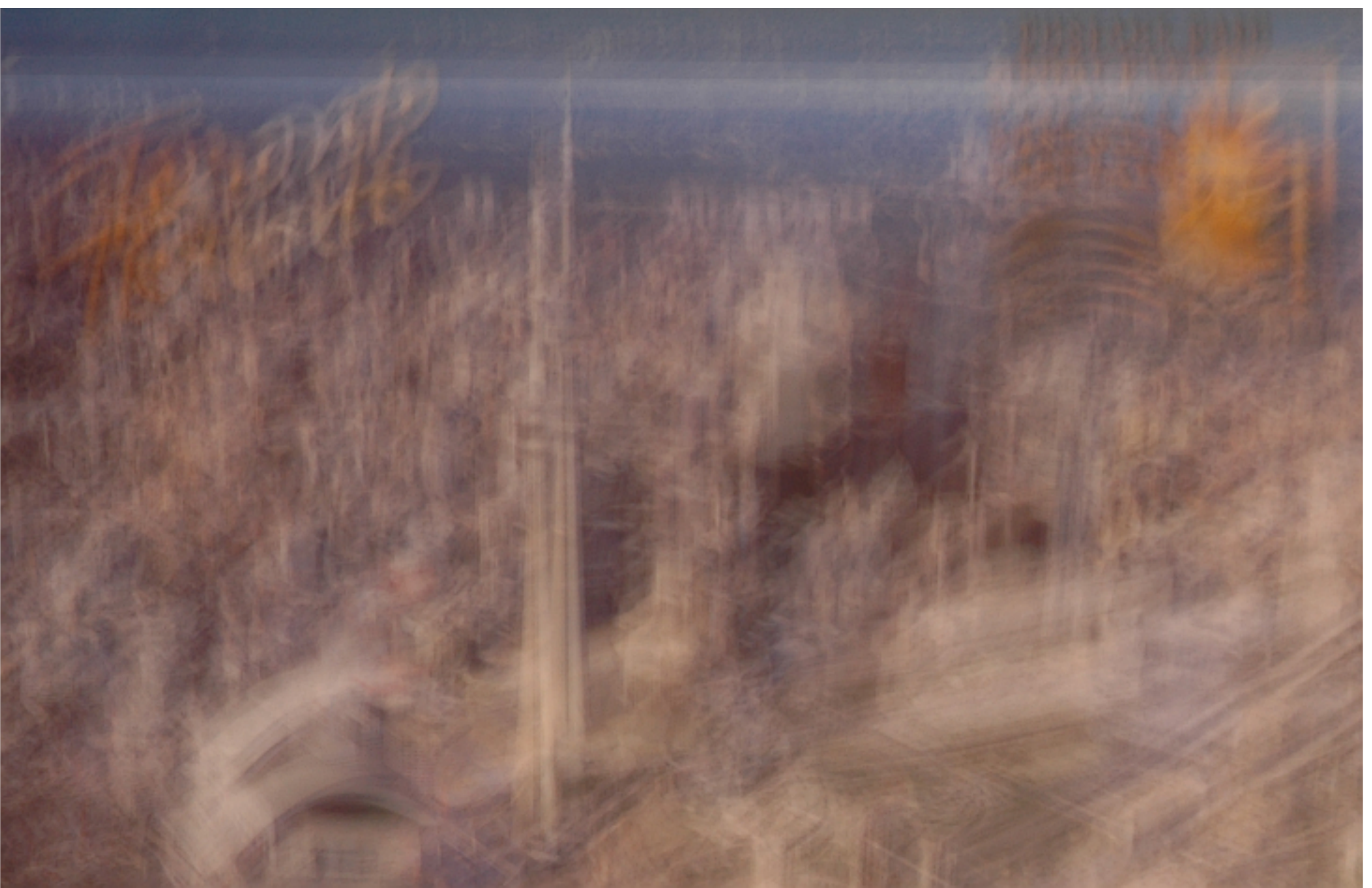} \footnotesize{Blurry Input. }
 \end{minipage} 
 \begin{minipage}[t]{.3\textwidth} 
  \centering 
 \includegraphics[width=\textwidth]{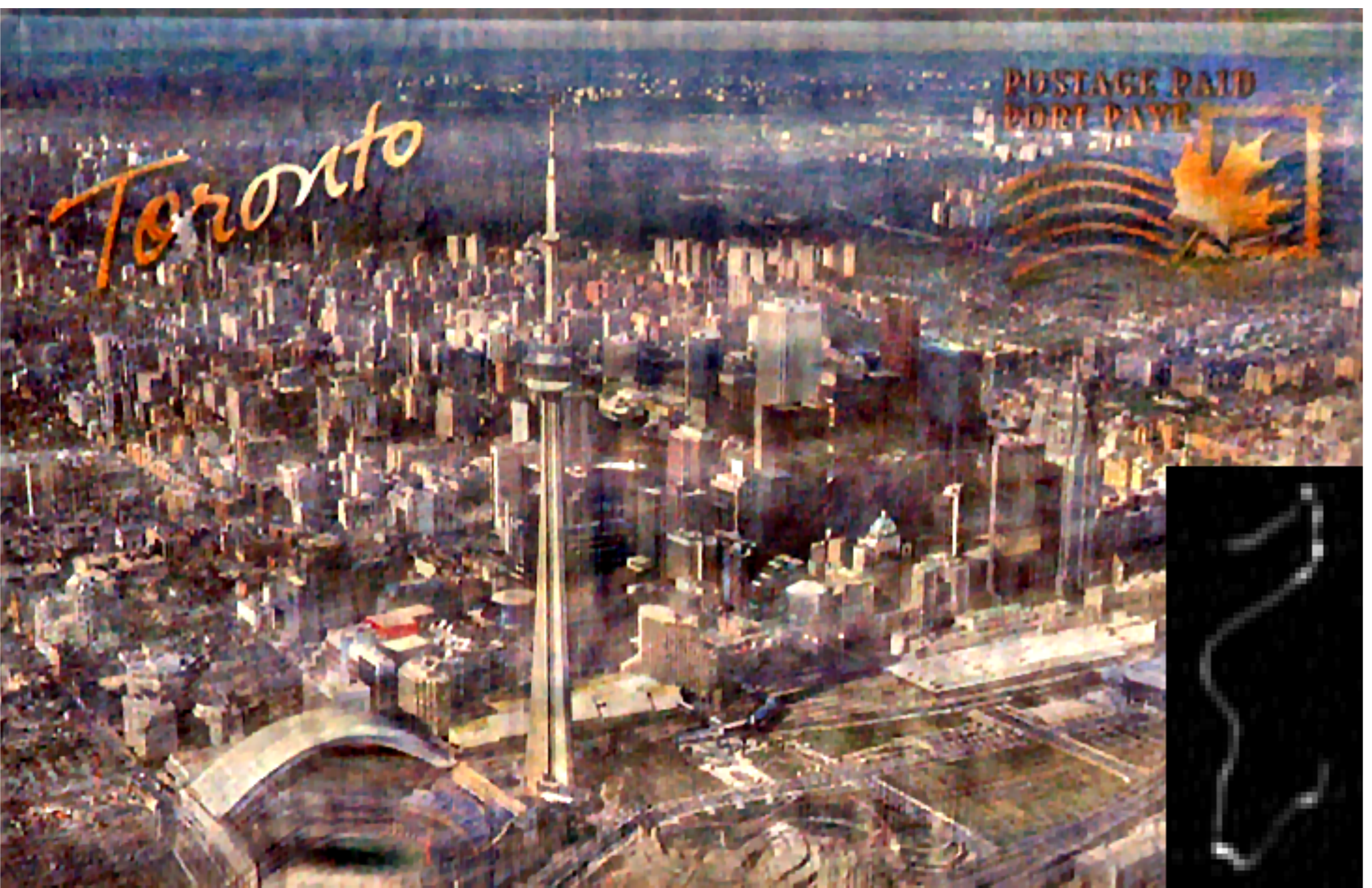} \footnotesize{Restored image and blur with Xu and Jia~\cite{Xu2010}. }
 \end{minipage} 
 \begin{minipage}[t]{.3\textwidth} 
  \centering 
 \includegraphics[width=\textwidth]{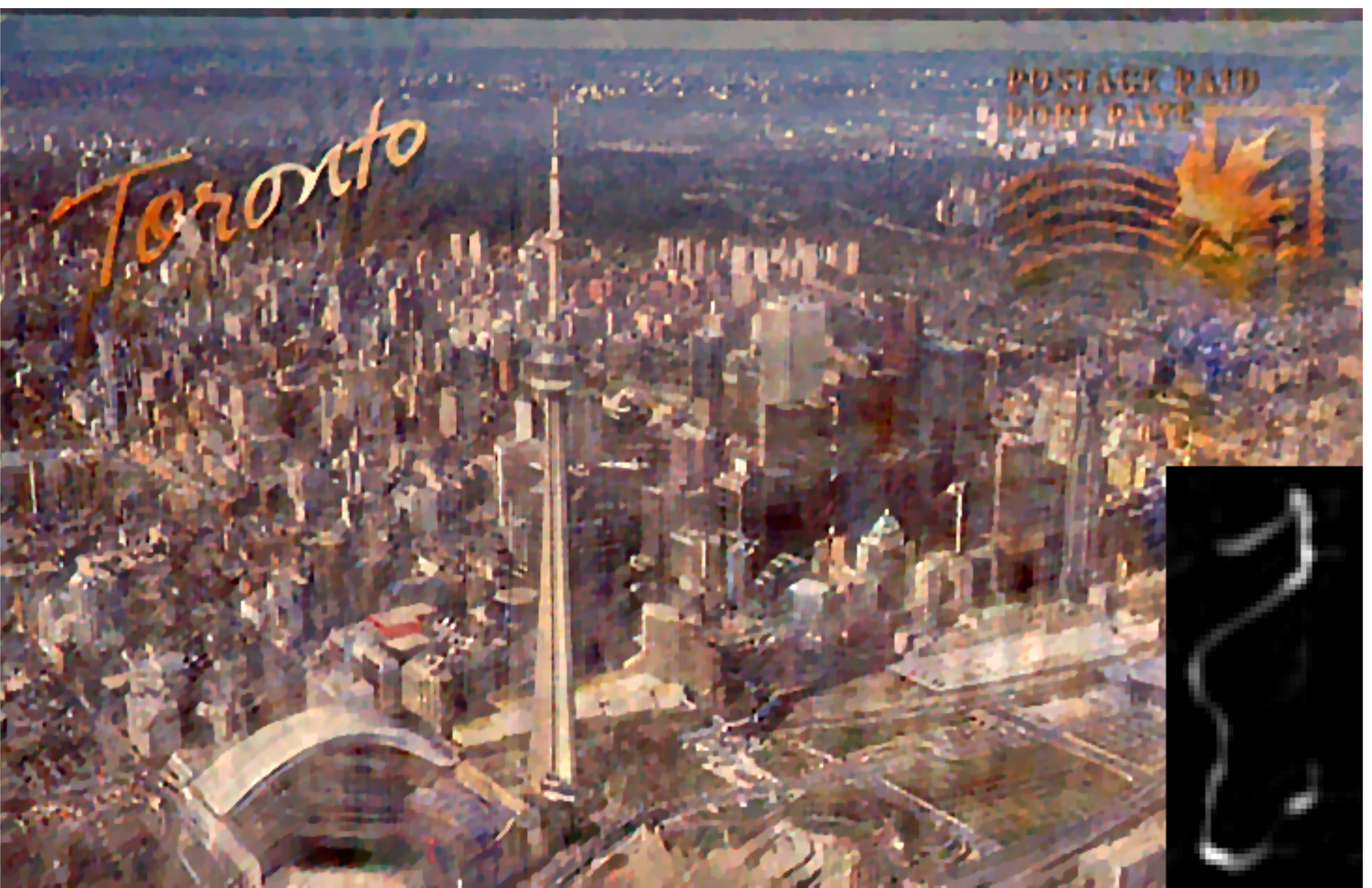} \footnotesize{Restored image and blur with Algorithm~\ref{algo}.}
 \end{minipage} 
 \caption{Example of blind-deconvolution image and blur (bottom-right insert) restoration. \label{fig:jia}}
 \end{figure*} 

 \begin{figure}[h] 
 \centering 
  \begin{minipage}[t]{.153\textwidth} 
  \centering 
 \includegraphics[width=\textwidth]{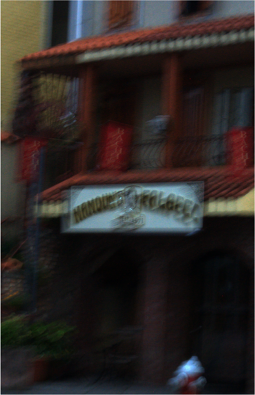} \footnotesize{Blurry Input. }
 \end{minipage} 
 \begin{minipage}[t]{.15\textwidth} 
  \centering 
 \includegraphics[width=\textwidth]{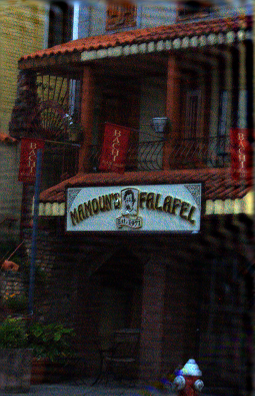} \footnotesize{Cho and Lee~\cite{Cho2009}. }
 \end{minipage} 
 \begin{minipage}[t]{.1505\textwidth} 
  \centering 
 \includegraphics[width=\textwidth]{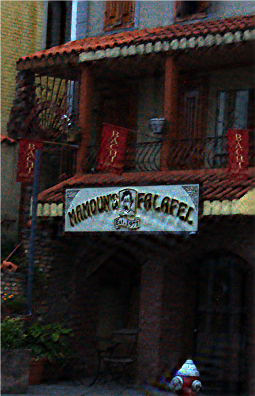} \footnotesize{Levin \etal~\cite{Levin2011}. }
 \end{minipage} 
  \begin{minipage}[t]{.1505\textwidth} 
  \centering 
 \includegraphics[width=\textwidth]{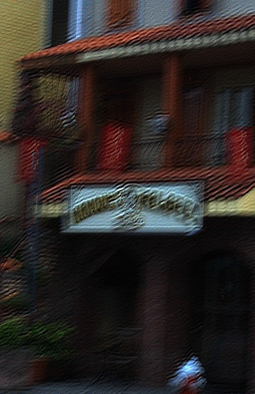} \footnotesize{Goldstein and Fattal~\cite{Goldstein2012}. }
 \end{minipage} 
  \begin{minipage}[t]{.15\textwidth} 
  \centering 
 \includegraphics[width=\textwidth]{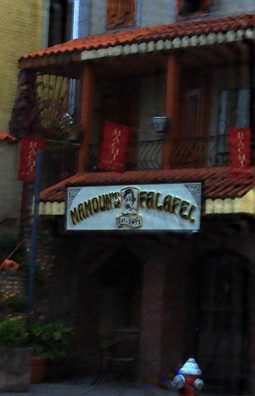} \footnotesize{Zhong \etal~\cite{Zhong2013}. }
 \end{minipage} 
 \begin{minipage}[t]{.152\textwidth} 
  \centering 
 \includegraphics[width=\textwidth]{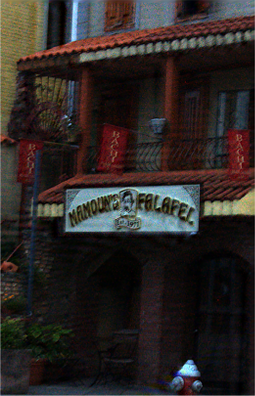} \footnotesize{Algorithm~\ref{algo}. }
 \end{minipage}
 \caption{Examples of blind-deconvolution restoration. \label{fig:zhong}}
 \end{figure} 

\begin{figure}[h] 
 \centering 
  \begin{minipage}[t]{.152\textwidth} 
  \centering 
 \includegraphics[width=\textwidth]{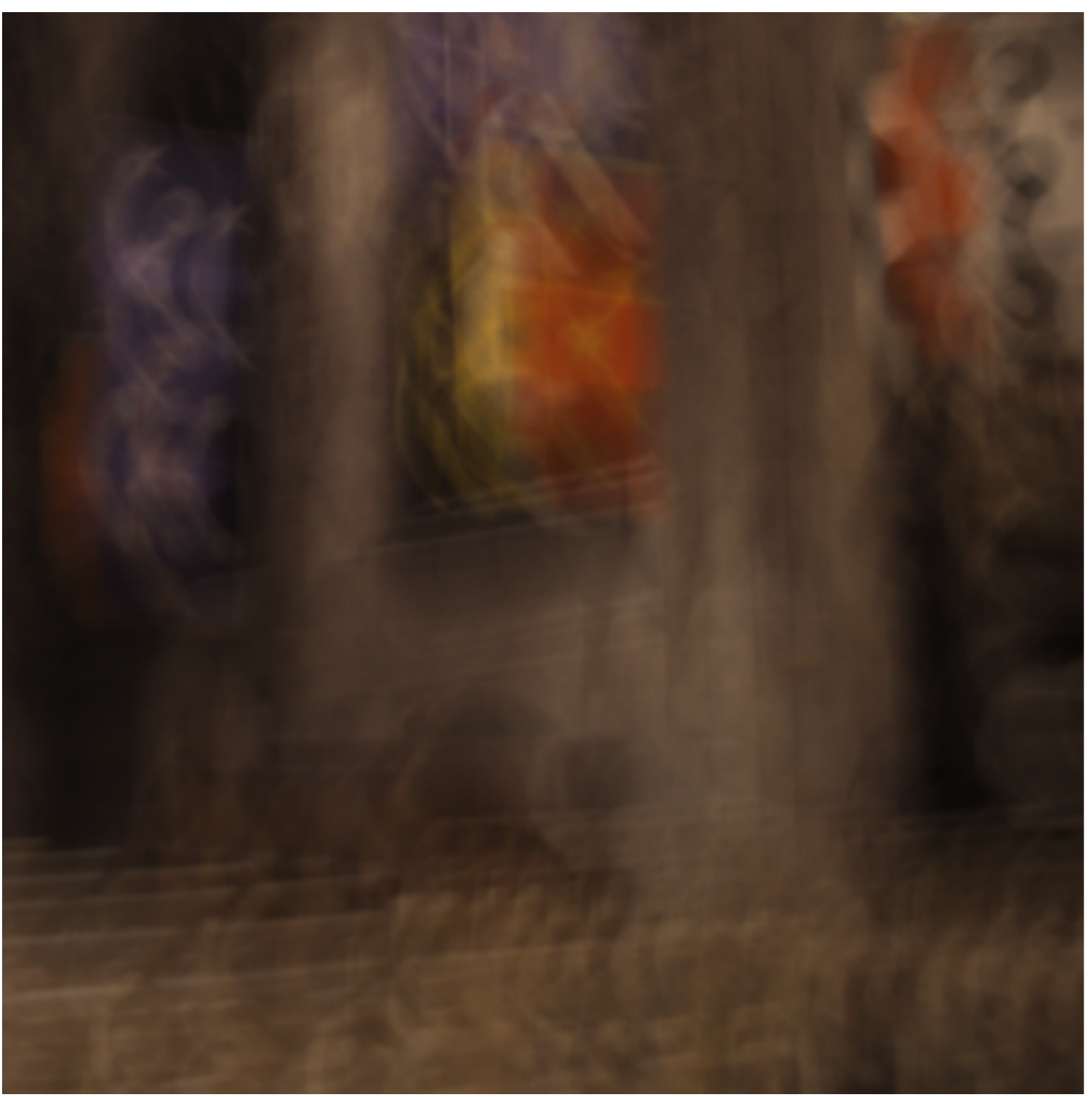} \footnotesize{Blurry Input. }
 \end{minipage}
 \begin{minipage}[t]{.152\textwidth} 
  \centering 
 \includegraphics[width=\textwidth]{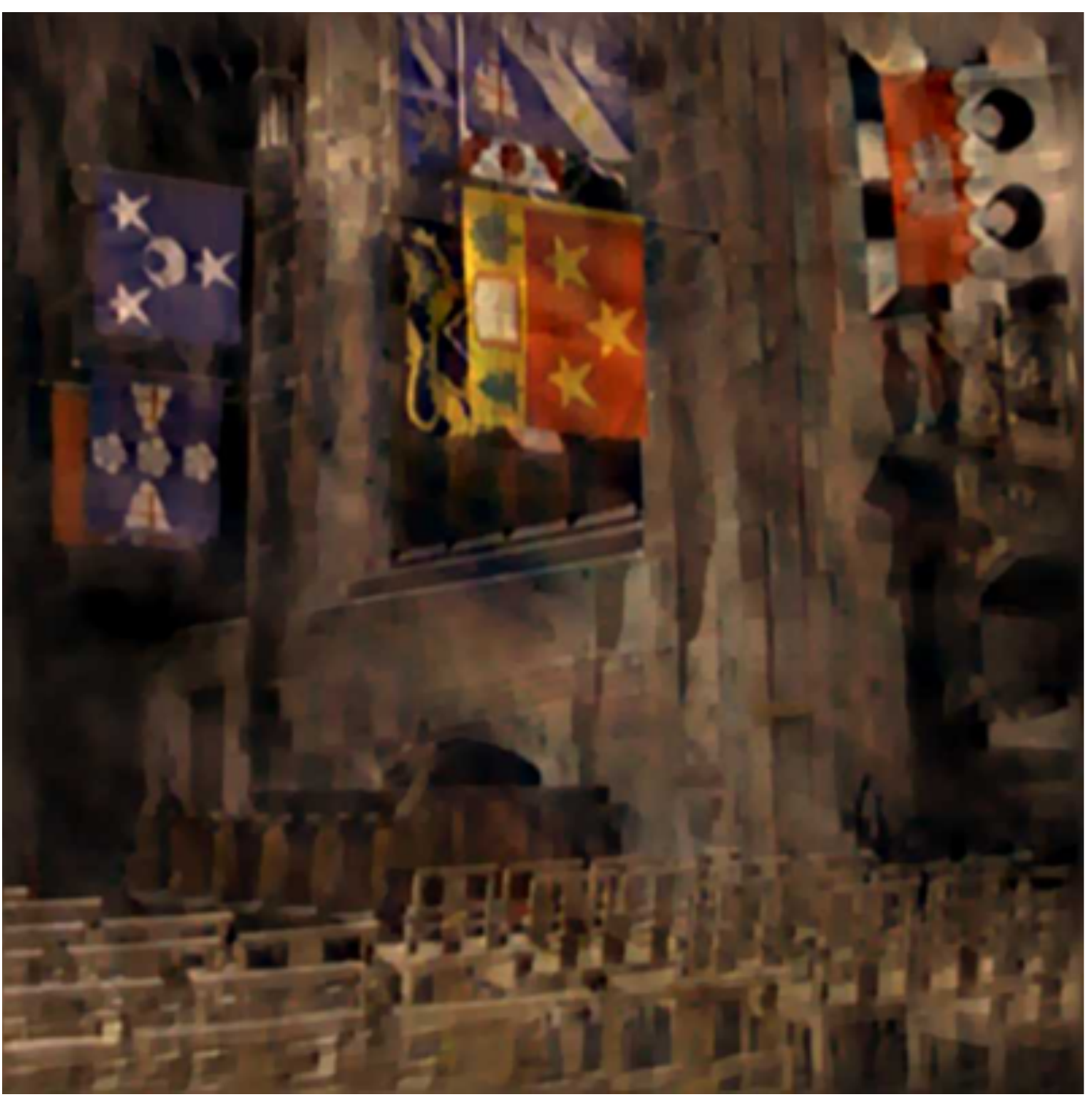} \footnotesize{Restored image with Cho and Lee~\cite{Cho2009}. }
 \end{minipage}
  \begin{minipage}[t]{.152\textwidth} 
  \centering 
   \includegraphics[width=\textwidth]{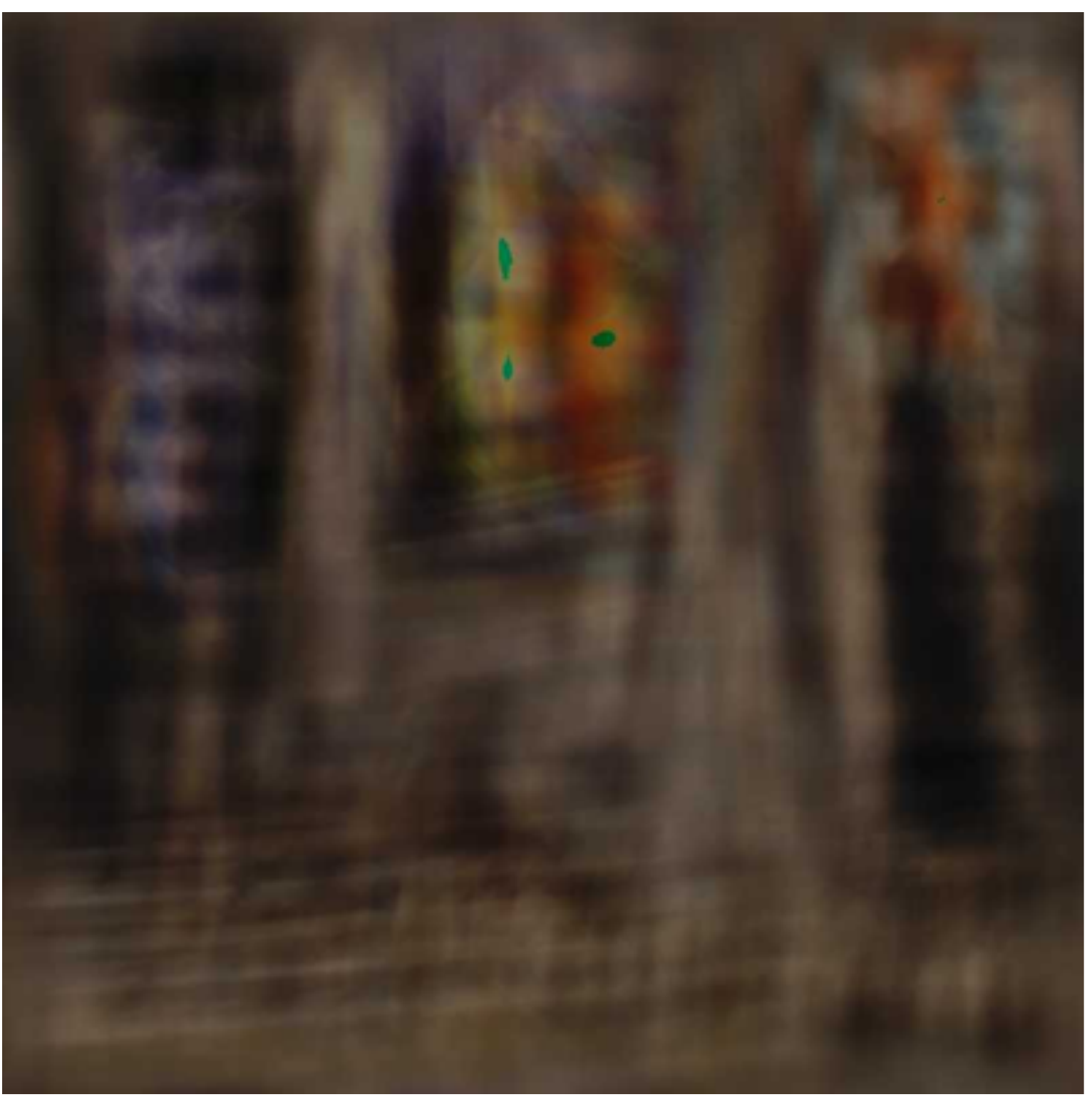} \footnotesize{Restored image with Fergus \etal~\cite{Fergus2006}. }
\end{minipage}
   \begin{minipage}[t]{.152\textwidth} 
  \centering 
 \includegraphics[width=\textwidth]{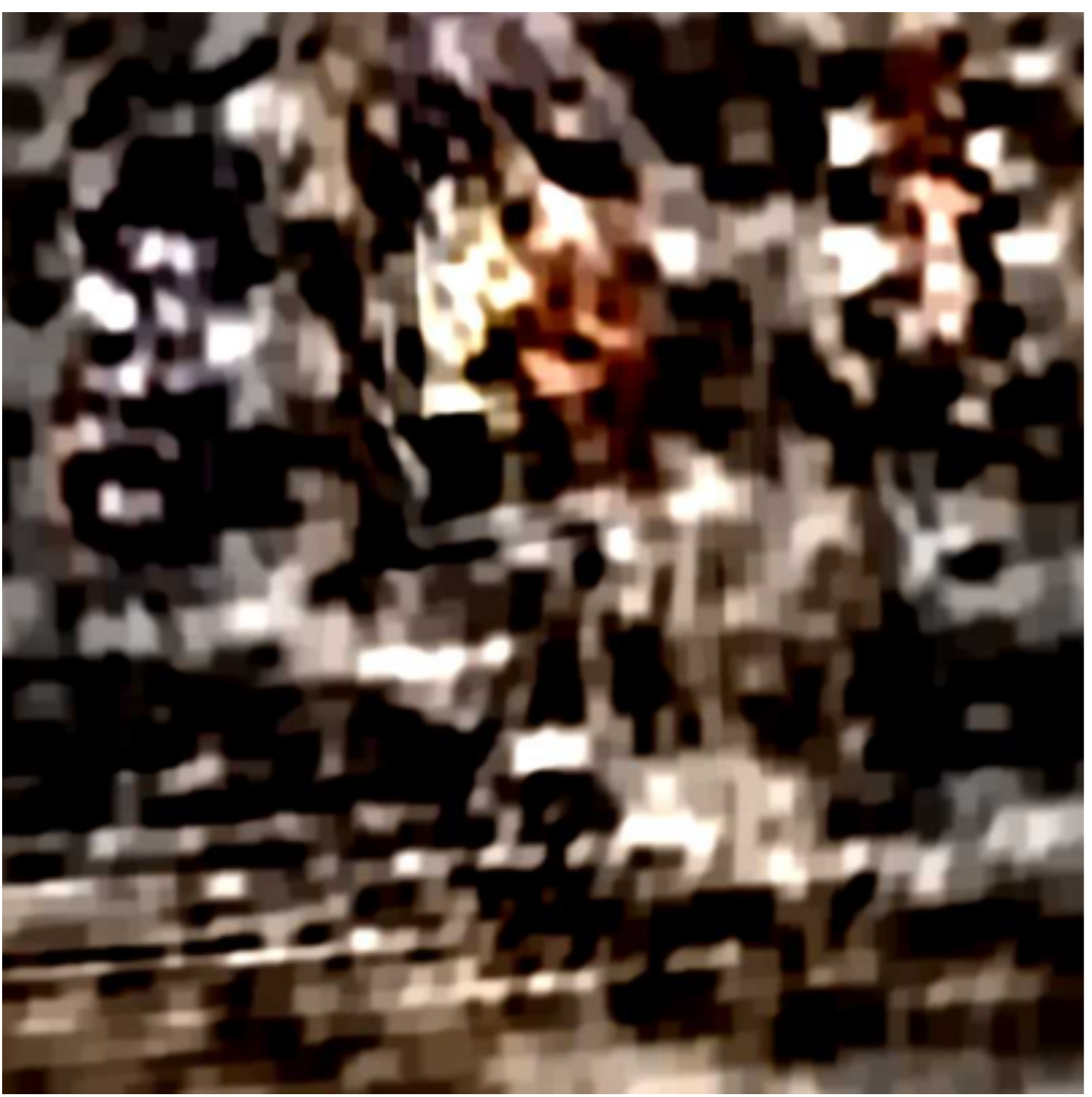} \footnotesize{Restored image with Krishnan \etal~\cite{Krishnan2011}. }
\end{minipage}
  \begin{minipage}[t]{.152\textwidth} 
  \centering 
 \includegraphics[width=\textwidth]{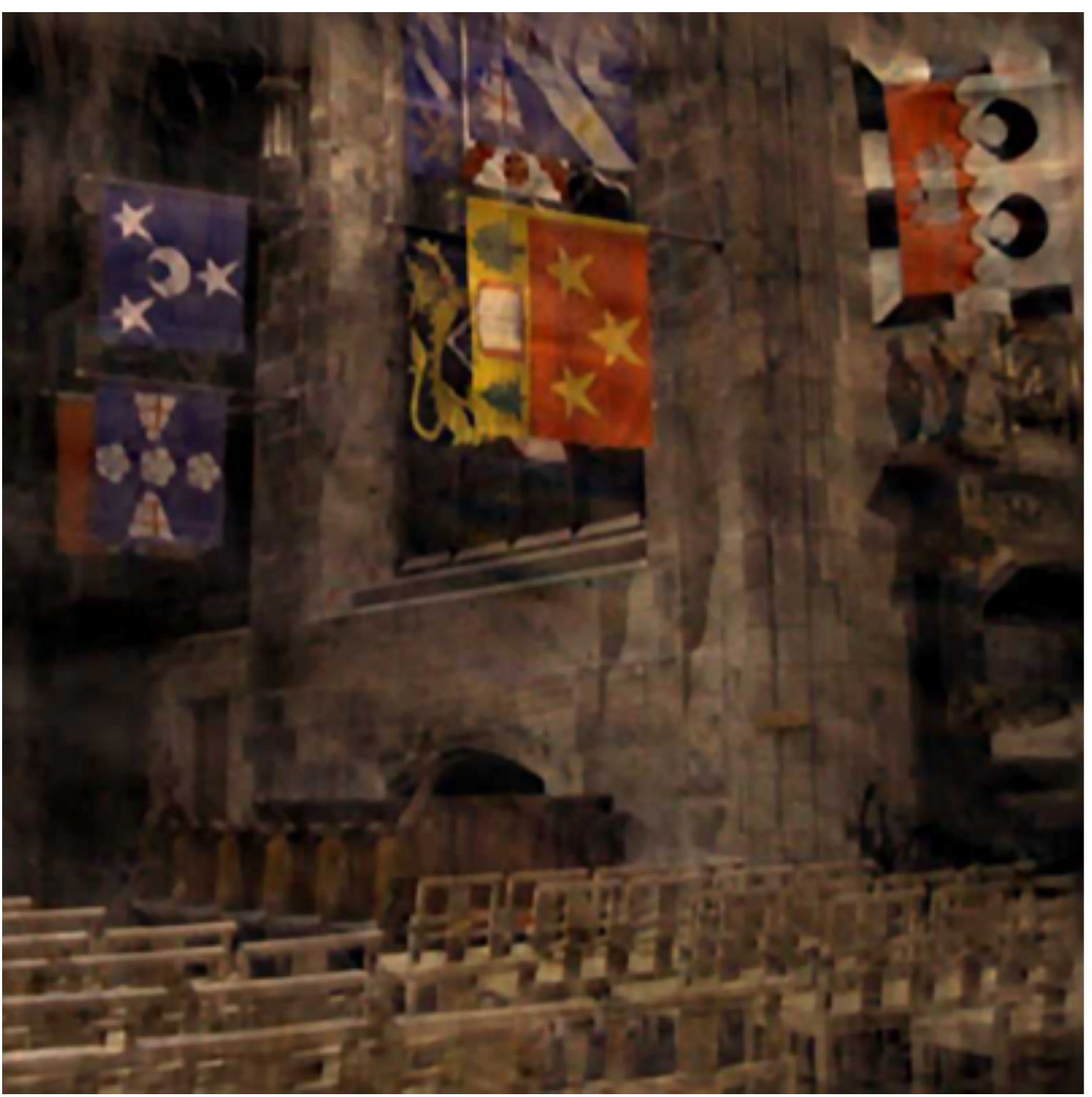} \footnotesize{Restored image with Xu and Jia~\cite{Xu2010}. }
\end{minipage}
  \begin{minipage}[t]{.152\textwidth} 
  \centering 
 \includegraphics[width=\textwidth]{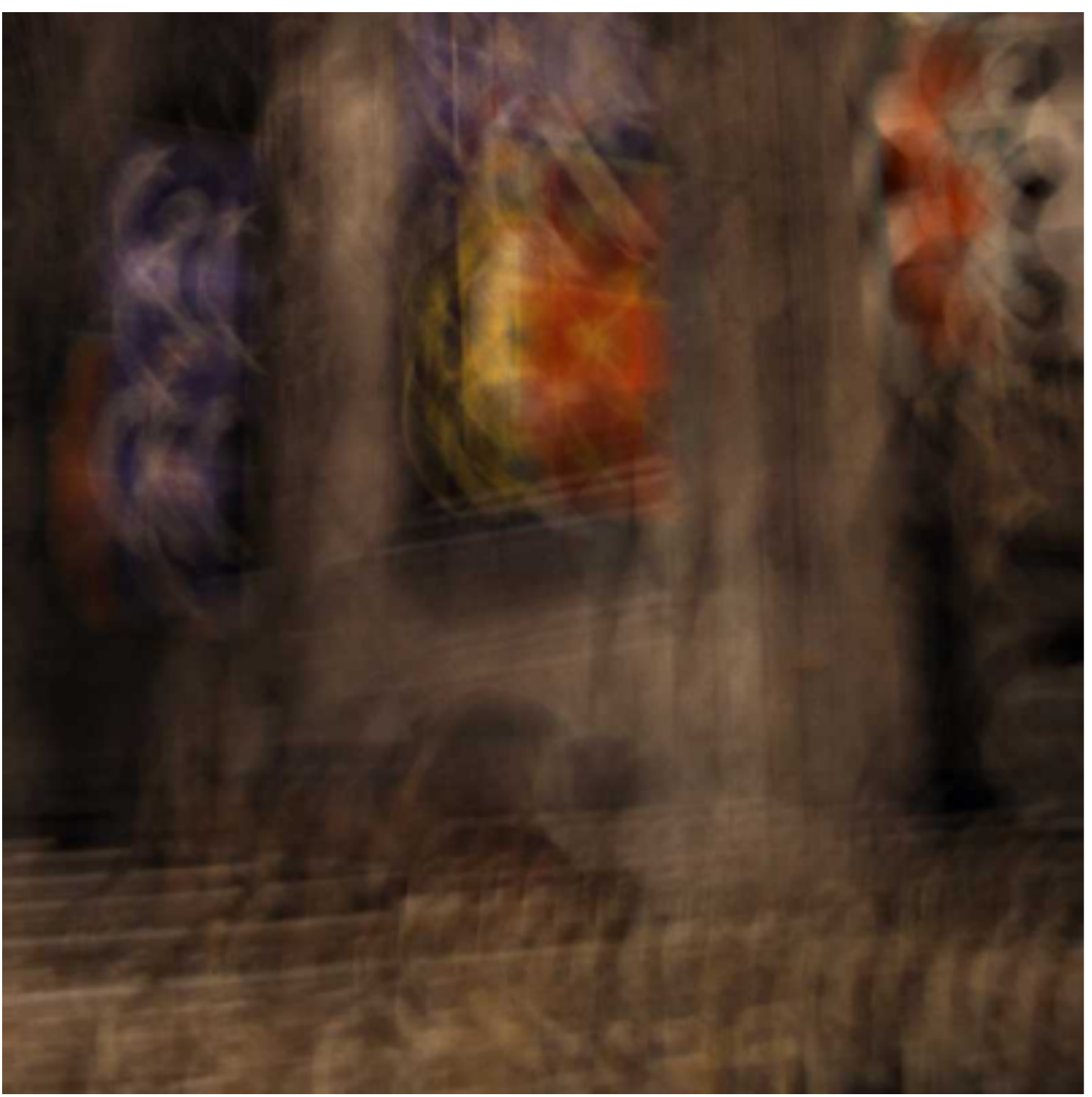} \footnotesize{Restored image with Whyte \etal~\cite{Whyte2011}. }
\end{minipage}
  \begin{minipage}[t]{.152\textwidth} 
  \centering 
 \includegraphics[width=\textwidth]{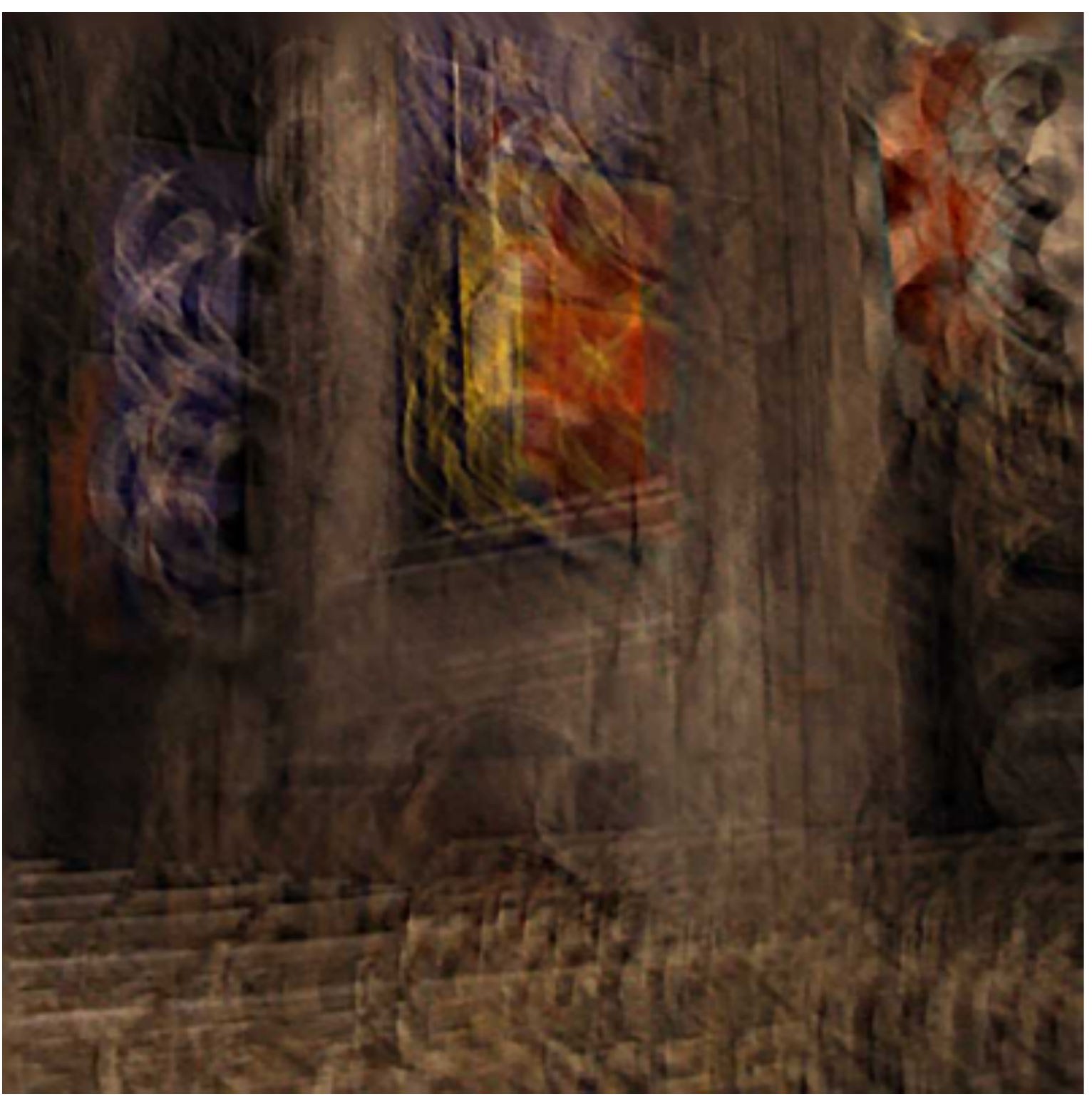} \footnotesize{Restored image with Shan \etal~\cite{Shan2008}. }
\end{minipage}
  \begin{minipage}[t]{.152\textwidth} 
  \centering 
 \includegraphics[width=\textwidth]{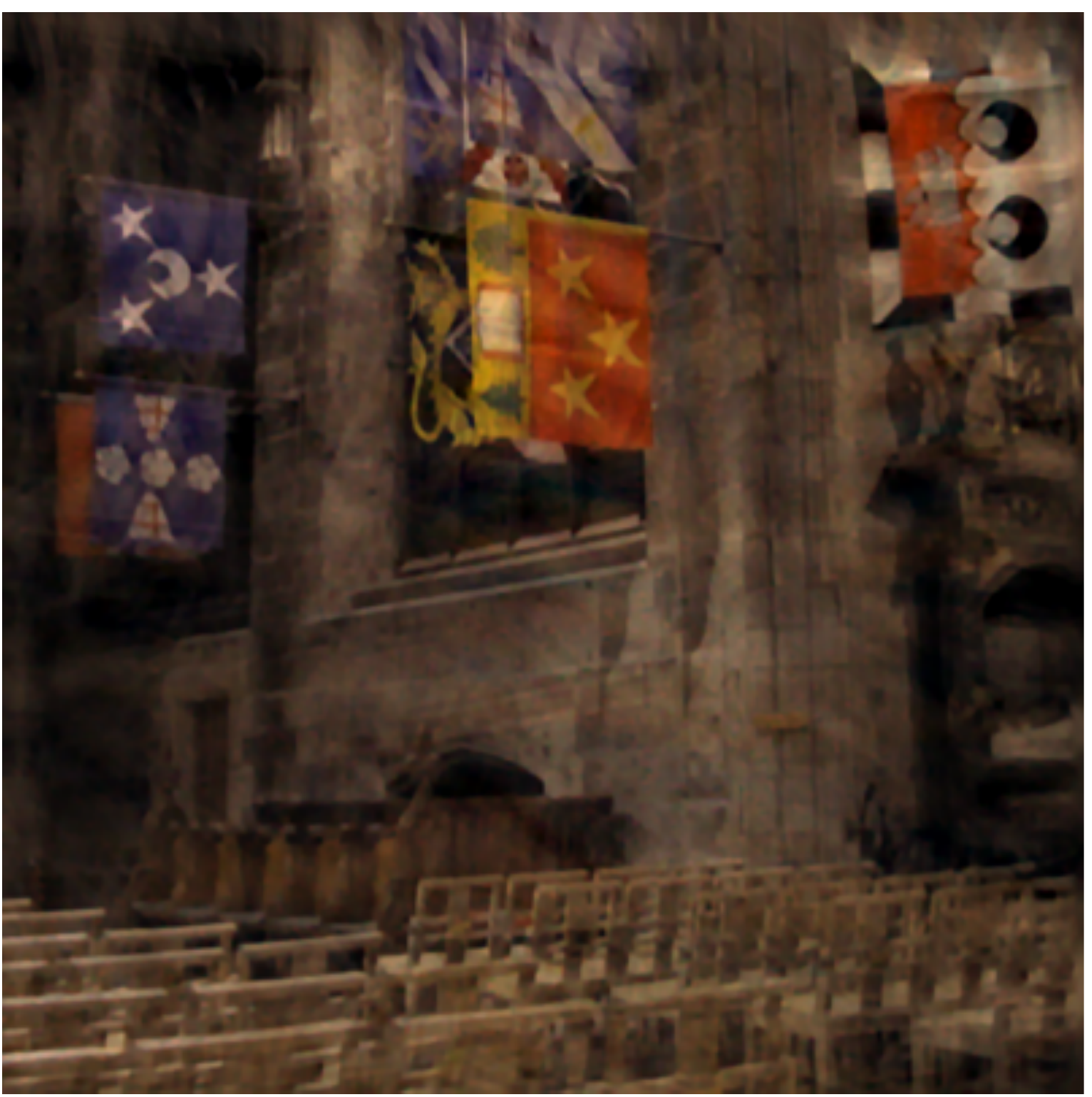} \footnotesize{Restored image with Xu \etal~\cite{Xu2013}. }
\end{minipage}
  \begin{minipage}[t]{.152\textwidth} 
  \centering 
 \includegraphics[width=\textwidth]{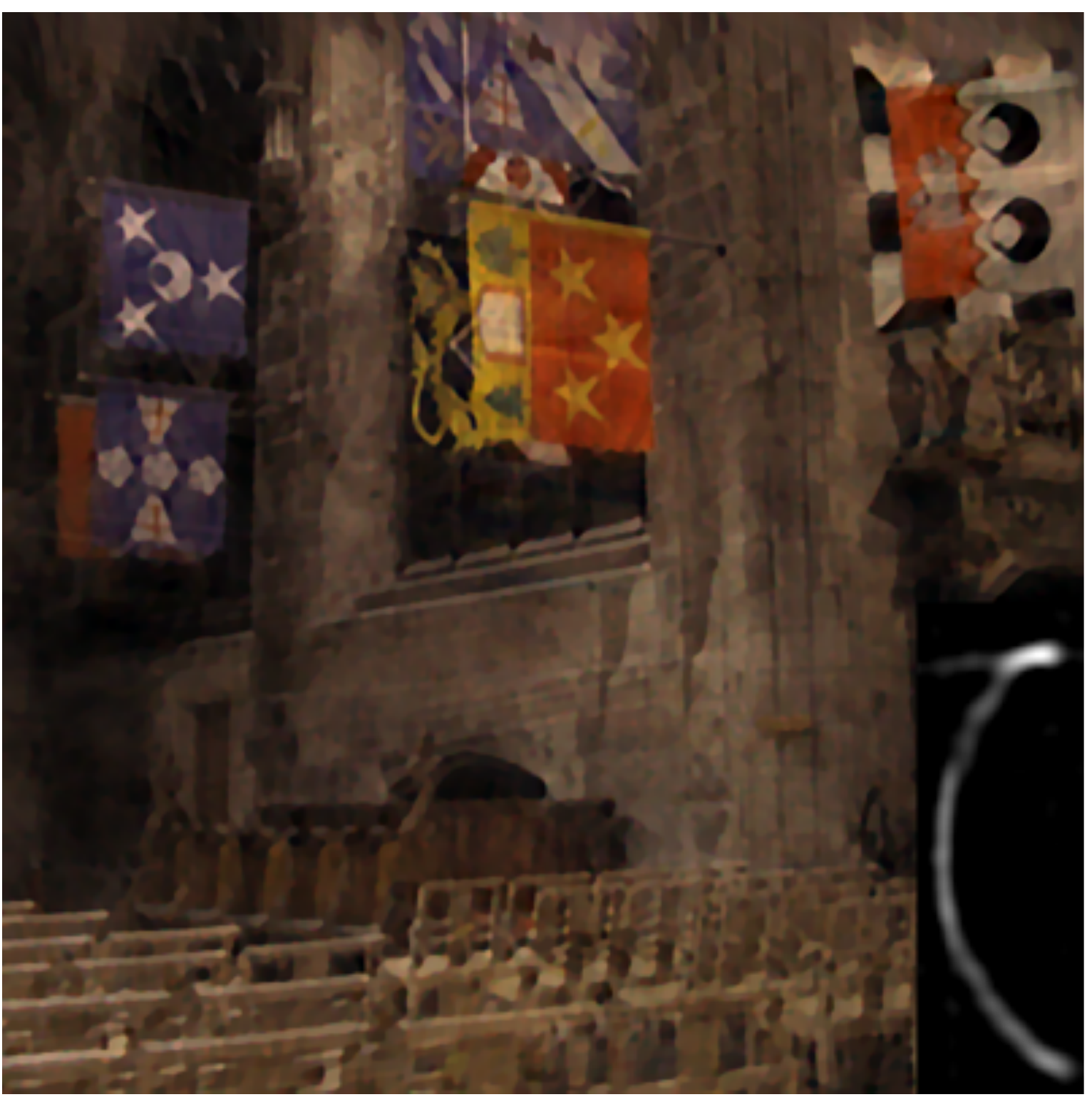} \footnotesize{Restored image with our algorithm. }
 \end{minipage}
  \caption{Examples of blind-deconvolution reconstruction from dataset~\cite{Kohler2012}. \label{Kohler}} 
 \end{figure}

\section{Conclusions}
In this paper we shed light on approaches to solve blind deconvolution. First, we confirmed that the problem formulation of total variation blind deconvolution as a maximum a priori in both sharp image and blur ($\mbox{MAP}_{u,k}$) is prone to local minima and, more importantly, does not favor the correct solution. Second, we also confirmed that the original implementation \cite{Chan1998} of total variation blind deconvolution (PAM) can successfully achieve the desired solution. This discordance was clarified by dissecting PAM in its simplest steps. The analysis revealed that such algorithm does not minimize the original $\mbox{MAP}_{u,k}$ formulation. This analysis applies to a large number of methods solving $\mbox{MAP}_{u,k}$ as they might exploit the properties of PAM; moreover, it shows that there might be principled solutions to $\mbox{MAP}_{u,k}$. We believe that by further studying the behavior of the PAM algorithm one could arrive at novel useful formulations for blind deconvolution. Finally, we have showed that the PAM algorithm is neither worse nor better than the state of the art algorithms despite its simplicity.

\appendices
\section{}
In the appendix we show the proofs of the propositions and of the theorems presented in the paper.
\subsection{Proof of Proposition~\ref{proposition} }
\label{proof:proposition}
\begin{IEEEproof}
Solving problem~\eqref{eq:rof0} is equivalent to solving the following problem
\begin{align} \label{eq:rof02}
\bar{u}[x] = \arg\min_u  \frac{1}{2}\sum_{x = -L_1+1}^{L_2-1} (u[x] - f[x])^2\quad\quad\quad\quad~\\
 + \lambda \sum_{x = -L_1+1}^{L_2-2} |u[x+1] - u[x]|.\nonumber
\end{align}
where $\bar{u}[x] = \hat{u}[x]$ for $x \in [-L_1+1,L_2-1]$, $\hat{u}[-L_1] =  \hat{u}[-L_1+1]$ and $\hat{u}[L_2] =  \bar{u}[L_2-1]$. In the following, unless specified, we will always refer to problem~\eqref{eq:rof02}. 

The solution of problem~\eqref{eq:rof02} can also be written as $\bar{u}[x] = \hat{s}[x]-\hat{s}[x-1]$, $x\in[-L_1+1,L_2-1]$, where $\hat{s}$ is found by solving the \emph{taut string} problem (\eg, see Davies and Kovac~\cite{Davies2001})
\begin{align} \label{eq:dual}
\hat{s}[x] = \arg\min_s  & \sum_{x=-L_1+1}^{L_2-1}\sqrt{1+\left|s[x]-s[x-1]\right|^2}\\
\mbox{s.t.}& \max_{x\in[-L_1+1,L_2-1]} \left|s[x]-r[x]\right|\le \lambda\quad \mbox{and}\nonumber\\
&\quad s[-L_1]=0,\quad s[L_2-1] = r[L_2-1]\nonumber
\end{align}
where $r[x] = \sum_{y=-L_1+1}^{x} f[y]$ with $x\in [-L_1+1,L_2-1]$.

Given the explicit form of $f$ in eq.~\eqref{eq:deff} we obtain that
\begin{equation}
\label{eq:defr}
r[x] = \left \{ 
\begin{array}{ll}
U_1(x + L_1) \\ \quad\quad\quad\quad\quad\quad\quad\quad\quad x \in [-L_1 + 1, -2]\\
U_1 (L_1-1) + \delta_1 (U_2 - U_1) \\ \quad\quad\quad\quad\quad\quad\quad\quad\quad x = -1\\
U_2(x + 1) + (\delta_1 - \delta_2)(U_2 - U_1) + U_1 (L_1 - 1) \\ \quad\quad\quad\quad\quad\quad\quad\quad\quad x \in [0, L_2-1].
\end{array}
\right .
\end{equation}
Notice that $r[x]$ has three discontinuities at $x = -2$, $x = -1$ and $x = 0$. Let consider solving the taut string problem by enforcing in turn only the constraint $|s[-2]-r[-2]|\le \lambda$,  $|s[-1]-r[-1]|\le \lambda$ and $|s[0]-r[0]|\le \lambda$. 

For the first case the cost of the taut string problem is minimum for the shortest path $s$ through a point at $x=-2$. We can decompose such path into the concatenation of the shortest path from $x=-L_1+1$ to $x=-2$ and the shortest path from $x=-2$ to $x=L_2-1$. Given that each of these paths are only constrained at the end points, a direct solution will give a line segment between the end points, \ie,
\begin{equation}
\label{eq:defs}
s[x] = \left \{ 
\begin{array}{rl}
\frac{x+L_1}{L_1-2} s[-2] & x \in [-L_1+1, -2]\\
\frac{L_2-1-x}{L_2+1} s[-2] + \frac{x+2}{L_2+1}r[L_2-1] & x \in [-1, L_2-1].
\end{array}
\right.
\end{equation}
The value $s[-2]$ that yields the shortest path and satisfies the constraint
\begin{align}
r[-2] - \lambda \leq s[-2] \leq r[-2] +\lambda
\end{align}
is $s[-2] = U_1(L_1 - 2) +\lambda$ when $\lambda < (U_2-U_1)\frac{L_1-2}{L_1+L_2-1}(L_2+\delta_1-\delta_2)$
and $s[-2] = U_1(L_1 - 2) +  (U_2-U_1)\frac{L_1-2}{L_1+L_2-1}(L_2+\delta_1-\delta_2)$ otherwise.

Now, we will show that, given $\lambda\ge (U_2-U_1) (L_2-L_2\delta_1-\delta_2)$, the above shortest path $s$ is also the solution $\hat{s}$ to the  taut string problem~\eqref{eq:dual} with all the constraints. It will suffice to show that this path satisfies all the constraints in the taut string problem. Then, since it is the shortest path with a single constraint, it must also be the shortest path for problem~\eqref{eq:dual}.
To verify all the constraints, we only need to consider $2$ cases:
\begin{equation}
\label{eq:cases}
\begin{array}{l}
\textstyle x=-1\rightarrow\nonumber\\ \quad \textstyle \left| (L_2 - L_2 \delta_1 - \delta_2)(U_2 - U_1) + \lambda L_2 \right| \le \lambda (L_2+1)\nonumber\\
\textstyle x=0\rightarrow\nonumber\\ \quad \textstyle \left|(L_2-1)(1 - \delta_1 + \delta_2) (U_2 - U_1)+ (L_2-1)\lambda \right| \le\lambda(L_2+1)\nonumber
\end{array}
\end{equation}
as all the others are directly satisfied when these are. The first inequality leads to the condition $\lambda\ge (U_2-U_1) (L_2-L_2\delta_1-\delta_2)$ and the second inequality to the condition $\lambda\ge (U_2-U_1)\frac{L_2-1}{2}(1-\delta_1+\delta_2)$. However, as long as $L_1,L_2 > 2$, $(U_2-U_1) (L_2-L_2\delta_1-\delta_2) \ge (U_2-U_1)\frac{L_2-1}{2}(1-\delta_1+\delta_2)$ for $\delta_1 + \delta_2 \le 1$, therefore it is sufficient to have the condition $\lambda\ge (U_2-U_1) (L_2-L_2\delta_1-\delta_2)$ to satisfy both inequalities. We can then obtain $\bar{u}[x] = \hat{s}[x]-\hat{s}[x-1]$ from eq.~\eqref{eq:defs} and write
\begin{equation}\label{eq:solum1}
\bar{u}[x] = \left \{ 
\begin{array}{ll}
U_1+\frac{\lambda}{L_1-2} & x \in [-L_1+1, -2]\\
\frac{U_1 + U_2L_2}{L_2+1}+\frac{(\delta_1-\delta_2)(U_2-U_1)-\lambda}{L_2+1} & x \in [-1, L_2-1]
\end{array}
\right .
\end{equation}
If $\frac{L_1-2}{L_1+L_2-1}(L_2+\delta_1-\delta_2) \le L_2-L_2\delta_1-\delta_2$ the two conditions on $\lambda$ mentioned above can never be satisfied. With some algebraic manipulation we obtain that if $\delta_2 \le L_2 - (L_1 + L_2 - 2) \delta_2$ a $\lambda$ such that the solution~\eqref{eq:solum1} is obtained does not exist. 

In a similar manner we can consider the second point $x = -1$ and solve the taut string problem imposing only the constraint $|s[-1] - r[-1]| \le \lambda$. Also in this case we can decompose the shortest path $s$ into the concatenation of the shortest path from $x = -L_1+1$ to $x = -1$ and then one from $x = -1$ to $x = L_2 -1$. A direct solution will give a line segment between the end points,
\begin{equation}
\label{eq:defs}
s[x] = \left \{ 
\begin{array}{rl}
\frac{x+L_1}{L_1-1} s[-1] & x \in [-L_1+1, -1]\\
\frac{L_2-1-x}{L_2} s[-1] + \frac{x+1}{L_2}r[L_2-1] & x \in [0, L_2-1].
\end{array}
\right.
\end{equation}
The value $s[-1]$ that yields the shortest path and satisfies the constraint
\begin{align}
r[-1] - \lambda \leq s[-1] \leq r[-1] +\lambda
\end{align}
is $s[-1] = U_1(L_1 - 1) + \delta_1(U_2-U_1) +\lambda$ when $\lambda < (U_2-U_1)\frac{L_2(L_1-1)-L_2\delta_1-(L_1-1)\delta_2}{L_1+L_2-1}$
and $s[-1] = U_1(L_1 - 1) + \delta_1(U_2-U_1) +  (U_2-U_1)\frac{L_2(L_1-1)-L_2\delta_1-(L_1-1)\delta_2}{L_1+L_2-1}$ otherwise.

Now, we will show that, given $ \lambda\ge (U_2-U_1)\max\left\{(L_1-2)\delta_1,(L_2-1)\delta_2\right\}$, the above shortest path $s$ is also the solution $\hat{s}$ to the  taut sting problem~\eqref{eq:dual} with all the constraints.To verify all the constraints, we only need to consider $2$ cases:
\begin{equation}
\label{eq:cases}
\begin{array}{l}
\textstyle x=-2\rightarrow\nonumber\\ \quad \textstyle \left| (L_1-2) (\delta_1 (U_2 - U_1) +  \lambda) \right| \le \lambda (L_1-1)\nonumber\\
\textstyle x=0\rightarrow\nonumber\\ \quad \textstyle \left| (L_2-1) (\delta_2(U_2 - U_1) + \lambda)) \right| \le \lambda L_2\nonumber
\end{array}
\end{equation}
as all the others are directly satisfied when these are. By direct substitution, one can find that  $ \lambda\ge (U_2-U_1)\max\left\{(L_1-2)\delta_1,(L_2-1)\delta_2\right\}$ satisfies all the above constraints as long as $L_1,L_2 > 2$. We can then obtain $\bar{u}[x] = \hat{s}[x]-\hat{s}[x-1]$ from eq.~\eqref{eq:defs} and write
\begin{equation}\label{eq:solum2}
\hat{u}[x] = \left \{ 
\begin{array}{ll}
U_1+\frac{\delta_1(U_2-U_1)+\lambda}{L_1-1} & x \in [-L_1, -1]\\
U_2+\frac{-\delta_2(U_2-U_1)-\lambda}{L_2} & x \in [0, L_2]
\end{array}
\right .
\end{equation}
Also for this case there are configurations of $L_1$, $L_2$, $\delta_1$ and $\delta_2$ for which a $\lambda$ that gives~\eqref{eq:solum2} does not exist. We distinguish two cases: if  $(L_2-1)\delta_2 < (L_1-2)\delta_1$ then the condition $\frac{L_2(L_1-1)-L_2\delta_1-(L_1-1)\delta_2}{L_1+L_2-1} \le (L_2-1)\delta_2 $ must be satisfied; or, if  $(L_2-1)\delta_2 \ge (L_1-2)\delta_1$, the condition  $\frac{L_2(L_1-1)-L_2\delta_1-(L_1-1)\delta_2}{L_1+L_2-1} \le (L_2\1-2)\delta_1 $ must be satisfied. With simple algebraic manipulation we have for the first case if $\delta_2 < \frac{(L_1 - 2)\delta_1}{L_2-1}$ that $\delta_2 \ge L_2 -  (L_1 + L_2 -2)\delta_1$, or if  $\delta_2 \ge \frac{(L_1 - 2)\delta_1}{L_2-1}$ that $\delta_2 > \frac{L_1  - \delta_1 - 1}{L_1 + L_2 - 2}$ must be satisfied.

For the last point $x = 0$ we solve the taut string problem imposing only the constraint $|s[0] - r[0]| \le \lambda$, having a direct solution as following
\begin{equation}
\label{eq:defs}
s[x] = \left \{ 
\begin{array}{rl}
\frac{x+L_1}{L_1} s[0] & x \in [-L_1+1, 0]\\
\frac{L_2-1-x}{L_2-1} s[0] + \frac{x}{L_2-1}r[L_2-1] & x \in [1, L_2-1].
\end{array}
\right.
\end{equation}
The value $s[0]$ that yields the shortest path and satisfies the constraint
\begin{align}
r[0] - \lambda \leq s[0] \leq r[0] +\lambda
\end{align}
is $s[0] = U_2  + (\delta_1 - \delta_2)(U_2-U_1) + U_1(L_1 - 1)+\lambda$ when $\lambda < (U_2-U_1)\frac{L_2-1}{L_1+L_2-1}(L_1-\delta_1+\delta_2-1)$
and $s[-0] =  U_2  + (\delta_1 - \delta_2)(U_2-U_1) + U_1(L_1 - 1) + (U_2-U_1)\frac{L_2-1}{L_1+L_2-1}(L_1-\delta_1+\delta_2-1)$ otherwise.

For $  \lambda \ge (U_2-U_1) (L_1-\delta_1-(L_1-1)\delta_2-1)$, the above shortest path $s$ is also the solution $\hat{s}$ to the  taut sting problem~\eqref{eq:dual} with all the constraints. Indeed, it satisfies the following $2$ cases:
\begin{equation}
\label{eq:cases}
\begin{array}{l}
\textstyle x=-2\rightarrow\nonumber\\ \quad \textstyle \left| (L_1-2)(\delta_1-\delta_2+1)(U_2-U_1)+ (L_2-2)\lambda \right| \le \lambda L_1\nonumber\\
\textstyle x=-1\rightarrow\nonumber\\ \quad \textstyle \left| (L_1-1-\delta_1-(L_1-1)\delta_2)(U_2-U_1)+(L_1-1)\lambda \right| \le \lambda L_1\nonumber
\end{array}
\end{equation}
for $L_1,L_2 > 2$ and $\delta_1 + \delta_2 \le 1$. We can then obtain $\bar{u}[x] = \hat{s}[x]-\hat{s}[x-1]$ from eq.~\eqref{eq:defs} and write
\begin{equation}\label{eq:solum3}
\hat{u}[x] = \left \{ 
\begin{array}{ll}
\frac{U_1(L_1-1)+U_2+(\delta_1-\delta_2)(U_2-U_1)+\lambda}{L_1} & x \in [-L_1, 0]\\
U_2-\frac{\lambda}{L_2-1} & x \in [1, L_2]
\end{array}
\right .
\end{equation}
If $\frac{L_2-1}{L_1+L_2-1}(L_1-\delta_1+\delta_2-1) \le L_1-\delta_1-(L_1-1)\delta_2-1$ the two conditions on $\lambda$ mentioned above can never be satisfied. With some algebraic manipulation we obtain that if $\delta_2 \le \frac{(L_1-1) - \delta_1}{L_1 + L_2 - 2}$ a $\lambda$ such that the solution~\eqref{eq:solum3} is obtained does not exist. 

If the conditions $\delta_2 \le \frac{L_1  - \delta_1 - 1}{L_1 + L_2 - 2}$, $\delta_2 \le L_2 - (L_1 + L_2 -2) \delta_1$, $\delta_2 < \frac{(L_1 - 2)\delta_1}{L_2-1}$ and $\delta_2 \ge L_2 -  (L_1 + L_2 -2)\delta_1$ or $\delta_2 \ge \frac{(L_1 - 2)\delta_1}{L_2-1}$ and $\delta_2 \ge \frac{L_1  - \delta_1 - 1}{L_1 + L_2 - 2}$ are all satisfied none of the solutions~\eqref{eq:solum1},~\eqref{eq:solum2} and~\eqref{eq:solum3} can be obtained. That is the case only if the conditions $\delta_2 = \frac{L_1  - \delta_1 - 1}{L_1 + L_2 - 2}$ or $\delta_2 = L_2 - (L_1 + L_2 -2) \delta_1$ are true.
\end{IEEEproof}

\subsection{Proof of Theorem~\eqref{the:grad} }
\label{proof:the:grad}
\begin{IEEEproof}
From the definition of $f_x[x]$ and $f[x]$ we have
\begin{equation}
f_x[x] = \left \{ 
\begin{array}{ll}
0 & x \in[-L_1+1,-3]\\
\delta_1(U_2-U_1) & x = -2\\
(1 - \delta_1 - \delta_2)(U_2-U_1) & x = -1\\
\delta_2(U_2 - U_1) & x = 0\\
0 & x \in[1,L_2 - 2]\\
\end{array}
\right .
\end{equation}
The problem~\eqref{eq:rof0d} is equivalent to solving 
\begin{align} \label{eq:rof0dc}
\bar{u}_x[x] = \arg\min_{u_x} \frac{1}{2}\sum_{x = -L_1+1}^{L_2-2} (u_x[x] - f_x[x])^2\quad\quad\quad\quad~\\
 + \lambda \sum_{x = -L_1+1}^{L_2-2} |u_x[x]|.\nonumber
\end{align}
where $\hat{u}_x[x] = \bar{u}_x[x]$ for $x \in [-L_1+1, L_2-2]$ and $\hat{u}_x[-L_1] = 0$, $\hat{u}_x[L_2 -1] = 0$.

The solution of problem~\eqref{eq:rof0dc} can be computed in closed form and it is equal to 

\begin{equation} \label{eq:soft}
\bar{u}_x[x] = \max(f_x[x] - \lambda \sign(f_x[x]),0).
\end{equation}
It is possible to obtain three different Delta dirac functions using~\eqref{eq:soft}. 

If  $\delta_1 > \max(\delta_2, \frac{1 - \delta_2}{2})$, applying~\eqref{eq:soft} with $\lambda \ge (U_2 - U_1)\max(\delta_1, 1 - \delta_1 - \delta_2)$ would lead to
\begin{equation}
u_x[x] = \left \{ 
\begin{array}{ll}
0 & x \neq -2\\
(\delta_1 - \max(\delta_2, 1 - \delta_1 - \delta_2)) (U_2 - U_1) & x = -2
\end{array}
\right .
\end{equation}

If  $1 > \max(2 \delta_1 +\delta_2,2 \delta_2 +\delta_1)$, applying~\eqref{eq:soft} with $\lambda \ge (U_2 - U_1)\max(\delta_1, \delta_2)$ would lead to
\begin{equation}
u_x[x] = \left \{ 
\begin{array}{ll}
0 & x \neq -1\\
(1 - \delta_1 - \delta_2 - \max(\delta_1,\delta_2))(U_2 - U_1) & x = -1
\end{array}
\right .
\end{equation}

If  $\delta_2 > \max(\delta_1, \frac{1 - \delta_1}{2})$, applying~\eqref{eq:soft} with $\lambda \ge (U_2 - U_1)\max(\delta_1, 1 - \delta_1 - \delta_2)$ would lead to
\begin{equation}
u_x[x] = \left \{ 
\begin{array}{ll}
0 & x \neq 0\\
(\delta_2 - \max(\delta_1, 1 - \delta_1 - \delta_2))(U_2 - U_1) & x = 0
\end{array}
\right .
\end{equation}

If none of the conditions $\delta_1 > \max(\delta_2, \frac{1 - \delta_2}{2})$, $1 > \max(2 \delta_1 +\delta_2,2 \delta_2 +\delta_1)$ and $\delta_2 > \max(\delta_1, \frac{1 - \delta_1}{2})$ is satisfied then it is not possible to obtain a Dirac delta function from $f_x[x]$ solving problem~\eqref{eq:rof0dc}. This corresponds to the conditions $\delta_2 = \delta_1 \ge 1/3$, $\delta_1 = (1 - \delta_2)/2 \ge 1/3$ or $\delta_2 = (1 - \delta_1)/2 \ge 1/3$.
\end{IEEEproof}

\subsection{Proof of Theorem~\eqref{the:am} }
\label{proof:the:am} 
\begin{IEEEproof}
In this proof we will show that the cost $||k \ast \hat{u} - f||^2_2$ is minimum for $k = \delta$, where the constraint $\sum_x k[x] = 1$ is enforced and $\hat{u}$ is obtained by solving the first step of the AM algorithm~\eqref{eq:am_u} for the given values of $\lambda$. To make calculations easier we write  $k$ as a 3-element blur kernel where $\hat{\delta}_1\doteq k[1]$, $\hat{\delta}_2\doteq k[-1]$ and $k[0] = 1-\hat{\delta}_1-\hat{\delta}_2$, $\hat{\delta_1} + \hat{\delta}_2 \le 1$ and $\hat{\delta}_1,\hat{\delta}_2 \ge 0$. Notice that in this form the constraint $\sum_x k[x] = 1$ is implicitly enforced.

For a $\lambda \in [\lambda_{min}^c,\lambda_{max}^c)$ from Proposition~\ref{proposition} we have that the minimizer of problem~\eqref{eq:am_u} is
\begin{equation}
\hat{u}[x] = \left \{ 
\begin{array}{ll}
\hat{U}_1 & x \in [-L_1, -1]\\
\hat{U}_2 & x \in [0, L_2].
\end{array}
\right .
\end{equation}
The cost $||k \ast \hat{u} - f||^2_2$, can be then split in $4$ regions
\begin{equation}\label{eq:cost1}
\begin{array}{l}
||k \ast \hat{u} - f||^2_2  =  \sum_x ((k \ast \hat{u})[x] - f[x])^2 = \\
+ (L_1 - 3) (\hat{U_1} - U_1)^2 \\
+ (\hat{U}_1 + \hat{\delta}_1 (\hat{U}_2 - \hat{U}_1) - ( U_1 +\delta_1(U_2 - U_1)))^2\\
+ ((\hat{U}_2 - \hat{\delta}_2 (\hat{U}_2 - \hat{U}_1) - (U_2 - \delta_2(U_2 - U_1)))^2\\
+ (L_2 - 2) (\hat{U}_2 - U_2)^2\\
\end{array}
\end{equation}

The first and forth terms do not depend on $k$, so only the other two terms contribute to the estimation of $k$. Notice that the inequalities $\hat{U}_1 \ge U_1 +\delta_1(U_2 - U_1) > U_1$, and $\hat{U}_2 \le U_2 - \delta_2(U_2 - U_1) < U_2$ hold. This means that $\hat{U}_1 - ( U_1 +\delta_1(U_2 - U_1))$ is positive, therefore the value of $\hat \delta_1 \ge 0$ that minimizes the second term is $\hat \delta_1 = 0$, and, because $\hat{U}_2 - ( U_2 -\delta_2(U_2 - U_1))$ is negative, the value of $\hat \delta_2 \ge 0$ that minimizes the third term is $\hat \delta_2 = 0$. This shows that the $k$ that minimizes the cost~\eqref{eq:cost1} is the Dirac delta where $k[1] = \hat{\delta}_1 = 0$, $k[-1] =\hat{\delta}_2 = 0$ and $k[0] = 1$.

For a$\lambda \in [\lambda_{min}^l,\lambda_{max}^l)$ from Proposition~\ref{proposition} we have that the minimizer of problem~\eqref{eq:am_u} is
\begin{equation}
\hat{u}[x] = \left \{ 
\begin{array}{ll}
\hat{U}_1 & x \in [-L_1, -2]\\
\hat{U}_2 & x \in [-1, L_2].
\end{array}
\right .
\end{equation}
The cost $||k \ast \hat{u} - f||^2_2$, can be then split in $5$ regions
\begin{equation}\label{eq:cost2}
\begin{array}{l}
||k \ast \hat{u} - f||^2_2  =  \sum_x ((k \ast \hat{u})[x] - f[x])^2 = \\
+ (L_1 - 2) (\hat{U_1} - U_1)^2 \\
+ (\hat{U}_1 + \hat{\delta}_1 (\hat{U}_2 - \hat{U}_1) - U_1)^2\\
+ ((\hat{U}_2 - \hat{\delta}_2 (\hat{U}_2 - \hat{U}_1) - (U_1 + \delta_1(U_2 - U_1)))^2\\
+ ((\hat{U}_2 - (U_2 - \delta_2(U_2 - U_1)))^2\\
+ (L_2 - 2) (\hat{U}_2 - U_2)^2\\
\end{array}
\end{equation}
The first, forth and fifth terms do not depend on $k$, so only the other two terms contribute to the estimation of $k$. Notice that the inequalities $\hat{U}_1  > U_1$ and $\hat{U}_2 \le U_1 + \delta_1(U_2 - U_1) < U_2 - \delta_2(U_2 - U_1) < U_2$ hold. This means that, because $\hat{U}_1 - U_1$ is positive, the value of $\hat \delta_1 \ge 0$ that minimizes the third term is $\hat \delta_1 = 0$, and that $\hat{U}_2  - \hat{U}_2 - U_1 + \delta_1(U_2 - U_1)$ is negative, therefore the value of $\hat \delta_2 \ge 0$ that minimizes the second term is $\hat \delta_2 = 0$. This shows that the $k$ that minimizes the cost~\eqref{eq:cost1} is a  Dirac delta where $k[1] = \hat{\delta}_1 = 0$, $k[-1] =\hat{\delta}_2 = 0$ and $k[0] = 1$.

For a $\lambda \in [\lambda_{min}^r,\lambda_{max}^r)$ from Proposition~\ref{proposition} we have that the minimizer of problem~\eqref{eq:am_u} is
\begin{equation}
\hat{u}[x] = \left \{ 
\begin{array}{ll}
\hat{U}_1 & x \in [-L_1, 0]\\
\hat{U}_2 & x \in [1, L_2].
\end{array}
\right .
\end{equation}
The cost $||k \ast \hat{u} - f||^2_2$, can be then split in $5$ regions
\begin{equation}\label{eq:cost3}
\begin{array}{l}
||k \ast \hat{u} - f||^2_2  =  \sum_x ((k \ast \hat{u})[x] - f[x])^2 = \\
+ (L_1 - 3) (\hat{U_1} - U_1)^2 \\
+ (\hat{U}_1  - (U_1 + \delta_1(U_2 - U_1)))^2\\
+ (\hat{U}_1 + \hat{\delta}_1 (\hat{U}_2 - \hat{U}_1) - (U_2 - \delta_2(U_2 - U_1)))^2\\
+ ((\hat{U}_2 - \hat{\delta}_2 (\hat{U}_2 - \hat{U}_1) - U_2)^2\\
+ (L_2 - 3) (\hat{U}_2 - U_2)^2\\
\end{array}
\end{equation}
The first, second and fifth terms do not depend on $k$, so only the other two terms contribute to the estimation of $k$. Notice that the inequalities $\hat{U}_1 \ge U_2 - \delta_2(U_2 - U_1) > U_1 + \delta_1(U_2 - U_1) > U_1$ and $\hat{U}_2 < U_2$ hold. This means that  $\hat{U}_1 - (U_2 - \delta_2(U_2 - U_1))$ is positive, therefore the value of $\hat \delta_1 \ge 0$ that minimizes the third term is $\hat \delta_1 = 0$, and that $\hat{U}_2 - U_2$ is negative, therefore the value of $\hat \delta_2 \ge 0$ that minimizes the second term is $\hat \delta_2 = 0$. This shows that the $k$ that minimizes the cost~\eqref{eq:cost3} is a Dirac delta where $k[1] = \hat{\delta}_1 = 0$, $k[-1] =\hat{\delta}_2 = 0$ and $k[0] = 1$.
\end{IEEEproof}

\subsection{Proof of Theorem~\eqref{the:pam}}
\label{proof:the:pam}

\begin{IEEEproof}
Notice that If $u^0$ is a zero-mean signal, because total variation denoising preserves the mean of the original signal and $\sum_x k^0[x] = 1$ we have that also $\hat{u}$ is a zero-mean signal. We can consider the different conditions on $\lambda$ separately. 

For a $\lambda \in [\lambda_{min}^l,\lambda_{max}^l)$ from Proposition~\ref{proposition} we have that the minimizer of problem~\eqref{eq:am_u} is
\begin{equation}
\hat{u}[x] = \left \{ 
\begin{array}{ll}
\hat{U}_1 & x \in [-L_1, -2]\\
\hat{U}_2 & x \in [-1, L_2]
\end{array}
\right .
\end{equation}

Since we can always express a zero-mean step as another scaled zero-mean step, we can write
\begin{equation}
\hat{u}[x] = a u^0[x+1]
\end{equation}
for some constant $a$. We then solve the second step of the PAM algorithm
\begin{equation} 
\hat{k} = \arg\min_k ||k \ast \hat{u} - f||_2^2
\end{equation}
where we can write 
\begin{equation} 
\begin{array}{l}
\|(k \ast \hat{u})[x] - f[x]\|_2^2  \nonumber\\ 
= \|a \sum_y k[y] u^0[x+1 - y] - \sum_y k[y] u^0[x - y]\|_2^2\nonumber\\
  = \|a \sum_y k[y-1] u^0[x - y] - \sum_y k^0[y] u^0[x - y]\|_2^2\nonumber\\
 = \|\sum_y(a k[y-1]- k^0[y]) u^0[x-y]\|_2^2
 \end{array}
\end{equation}
and have $\hat{k}[x-1] = k^0[x]/a$. Finally, by applying the last two steps of the PAM algorithm one obtains $\hat{k}[x-1] = k^0[x]$.

For $\lambda \in [\lambda_{min}^c,\lambda_{max}^c)$ from Proposition~\ref{proposition} we have that the minimizer of problem~\eqref{eq:am_u} is
\begin{equation}
\hat{u}[x] = \left \{ 
\begin{array}{ll}
\hat{U}_1 & x \in [-L_1, -1]\\
\hat{U}_2 & x \in [0, L_2]
\end{array}
\right .
\end{equation}

In this case we can write
\begin{equation}
\hat{u}[x] = a u^0[x]
\end{equation}
for some constant $a$. We then solve the second step of the PAM algorithm
\begin{equation} 
\hat{k} = \arg\min_k ||k \ast \hat{u} - f||_2^2
\end{equation}
where we can write 
\begin{equation} 
\begin{array}{l}
\|(k \ast \hat{u})[x] - f[x]\|_2^2  \nonumber\\ 
= \|a \sum_y k[y] u^0[x - y] - \sum_y k[y] u^0[x - y]\|_2^2\nonumber\\
  = \|a \sum_y k[y] u^0[x - y] - \sum_y k^0[y] u^0[x - y]\|_2^2\nonumber\\
 = \|\sum_y(a k[y]- k^0[y]) u^0[x-y]\|_2^2
 \end{array}
\end{equation}
and have $\hat{k}[x] = k^0[x]/a$. Finally, by applying the last two steps of the PAM algorithm one obtains $\hat{k}[x] = k^0[x]$.

Finally, for a $\lambda \in [\lambda_{min}^r,\lambda_{max}^r)$ from Proposition~\ref{proposition} we have that the minimizer of problem~\eqref{eq:am_u} is
\begin{equation}
\hat{u}[x] = \left \{ 
\begin{array}{ll}
\hat{U}_1 & x \in [-L_1, 0]\\
\hat{U}_2 & x \in [1, L_2]
\end{array}
\right .
\end{equation}

In this case we can write
\begin{equation}
\hat{u}[x] = a u^0[x-1]
\end{equation}
for some constant $a$. We then solve the second step of the PAM algorithm
\begin{equation} 
\hat{k} = \arg\min_k ||k \ast \hat{u} - f||_2^2
\end{equation}
where we can write 
\begin{equation} 
\begin{array}{l}
\|(k \ast \hat{u})[x] - f[x]\|_2^2  \nonumber\\ 
= \|a \sum_y k[y] u^0[x -1 - y] - \sum_y k[y] u^0[x - y]\|_2^2\nonumber\\
  = \|a \sum_y k[y+1] u^0[x - y] - \sum_y k^0[y] u^0[x - y]\|_2^2\nonumber\\
 = \|\sum_y(a k[y+1]- k^0[y]) u^0[x-y]\|_2^2
 \end{array}
\end{equation}
and have $\hat{k}[x+1] = k^0[x]/a$. Finally, by applying the last two steps of the PAM algorithm one obtains $\hat{k}[x+1] = k^0[x]$.  
\end{IEEEproof}

\ifCLASSOPTIONcaptionsoff
  \newpage
\fi

\bibliographystyle{IEEEtran}
\bibliography{papers}

% Generated by IEEEtran.bst, version: 1.13 (2008/09/30)
\begin{thebibliography}{10}
\providecommand{\url}[1]{#1}
\csname url@samestyle\endcsname
\providecommand{\newblock}{\relax}
\providecommand{\bibinfo}[2]{#2}
\providecommand{\BIBentrySTDinterwordspacing}{\spaceskip=0pt\relax}
\providecommand{\BIBentryALTinterwordstretchfactor}{4}
\providecommand{\BIBentryALTinterwordspacing}{\spaceskip=\fontdimen2\font plus
\BIBentryALTinterwordstretchfactor\fontdimen3\font minus
  \fontdimen4\font\relax}
\providecommand{\BIBforeignlanguage}[2]{{%
\expandafter\ifx\csname l@#1\endcsname\relax
\typeout{** WARNING: IEEEtran.bst: No hyphenation pattern has been}%
\typeout{** loaded for the language `#1'. Using the pattern for}%
\typeout{** the default language instead.}%
\else
\language=\csname l@#1\endcsname
\fi
#2}}
\providecommand{\BIBdecl}{\relax}
\BIBdecl

\bibitem{Fergus2006}
R.~Fergus, B.~Singh, A.~Hertzmann, S.~T. Roweis, and W.~T. Freeman, ``Removing
  camera shake from a single photograph,'' \emph{ACM Trans. Graph.}, vol.~25,
  no.~3, pp. 787--794, 2006.

\bibitem{KundurH96}
D.~Kundur and D.~Hatzinakos, ``Blind image deconvolution,'' \emph{Signal
  Processing Magazine, IEEE}, vol.~13, no.~3, pp. 43--64, 1996.

\bibitem{Shan2008}
Q.~Shan, J.~Jia, and A.~Agarwala, ``High-quality motion deblurring from a
  single image,'' in \emph{ACM Trans. Graph.}, 2008, pp. 1--10.

\bibitem{Cho2009}
S.~Cho and S.~Lee, ``Fast motion deblurring,'' \emph{ACM Trans. Graph.},
  vol.~28, no.~5, pp. 1--8, 2009.

\bibitem{Xu2010}
L.~Xu and J.~Jia, ``Two-phase kernel estimation for robust motion deblurring,''
  in \emph{Proceedings of the 11th European Conference on Computer Vision: Part
  I}, ser. ECCV'10.\hskip 1em plus 0.5em minus 0.4em\relax Berlin, Heidelberg:
  Springer-Verlag, 2010, pp. 157--170.

\bibitem{Levin2011}
A.~Levin, Y.~Weiss, F.~Durand, and W.~Freeman, ``Efficient marginal likelihood
  optimization in blind deconvolution,'' in \emph{Computer Vision and Pattern
  Recognition (CVPR), 2011 IEEE Conference on}, June 2011, pp. 2657--2664.

\bibitem{You1996}
Y.-L. You and M.~Kaveh, ``A regularization approach to joint blur
  identification and image restoration,'' \emph{Image Processing, IEEE
  Transactions on}, vol.~5, no.~3, pp. 416--428, 1996.

\bibitem{Chan1998}
T.~Chan and C.-K. Wong, ``Total variation blind deconvolution,'' \emph{IEEE
  Transactions on Image Processing}, vol.~7, no.~3, pp. 370--375, 1998.

\bibitem{You1996Anisotropic}
Y.-L. You and M.~Kaveh, ``Anisotropic blind image restoration,'' in \emph{Image
  Processing, 1996. Proceedings., International Conference on}, vol.~1, 1996,
  pp. 461--464 vol.2.

\bibitem{HuangM99}
J.~Huang and D.~Mumford, ``Statistics of natural images and models,'' pp. 541
  -- 547, 1999.

\bibitem{Levin2011Understanding}
A.~Levin, Y.~Weiss, F.~Durand, and W.~T. Freeman, ``Understanding blind
  deconvolution algorithms,'' \emph{IEEE Trans. Pattern Anal. Mach. Intell.},
  vol.~33, no.~12, pp. 2354--2367, 2011.

\bibitem{Tikhonov1977}
A.~Tikhonov and V.~Arsenin, \emph{Solutions of ill-posed problems}.\hskip 1em
  plus 0.5em minus 0.4em\relax Vh Winston, 1977.

\bibitem{Rameshan2012}
R.~M. Rameshan, S.~Chaudhuri, and R.~Velmurugan, ``Joint map estimation for
  blind deconvolution: When does it work?'' in \emph{Proceedings of the Eighth
  Indian Conference on Computer Vision, Graphics and Image Processing}, ser.
  ICVGIP '12.\hskip 1em plus 0.5em minus 0.4em\relax New York, NY, USA: ACM,
  2012, pp. 50:1--50:7.

\bibitem{Chan2000}
T.~C. Chan and C.~Wong, ``Convergence of the alternating minimization algorithm
  for blind deconvolution,'' \emph{Linear Algebra and its Applications}, vol.
  316, no. 1–3, pp. 259 -- 285, 2000.

\bibitem{He2005}
L.~He, A.~Marquina, and S.~J. Osher, ``Blind deconvolution using tv
  regularization and bregman iteration,'' \emph{Int. J. Imaging Systems and
  Technology}, vol.~15, no.~1, pp. 74--83, 2005.

\bibitem{Rudin1992}
L.~Rudin, S.~Osher, and E.~Fatemi, ``Nonlinear total variation based noise
  removal algorithms,'' \emph{Physics D}, vol.~60, pp. 259--268, 1992.

\bibitem{Wang2010}
C.~Wang, L.~Sun, Z.~Chen, J.~Zhang, and S.~Yang, ``Multi-scale blind motion
  deblurring using local minimum,'' \emph{Inverse Problems}, vol.~26, no.~1,
  2010.

\bibitem{Wipf2013}
D.~P. Wipf and H.~Zhang, ``Analysis of bayesian blind deconvolution,'' in
  \emph{EMMCVPR}, 2013, pp. 40--53.

\bibitem{Xu2013}
L.~Xu, S.~Zheng, and J.~Jia, ``Unnatural l0 sparse representation for natural
  image deblurring,'' in \emph{Computer Vision and Pattern Recognition (CVPR),
  2013 IEEE Conference on}, 2013, pp. 1107--1114.

\bibitem{Krishnan2011}
D.~Krishnan, T.~Tay, and R.~Fergus, ``Blind deconvolution using a normalized
  sparsity measure,'' in \emph{Computer Vision and Pattern Recognition (CVPR),
  2011 IEEE Conference on}, 2011, pp. 233--240.

\bibitem{Miskin2000}
J.~Miskin and D.~MacKay, ``\BIBforeignlanguage{English}{Ensemble learning for
  blind image separation and deconvolution},'' in
  \emph{\BIBforeignlanguage{English}{Advances in Independent Component
  Analysis}}, 2000, pp. 123--141.

\bibitem{Babacan2012}
S.~D. Babacan, R.~Molina, M.~N. Do, and A.~K. Katsaggelos, ``Bayesian blind
  deconvolution with general sparse image priors,'' in \emph{ECCV}.\hskip 1em
  plus 0.5em minus 0.4em\relax Firenze, Italy: Springer, October 2012.

\bibitem{Liu2008}
R.~Liu and J.~Jia, ``Reducing boundary artifacts in image deconvolution,'' in
  \emph{Image Processing, 2008. ICIP 2008. 15th IEEE International Conference
  on}, Oct 2008, pp. 505--508.

\bibitem{Perrone2014}
D.~Perrone and P.~Favaro, ``Total variation blind deconvolution: The devil is
  in the details,'' in \emph{The IEEE Conference on Computer Vision and Pattern
  Recognition (CVPR)}, June 2014.

\bibitem{Davies2001}
P.~L. Davies and A.~Kovac, ``\BIBforeignlanguage{English}{Local extremes, runs,
  strings and multiresolution},'' \emph{\BIBforeignlanguage{English}{The Annals
  of Statistics}}, vol.~29, no.~1, pp. pp. 1--48, 2001.

\bibitem{Bogachev07}
V.~I. Bogachev, \emph{Measure Theory I}.\hskip 1em plus 0.5em minus 0.4em\relax
  Berlin, Heidelberg, New York: Springer-Verlag, 2007.

\bibitem{Condat2013}
L.~Condat, ``A direct algorithm for 1-d total variation denoising,''
  \emph{Signal Processing Letters, IEEE}, vol.~20, no.~11, pp. 1054--1057,
  2013.

\bibitem{Strong2003}
D.~Strong and T.~Chan, ``Edge-preserving and scale-dependent properties of
  total variation regularization,'' \emph{Inverse Problems}, vol.~19, no.~6, p.
  S165, 2003.

\bibitem{Saito2011}
T.~Saito, Y.~Takagaki, and T.~Komatsu, ``Three kinds of color total-variation
  semi-norms and its application to color-image denoising,'' in \emph{ICIP},
  2011, pp. 1457--1460.

\bibitem{Goldlucke2010}
B.~Goldl{\"u}cke and D.~Cremers, ``An approach to vectorial total variation
  based on geometric measure theory,'' in \emph{Computer Vision and Pattern
  Recognition (CVPR), 2010 IEEE Conference on}, 2010, pp. 327--333.

\bibitem{Blomgren1996}
P.~Blomgren and T.~Chan, \emph{Color TV: Total Variation Methods for
  Restoration of Vector Valued Images}, ser. CAM report.\hskip 1em plus 0.5em
  minus 0.4em\relax Department of Mathematics, University of California, Los
  Angeles, 1996.

\bibitem{Levin2007}
A.~Levin, R.~Fergus, F.~Durand, and W.~Freeman, ``Image and depth from a
  conventional camera with a coded aperture,'' \emph{ACM Trans. Graph.},
  vol.~26, 2007.

\bibitem{Kohler2012}
R.~K{\"o}hler, M.~Hirsch, B.~J. Mohler, B.~Sch{\"o}lkopf, and S.~Harmeling,
  ``Recording and playback of camera shake: Benchmarking blind deconvolution
  with a real-world database,'' in \emph{ECCV (7)}, 2012, pp. 27--40.

\bibitem{Goldstein2012}
A.~Goldstein and R.~Fattal, ``Blur-kernel estimation from spectral
  irregularities,'' in \emph{Proceedings of the 12th European Conference on
  Computer Vision - Volume Part V}, ser. ECCV'12, 2012, pp. 622--635.

\bibitem{Zhong2013}
L.~Zhong, S.~Cho, D.~Metaxas, S.~Paris, and J.~Wang, ``Handling noise in single
  image deblurring using directional filters,'' in \emph{Computer Vision and
  Pattern Recognition (CVPR), 2013 IEEE Conference on}, June 2013, pp.
  612--619.

\bibitem{Whyte2011}
O.~Whyte, J.~Sivic, and A.~Zisserman, ``Deblurring shaken and partially
  saturated images,'' in \emph{Proceedings of the IEEE Workshop on Color and
  Photometry in Computer Vision, with ICCV 2011}, 2011.

\end{thebibliography}
\end{document}